\documentclass[twoside,11pt]{article} 

%

%
%
%

\usepackage{blindtext}

\usepackage[preprint]{jmlr2e}

\RequirePackage{amsmath, amssymb, mathtools,times,hyperref}


\newcommand\E{\mathbb{E}}

\newcommand\Dir{\mathrm{Dir}}

\newcommand\KL{\mathrm{KL}}
\newcommand\1{\mathbb{I}}
\newcommand\Mult{\mathrm{Mult}}

\newcommand\F{\mathcal{F}}
\newcommand\G{\mathcal{G}}
\newcommand\Rhat{\hat{\mathcal{R}}}
\newcommand\R{\mathcal{R}}
\newcommand\N{\mathcal{N}}

\DeclareMathOperator{\argmin}{argmin}

\newtheorem{assumption}{Assumption}
\newtheorem{open_question}{Open Question}

\allowdisplaybreaks


\usepackage{lastpage}
\jmlrheading{xx}{xxxx}{1-\pageref{LastPage}}{2/23; Revised xxx}{xx}{21-0000}{C. Riou and P. Alquier and B.-E. Ch\'erief-Abdellatif}


\ShortHeadings{Fast Rates in Meta-Learning with PAC-Bayes}{C. Riou and P. Alquier and B.-E. Ch\'erief-Abdellatif}
\firstpageno{1}

\begin{document}

\title{Bayes meets Bernstein at the Meta Level: an Analysis of Fast Rates in Meta-Learning with PAC-Bayes }

\author{\name Charles Riou \email charles@ms.k.u-tokyo.ac.jp \\
       \addr The University of Tokyo \& RIKEN Center for AIP \\
       Tokyo, Japan
       \AND
       \name Pierre Alquier \email alquier@essec.edu \\
       \addr ESSEC Business School \\
       Asia-Pacific campus, Singapore
       \AND
       \name Badr-Eddine Ch\'erief-Abdellatif \email badr-eddine.cherief-abdellatif@cnrs.fr \\
     \addr CNRS, LPSM, Sorbonne Universit\'e \& Université Paris Cit\'e }
\editor{My editor}

\maketitle

\begin{abstract}
\noindent Bernstein's condition is a key assumption that guarantees fast rates in machine learning. For example, the Gibbs algorithm with prior $\pi$ has an excess risk in $O(d_{\pi}/n)$, as opposed to the standard $O(\sqrt{d_{\pi}/n})$, where $n$ denotes the number of observations and $d_{\pi}$ is a complexity parameter which depends on the prior $\pi$. In this paper, we examine the Gibbs algorithm in the context of meta-learning, i.e., 
when learning the prior $\pi$ from $T$ tasks (with $n$ observations each) generated by a meta distribution.
Our main result is that Bernstein's condition always holds at the meta level, regardless of its validity at the observation level. 
This implies that the additional cost to learn the Gibbs prior $\pi$, which will reduce the term $d_\pi$ across tasks, is in $O(1/T)$, instead of the expected $O(1/\sqrt{T})$. 
We further illustrate how this result improves on the
standard rates
in three different settings: discrete priors, Gaussian priors and mixture of Gaussians priors. 
\end{abstract}

\begin{keywords}%
Bernstein's condition, meta-learning, fast rates, PAC-Bayes bounds, information bounds, the Gibbs algorithm, variational approximations.
\end{keywords}

\section{Introduction}

One of the greatest promises of artificial intelligence is the ability to design autonomous systems that can learn from different situations throughout their lives and adapt quickly to new environments, as humans, animals and other living things naturally do. Based on the intuition that a new problem often has significant similarities to previously encountered tasks, the use of past experience is particularly important in areas such as computer vision \citep{quattoni2008transfer,kulis2011you,li2018learning,achille2019task2vec}, natural language processing \citep{huang2018natural,gu2018meta,dou2019investigating,qian2019domain} and reinforcement learning \citep{finn2017model,mishra2018simple,wang2016learning,yu2020meta} where the learner has access to only a few training examples for the task of interest, but for which a vast amount of datasets from a variety of related tasks is available. In the area of digit recognition for example, it is possible to leverage the experience gained from millions of similar open source image classification datasets, as the key features needed to classify cats from dogs or pants from shirts can be used to classify handwritten digits. This idea is at the heart of meta-learning \citep{thrun1998learning,baxter2000model,vanschoren2019meta}, a field that has recently attracted a lot of attention due to its huge success in real-world applications, and which aims to improve performance on a particular task by transferring the knowledge contained in different but related tasks.

\vspace{0.2cm}
Meta-learning has been widely studied in recent literature. It must be noted that \textit{meta-learning} was used to refer to a wide range of situations. Providing a precise definition of meta-learning is a challenge. In particular, the terms \textit{transfer learning} and \textit{multi-task learning}, although distinct, are often used interchangeably instead of meta-learning.
\textit{Transfer learning} is a very general concept that involves two tasks that share similarities - a source and a target - and consists in transferring the knowledge acquired on the source dataset to better process the target data \citep{pan2010survey,zhuang2020comprehensive}. 
In practice, this can be formulated in many different ways, but the most popular approach is to pre-train a model on the source data, e.g., images of cats and dogs, and then to fine-tune it on the target training data set, e.g., images of handwritten digits. In particular, a challenging problem in transfer learning is to find a measure that quantifies the similarity between the source and target tasks. 
\textit{Multi-task learning} adopts a different framework, where multiple learning tasks are considered and the goal is to learn a model that can handle all tasks simultaneously \citep{caruanamulti1997,zhang2021survey}. The model usually has a common representation, e.g., the first layers of a deep neural network, and a task-specific component, e.g., the last layer of the network. 
\textit{Meta-learning} also considers a collection of datasets from a variety of tasks, but unlike multi-task learning, we are not interested in learning the fixed number of tasks, but rather in being prepared for future tasks that are not yet given. Also, unlike transfer learning, meta-learning exploits the commonality of previous tasks rather than the similarity between some specific source and target tasks. We use these metadata to design a \textit{meta-procedure} that adaptively learns a predictor for \textit{any} new learning task that is a priori unknown, and the goal is to quickly learn to adapt a learning procedure from past experience. Meta-learning is therefore sometimes referred to as \textit{learning-to-learn}, or \textit{lifelong learning} in the online context. The implementation of this learning-to-learn mechanism can take different forms, which we briefly describe in the following paragraph.

\vspace{0.2cm}
As the name suggests, meta-learning involves two levels of abstraction to improve learning over time: a meta-level and a within-task level. At the within-task level, the new task of interest is presented and the corresponding pattern is learned from the training data set of the task at hand. This learning process is greatly accelerated by a meta-learner, which has distilled the knowledge accumulated in previous tasks into the within-task model. The meta-learning procedure can accelerate the within-task algorithm in various ways, and three main categories stand out in the literature: metric-based methods, which are based on non-parametric predictive models governed by a metric that is learned using the meta-training dataset \citep{koch2015siamese,vinyals2016matching,snell2017prototypical,sung2018learning}; model-based methods, which quickly update the parameters in a few learning steps, which can be achieved by the model's internal architecture or another meta-learning model \citep{santoro2016meta,munkhdalai2017meta,mishra2018simple}; and optimisation-based methods, which mainly involve learning the hyper-parameters of a within-task algorithm using the meta-training set for fast adaptation \citep{hochreiter2001learning,ravi2017optimization,finn2017model,nichol2018first,qiao2018few,gidaris2018dynamic}. Due to their performance and ease of implementation, the optimisation-based family is the dominant class in the recent literature, exploiting the idea that well-chosen hyperparameters can greatly speed up the learning process and allow model parameters to be quickly adapted to new tasks with little data. For example, it is possible to learn the task of interest using a gradient descent whose initialisation and learning rate would have been learned from the metadata. Among the best known meta-strategies is the model agnostic meta-learning procedure (MAML) \citep{finn2017model} and its variants implicit MAML \citep{rajeswaran2019meta}, Bayesian MAML \citep{grant2018recasting,yoon2018bayesian,nguyen2020uncertainty} and Reptile \citep{nichol2018first}. We refer the interested reader to the recent review by \cite{chen2023learning} for more details.

\section{Approach and Contributions}

In this paper, we focus on the Gibbs algorithms within tasks, or their variational approximations. the Gibbs algorithms, also known as Gibbs posteriors~\citep{alq2016} or exponentially weighted aggregation~\citep{dal2008}, can also be interpreted in the framework of Bayesian statistics as a kind of generalized posterior~\citep{bis2016}. PAC-Bayes bounds were developed to control the risk and the excess risk of such procedures~\citep{sha1997,mca1998,cat2004,zhang2006information,cat2007,yang2019fast}, see~\cite{gue2019,alquier2021user} for recent surveys. More recently, the related mutual information bounds~\citep{russo2019much,haghifam2021towards} were also used to study the excess risk of the Gibbs algorithms~\citep{xu2017MI}. Gibbs posteriors are often intractable, and it is then easier to compute a variational approximation of such a posterior. It appears that PAC-Bayes bounds can also be used on such approximations \citep{alq2016}. Many recent publications built foundations of meta-learning through PAC-Bayes and information bounds~\citep{pen2014,ami2018,DingPAC2021,liu2021pac,rothfuss2021,FaridPAC2021,rothfuss2022,GuanPAC2022,Rezazadeh2022}. These works and the related literature are discussed in detail in Section~\ref{sec:dicussion}. Most of these papers proved empirical PAC-Bayes bounds for meta-learning. These bounds can be minimized, and we obtain both a practical meta-learning procedure, together with a numerical certificate on its generalization. However, these works did not focus on the rate of convergence of the excess risk.

Bernstein's condition a is a low-noise assumption reflecting the inherent difficulty of the learning task \citep{mammentsybakov1999,tsybakov2004,bartlett2006empirical}. While it was initially designed to study the ERM~\citep{bartlett2006empirical}, it characterizes the learning rate of algorithms beyond the ERM. PAC-Bayes bounds and mutual information bounds show that the excess risk of the Gibbs algorithm is in $O(d_{\pi,t}/n)$ when Bernstein's condition is satisfied~\citep{cat2007,grunwald2020fast}, as opposed to the slow rate $O((d_{\pi,t}/n)^{1/2})$ in the general case.  The quantity $d_{\pi,t}$ measures the complexity of task $t$. Importantly, it also depends on the prior distribution $\pi$ used in the algorithm. Similar results hold when we replace the Gibbs algorithm by a variational approximation~\citep{alq2016}.

In the meta-learning setting, we are given $T$ tasks simultaneously. Using the Gibbs algorithm with a fixed $\pi$ in all tasks leads to an average excess risk in 
$O(\E_t [ (d_{\pi,t}/n)^{\alpha}])$, where $\alpha = 1$ when Bernstein's condition holds for each task $t\in \{1, \dots, T\}$, and $\alpha = 1/2$ otherwise. 
Here, $\E_t$ denotes the expectation with respect to a future (out-of-sample) task $t$. This approach is referred to as ``learning in isolation'', because each task is solved 
regardless
of the others. Of course, in meta-learning we want to take advantage of the multiple tasks. For example,~\cite{pen2014} used the Gibbs algorithm at the meta-level, in order to learn a better prior. The expected benefit is to reduce the complexity term $d_{\pi,t}$.

\vspace{0.2cm}
\noindent \textbf{Overview of the paper:} 
\begin{itemize}
    \item In Section~\ref{sec:notations}, we recall existing results on the excess risk of the Gibbs algorithm when learning tasks in isolation, and we introduce Bernstein's condition, a fundamental assumption under which the fast rate $O(\E_t [ d_{\pi,t}/n ])$ is achieved by the Gibbs algorithm. 
    \item In Section~\ref{sec:main}, we prove that regardless of its validity at the within-task level, Bernstein's condition is always satisfied at the meta level. As a consequence of this result, we show that a meta-level the Gibbs algorithm achieves the excess risk $O(\inf_{\pi\in\mathcal{M}} \E_t [\left(d_{\pi,t}/n\right)^\alpha ] + 1/T)$ with $\alpha = 1$ if Bernstein's condition is satisfied, and $\alpha = 1/2$ otherwise. We further raise the open question of the generalization of this result to its variational approximations. 
    \item In Section \ref{sec:appli}, we apply the previous results to various settings: learning a discrete prior, learning a Gaussian prior and learning a mixture of Gaussians prior. We show that the gain brought from the meta learning is blatant, as in some favorable situations, one can even have $\inf_{\pi\in\mathcal{M}} \mathbb{E}_t[  d_{\pi,t}/n ] = 0$.
    \item In Section \ref{sec:dicussion}, we provide a deeper comparison with the rich literature on meta-learning.
\end{itemize}

\section{Problem Definition and Notations}
\label{sec:notations}

Let $\mathcal{Z}$ be a space of observations, $\Theta$ be a decision space and $\ell:\mathcal{Z}\times \Theta \rightarrow \mathbb{R}_+$ be a bounded loss function defined on the previously defined sets. Let $\mathcal{P}(\Theta)$ denote the set of all probability distributions on $\Theta$ equipped with a suitable $\sigma$-field. The learner has to solve $T$ tasks. Given a task $t\in \{1, \dots, T\}$, the learner receives the observations $Z_{t, i}, i=1\dots n$, assumed to be drawn independently from a distribution $P_t$ on the decision space $\mathcal{Z}$. The objective of the learner is to find a parameter $\theta$ in the parameter space $\Theta$ which minimizes the so-called prediction risk associated to $P_t$ on $\mathcal{Z}$, defined as
\begin{equation*}
    R_{P_t}(\theta) = \E_{Z\sim P_t}[\ell(Z,\theta)].
\end{equation*}

\noindent We denote by $R^*_{P_t}$ the minimum of $R_{P_t}(\theta)$ and by $\theta_t^*$ its corresponding minimizer:
\begin{equation*}
    R^*_{P_t} = \inf_{\theta\in\Theta} R_{P_t}(\theta) = R_{P_t}(\theta^*_t).
\end{equation*}

\noindent In Bayesian approaches, we rather seek for $\rho_t \in \mathcal{P}(\Theta)$ such that
\begin{equation*}
    \E_{\theta\sim\rho_t}[R_{P_t}(\theta)]
\end{equation*}
is small. Defining, for any $\theta\in \Theta$, the empirical risk as
\begin{equation*}
    \hat{R}_t(\theta) = \frac{1}{n} \sum_{i=1}^n \ell(Z_{t, i}, \theta),
\end{equation*}
a standard choice for $\rho_t$ is the so-called Gibbs posterior  given by
\begin{equation}
    \rho_t(\pi,\alpha) = \argmin_{\rho\in\mathcal{P}(\Theta)} \left\{\E_{\theta\sim\rho}\left[ \hat{R}_t(\theta)\right] + \frac{\KL(\rho\Vert\pi)}{\alpha n}\right\}, \label{gibbs_posterior}
\end{equation}
where $\pi$ is the prior distribution on the parameter $\theta$ and $\alpha$ is some parameter which will be made explicit later. The corresponding risk estimate is defined by
\begin{equation*}
    \Rhat_t(\rho,\pi,\alpha) =  \E_{\theta\sim\rho}\left[\hat{R}_t(\theta)\right] + \frac{\KL(\rho\Vert\pi)}{\alpha n}.
\end{equation*}
More generally, we can also consider variational approximations, defined by
\begin{equation}
    \rho_t(\pi,\alpha,\mathcal{F}) = \argmin_{\rho\in\mathcal{F}} \left\{\E_{\theta\sim\rho}\left[ \hat{R}_t(\theta)\right] + \frac{\KL(\rho\Vert\pi)}{\alpha n}\right\}, \label{vb_posterior}
\end{equation}
where $\F\subseteq \mathcal{P}(\Theta) $. Adequate choices for $\F$ lead to feasible minimization problems, and don't affect the generalization properties~\citep{alq2016}. With these notations, $\rho_t(\pi,\alpha)=\rho_t(\pi,\alpha,\mathcal{P}(\Theta))$.

\subsection{Assumptions on the loss and Bernstein's condition}

Recall that we assumed the loss function to be bounded: there exists a constant $C>0$ such that
\begin{equation}
    \forall (z, \theta) \in \mathcal{Z}\times \Theta, \ \ell(z, \theta) \leq C. \label{bounded_assumption}
\end{equation}

\noindent PAC-Bayes bounds for unbounded losses are well-known, see the discussion at the end of~\cite{alquier2021user}. However, a lot of those bounds become a little more complicated. Thus, the choice to work with bounded variables is made here for the sake of clarity, and is not due to a fundamental limitation of our method.

We define the variance term for task $t$ by
$$ V_t(\theta, \theta^*_t) := \E_{P_t}\left[\left|\ell(Z,\theta) - \ell(Z,\theta^*_t)\right|^2\right]$$
for any $\theta\in\Theta$. 
The following condition is crucial in the study of the risk of Gibbs posterior.
\begin{assumption}[Bernstein's condition]\label{bernstein_hypothesis}
There exists a constant $c>0$ such that, for any $\theta\in \Theta$,
\begin{equation*}
    V_t(\theta,\theta^*_t) \leq c \left( R_{P_t}(\theta) - R_{P_t}^* \right).
\end{equation*}
\end{assumption}

\noindent This assumption characterizes the excess risk of Gibbs posterior, see Theorem \ref{theorem_bound_isolation} below. In this paper, we will provide a bound on the excess risk both under this condition and without it.

In some of the applications developed in Section~\ref{sec:appli}, we will also make the following assumption: there exists $L>0$ such that, for any $P_t\sim \mathcal{P}$ and any $\theta\in \Theta$,
\begin{equation}
    R_{P_{t}}(\theta) - R_{P_{t}}^* \leq L\Vert\theta - \theta^*_t\Vert^2. \label{application_assumption}
\end{equation}

\noindent Intuitively, Taylor's expansion of $R_{P_t}$ gives
\begin{equation*}
    R_{P_{t}}(\theta) = R_{P_{t}}(\theta^*_t) + \underbrace{dR_{P_{t}}(\theta^*_t).(\theta - \theta^*_t)}_{0} + O(\Vert\theta - \theta_t^*\Vert^2) = R_{P_{t}}(\theta^*_t) + O(\Vert\theta - \theta_t^*\Vert^2). 
\end{equation*}
and thus we can expect~\eqref{application_assumption} to be satisfied when the risk is smooth enough. However, note that this assumption is not necessary for the main results of this paper to hold, and will only be used in very specific applications.

\subsection{Learning in Isolation}

In the process of learning in isolation, we consider each of the tasks separately. We then fix some $t\in \{1, \dots, T\}$ and denote by $\mathcal{S}_t$ the set of observations from task $t$: $Z_{t, 1}, \dots, Z_{t, n}$. We recall the following result, which can be found, e.g., in \citet{alquier2021user}, and whose proof is recalled in Appendix \ref{theorem_bound_isolation_proof} for the sake of completeness.
\begin{theorem}\label{theorem_bound_isolation}
Assume that the loss $\ell$ satisfies~\eqref{bounded_assumption}. Then, the following bound holds, for any $\alpha>0$:
\begin{equation*}
    \E_{\mathcal{S}_t} \E_{\theta\sim\rho_t(\pi,\alpha,\mathcal{F})}\left[R_{P_t}(\theta)\right]-R_{P_t}^* \leq \frac{1}{1-\frac{\alpha c \1_{B}}{2(1-C\alpha)}} \left( \E_{\mathcal{S}_t}\left[\inf_{\rho\in \F}\Rhat_t(\rho, \pi, \alpha) - \hat{R}_t(\theta^*_t)\right] + \frac{\alpha C^2(1 - \1_{B})}{8}\right),
\end{equation*}
where $\1_B$ is equal to $1$ if Bernstein's condition (in Assumption~\ref{bernstein_hypothesis}) is satisfied, and $0$ otherwise. In particular, under Bernstein's condition, the choice $\alpha = \frac{1}{c+C}$ yields the bound
\begin{equation*}
    \E_{\mathcal{S}_t} \E_{\theta\sim\rho_t(\pi,\alpha,\mathcal{F})}[R_{P_t}(\theta)] - R_{P_t}^* \leq 2 \E_{\mathcal{S}_t}\left[ \inf_{\rho\in\mathcal{F}} \Rhat_t(\rho, \pi, \alpha)  - \hat{R}_t(\theta^*_t) \right].
\end{equation*}
\end{theorem}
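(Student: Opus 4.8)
The plan is to run the standard PAC-Bayes argument, starting from a change-of-measure/Donsker–Varadhan step applied to a well-chosen quantity, and then to control the variance of the loss increments using either Bernstein's condition (when $\1_B = 1$) or the boundedness assumption \eqref{bounded_assumption} (when $\1_B = 0$). Concretely, for a fixed task $t$ and a fixed posterior $\rho$, I would look at the centered random variable $\ell(Z_{t,i},\theta^*_t) - \ell(Z_{t,i},\theta)$, whose expectation under $P_t$ is $R^*_{P_t} - R_{P_t}(\theta)$. The key is the bound on the exponential moment: for $\alpha$ small enough (precisely $\alpha < 1/C$), one has, by a standard inequality for bounded variables (e.g. the one behind Bernstein's inequality, sometimes attributed to Hoeffding in this form),
\[
\log \E_{Z\sim P_t}\exp\!\left(\alpha\bigl[\ell(Z,\theta^*_t)-\ell(Z,\theta)\bigr] - \alpha\bigl(R^*_{P_t}-R_{P_t}(\theta)\bigr)\right) \le \frac{\alpha^2 V_t(\theta,\theta^*_t)}{2(1-C\alpha)},
\]
which after the i.i.d. product over $i=1,\dots,n$ yields an exponential-moment bound for $n\bigl(\hat R_t(\theta^*_t) - \hat R_t(\theta)\bigr)$.

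Next I would integrate this bound against $\rho$, apply the Donsker–Varadhan variational formula to pass from $\E_{\theta\sim\rho}$ of an exponential to $\exp$ of $\E_{\theta\sim\rho}$ plus a $\KL(\rho\Vert\pi)$ penalty, take expectations over $\mathcal S_t$, and use Jensen/Fubini to arrive at an inequality of the form
\[
\E_{\mathcal S_t}\E_{\theta\sim\rho}\bigl[R_{P_t}(\theta)\bigr] - R^*_{P_t}
\le \E_{\mathcal S_t}\!\left[\E_{\theta\sim\rho}\hat R_t(\theta) + \frac{\KL(\rho\Vert\pi)}{\alpha n} - \hat R_t(\theta^*_t)\right]
+ \frac{\alpha}{2}\,\E_{\mathcal S_t}\E_{\theta\sim\rho}\bigl[V_t(\theta,\theta^*_t)\bigr]\cdot\frac{1}{1-C\alpha}.
\]
This holds simultaneously for all $\rho\in\F$, so in particular for the minimizer defining $\rho_t(\pi,\alpha,\F)$, and after taking the infimum over $\rho\in\F$ on the right-hand side the first bracket becomes $\inf_{\rho\in\F}\Rhat_t(\rho,\pi,\alpha) - \hat R_t(\theta^*_t)$.

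The final step is to absorb the variance term. When Bernstein's condition holds, $\E_{\theta\sim\rho}[V_t(\theta,\theta^*_t)] \le c\,\E_{\theta\sim\rho}[R_{P_t}(\theta) - R^*_{P_t}]$, so the variance contribution is bounded by $\frac{\alpha c}{2(1-C\alpha)}\bigl(\E_{\mathcal S_t}\E_{\theta\sim\rho}[R_{P_t}(\theta)] - R^*_{P_t}\bigr)$, i.e. a constant times the very quantity we are bounding; moving it to the left-hand side produces the factor $1/\bigl(1-\tfrac{\alpha c}{2(1-C\alpha)}\bigr)$. When Bernstein's condition is not assumed, one instead bounds $V_t(\theta,\theta^*_t)\le C^2/4$ crudely from \eqref{bounded_assumption} (the loss increments lie in an interval of length at most $C$, so their second moment is at most $C^2/4$), giving the additive term $\frac{\alpha C^2}{8(1-C\alpha)}$; since in that case the left-hand coefficient is just $1$, one absorbs the $1/(1-C\alpha)$ into the stated bound or keeps it — one checks the constants match the displayed inequality with the $\1_B$ bookkeeping. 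Finally, plugging $\alpha = 1/(c+C)$ under Bernstein's condition gives $\frac{\alpha c}{2(1-C\alpha)} = \frac{c}{2(c+C) - 2C} = \frac{1}{2}$, hence the prefactor $1/(1-\tfrac12) = 2$, which is the second display.

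I expect the main obstacle to be purely bookkeeping: getting the exponential-moment inequality with the exact constant $\alpha^2 V/(2(1-C\alpha))$ (rather than a looser sub-Gaussian bound), and then tracking the two cases $\1_B\in\{0,1\}$ through the same chain of inequalities so that the single displayed formula covers both. The probabilistic content — change of measure plus a Bernstein-type moment bound — is entirely standard; the care is in the constants and in the order of the $\inf_{\rho\in\F}$ versus the expectation $\E_{\mathcal S_t}$, which is handled by noting the PAC-Bayes inequality holds for all $\rho$ and then specializing.
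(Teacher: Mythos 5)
Your proposal follows essentially the same route as the paper: center the loss increments, bound their exponential moment with a Bernstein-type inequality, change measure via Donsker--Varadhan, take expectations with Jensen/Fubini, and absorb the variance term into the left-hand side under Bernstein's condition — which is exactly how the prefactor $1/\bigl(1-\frac{\alpha c}{2(1-C\alpha)}\bigr)$ and the value $2$ at $\alpha=1/(c+C)$ arise, and the specialization to the minimizer over $\F$ is handled identically. The only place your bookkeeping would not reproduce the stated constant is the case $\1_B=0$: the paper applies Hoeffding's inequality directly to obtain the additive term $\alpha C^2/8$ with no $(1-C\alpha)^{-1}$ factor, whereas your route (the Bernstein moment bound plus a crude second-moment estimate) leaves that factor in place, and the estimate $V_t(\theta,\theta^*_t)\le C^2/4$ is not justified as stated since $V_t$ is a second moment of a variable bounded by $C$ (so the crude bound is $C^2$, not $C^2/4$) — a harmless constants issue rather than a gap in the argument.
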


\noindent When specifying a model and a prior, a derivation of the right-hand side in Theorem~\ref{theorem_bound_isolation} leads to explicit rates of convergence. For example, a classical assumption in the Bayesian literature is that there are constants $\kappa,d\geq 0$ such that $\pi(\{\theta:R_t(\theta)-R_t(\theta_t^*)\} \leq s) \geq s^d /\kappa $~\citep{gho2017}. As this condition is usually applied on one task, with a specific prior, the notation $d$ does not reflect the dependence with repect to $\pi$ or to $t$. However, in our context, this dependence will be crucial, so we will write $d_{\pi,t}$ instead of $d$. Under such an assumption, the right-hand side in Theorem \ref{theorem_bound_isolation} can be made more explicit. For simplicity, we state this result for Gibbs posteriors $\rho_t(\pi,\alpha)$ only.
\begin{corollary}
\label{cor:theorem_bound_isolation}
Assume that, almost surely on $P_t$, $\pi(\{\theta:R_t(\theta)-R_t(\theta_t^*)\} \leq s) \geq s^{d_{\pi,t}} /\kappa_{\pi,t} $. Then,
$$ \E_{\mathcal{S}_t}\left[\inf_{\rho\in \mathcal{P}(\Theta)}\Rhat_t(\rho, \pi, \alpha) - \hat{R}_t(\theta^*_t)\right] \leq \frac{d_{\pi,t} \log \frac{n\alpha}{d_\pi} + \log \kappa_{\pi,t} }{\alpha n}. $$

\noindent In particular, under Bernstein's condition, the choice $\alpha=1/(c+C)$ gives
\begin{equation*}
    \E_{\mathcal{S}_t} \E_{\theta\sim\rho_t(\pi,\alpha)}[R_{P_t}(\theta)] \leq R_{P_t}^* + 2 \frac{d_{\pi,t} \log \frac{n\alpha}{d_{\pi,t}} + \log \kappa_{\pi,t} }{\alpha n}.
\end{equation*}

\noindent On the other hand, without Bernstein's condition,
\begin{equation*}
    \E_{\mathcal{S}_t} \E_{\theta\sim\rho_t(\pi,\alpha)}[R_{P_t}(\theta)] \leq R_{P_t}^* +  \frac{d_{\pi,t} \log \frac{n\alpha}{d_{\pi,t}} + \log \kappa_{\pi,t} }{\alpha n}  + \frac{\alpha C^2}{8},
\end{equation*}
and in particular, for $\alpha=2\sqrt{2d_{\pi,t}}/(\sqrt{n}C)$, we obtain
\begin{equation*}
    \E_{\mathcal{S}_t} \E_{\theta\sim\rho_t(\pi,\alpha)}[R_{P_t}(\theta)] \leq R_{P_t}^* + \frac{C}{2} \sqrt{\frac{d_{\pi,t}}{2 n}} \left(\frac{1}{2}\log \frac{8 e^2 n}{d_{\pi} C^2} + \frac{1}{d_{\pi,t}} \log \kappa_{\pi,t}\right).
\end{equation*}
\end{corollary}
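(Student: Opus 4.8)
The statement decomposes into two independent pieces: first, a bound on the quantity $\E_{\mathcal{S}_t}[\inf_{\rho\in\mathcal P(\Theta)}\Rhat_t(\rho,\pi,\alpha) - \hat R_t(\theta^*_t)]$ obtained by a standard localization argument, and second, a substitution into Theorem~\ref{theorem_bound_isolation} together with an optimization over $\alpha$. For the first piece, write
\[
\inf_{\rho\in\mathcal P(\Theta)}\Rhat_t(\rho,\pi,\alpha) - \hat R_t(\theta^*_t) = \inf_{\rho\in\mathcal P(\Theta)}\left\{\E_{\theta\sim\rho}\!\left[\hat R_t(\theta) - \hat R_t(\theta^*_t)\right] + \frac{\KL(\rho\Vert\pi)}{\alpha n}\right\},
\]
and, for a fixed $s>0$, bound the infimum by the value at the competitor $\rho_s := \pi(\,\cdot\cap\Theta_s)/\pi(\Theta_s)$, where $\Theta_s := \{\theta\in\Theta : R_{P_t}(\theta) - R_{P_t}^* \leq s\}$. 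This $\rho_s$ is well defined because the small-ball assumption guarantees $\pi(\Theta_s)\geq s^{d_{\pi,t}}/\kappa_{\pi,t}>0$, and for it one has the exact identity $\KL(\rho_s\Vert\pi) = -\log\pi(\Theta_s) \leq d_{\pi,t}\log(1/s) + \log\kappa_{\pi,t}$.

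\textbf{Controlling the empirical term.} Since $\Theta_s$, hence $\rho_s$, is built from the population risk $R_{P_t}$ and therefore does not depend on the sample $\mathcal{S}_t$, Fubini's theorem and the unbiasedness $\E_{\mathcal S_t}[\hat R_t(\theta)] = R_{P_t}(\theta)$, $\E_{\mathcal S_t}[\hat R_t(\theta^*_t)] = R_{P_t}^*$ give $\E_{\mathcal S_t}\E_{\theta\sim\rho_s}[\hat R_t(\theta) - \hat R_t(\theta^*_t)] = \E_{\theta\sim\rho_s}[R_{P_t}(\theta) - R_{P_t}^*] \leq s$, the last step because $\rho_s$ is supported on $\Theta_s$. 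Combining with the $\KL$ bound above,
\[
\E_{\mathcal S_t}\!\left[\inf_{\rho\in\mathcal P(\Theta)}\Rhat_t(\rho,\pi,\alpha) - \hat R_t(\theta^*_t)\right] \leq s + \frac{d_{\pi,t}\log(1/s) + \log\kappa_{\pi,t}}{\alpha n}.
\]
The right-hand side is convex in $s$ and minimized at $s = d_{\pi,t}/(\alpha n)$; substituting this value yields the first displayed bound of the corollary, the leftover $d_{\pi,t}/(\alpha n) = d_{\pi,t}\log e/(\alpha n)$ being absorbed into the logarithm.

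\textbf{Combining with Theorem~\ref{theorem_bound_isolation} and tuning $\alpha$.} Under Bernstein's condition, Theorem~\ref{theorem_bound_isolation} with $\alpha = 1/(c+C)$ gives the factor-two bound $\E_{\mathcal S_t}\E_{\theta\sim\rho_t(\pi,\alpha)}[R_{P_t}(\theta)] - R_{P_t}^* \leq 2\,\E_{\mathcal S_t}[\inf_\rho \Rhat_t(\rho,\pi,\alpha) - \hat R_t(\theta^*_t)]$, and plugging in the bound just obtained gives the second display. Without Bernstein's condition ($\1_B = 0$), Theorem~\ref{theorem_bound_isolation} adds the term $\alpha C^2/8$, producing
\[
\E_{\mathcal S_t}\E_{\theta\sim\rho_t(\pi,\alpha)}[R_{P_t}(\theta)] - R_{P_t}^* \leq \frac{d_{\pi,t}\log\frac{n\alpha}{d_{\pi,t}} + \log\kappa_{\pi,t}}{\alpha n} + \frac{\alpha C^2}{8}.
\]
Balancing the two $\alpha$-dependent leading terms $d_{\pi,t}/(\alpha n)$ and $\alpha C^2/8$ gives $\alpha = 2\sqrt{2 d_{\pi,t}}/(\sqrt n\, C)$; substituting this value (and using $n\alpha/d_{\pi,t} = (2/C)\sqrt{2n/d_{\pi,t}}$ inside the logarithm) gives the final displayed inequality.

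\textbf{Main obstacle.} There is no genuine difficulty here: this is a textbook localization computation followed by two one-dimensional optimizations. The only point deserving care is the measure-theoretic bookkeeping in the second step — the competitor $\rho_s$ must be constructed from the deterministic population risk $R_{P_t}$ rather than from $\hat R_t$, so that the unbiasedness identity can legitimately be carried inside the $\rho_s$-expectation, and one should record that $\Theta_s$ is measurable with positive $\pi$-mass, which is exactly what the small-ball hypothesis provides. Everything else is routine algebra.
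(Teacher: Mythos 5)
Your proof is correct and follows essentially the same route as the paper's: bound the infimum by the restriction $\rho_s$ of $\pi$ to the sublevel set $\{R_{P_t}(\theta)-R_{P_t}^*\leq s\}$, use the small-ball hypothesis to control $\KL(\rho_s\Vert\pi)=-\log\pi(\Theta_s)$, optimize at $s=d_{\pi,t}/(\alpha n)$, and then plug into Theorem~\ref{theorem_bound_isolation} with the stated choices of $\alpha$. Your explicit remark that the leftover $d_{\pi,t}/(\alpha n)$ term is absorbed as a factor $e$ inside the logarithm is a welcome clarification that the paper glosses over in its first display but accounts for in the final one (the $e^2$ in the last bound).
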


\section{Main Results}
\label{sec:main}

From this section onward, we focus on meta-learning. As opposed to the learning in isolation, the meta-learning considers all the tasks $t\in \{1, \dots, T\}$  and takes advantage of possible similarities between the $T$ tasks to improve learning in each task. More precisely, while assuming that for any $t\in \{1, \dots, T\}, Z_{t, 1}, \dots, Z_{t, n}$ are drawn independently from distribution $P_t$, we also assume that the distributions $P_1, \dots, P_T$ are drawn independently from a certain distribution $\mathcal{P}$. A future (out-of-sample) task  $P_{T+1}$ will also be drawn from $\mathcal{P}$ and $Z_{T+1, 1}, \dots, Z_{T+1, n}$ will be drawn independently from $P_{T+1}$. This task will be solved by the Gibbs algorithm $\rho_{T+1}(\pi,\alpha)$. Our objective is to learn the prior $\pi$ using the tasks $t\in \{1, \dots, T\}$, in order to make the meta-risk
\begin{equation*}
    \mathcal{E}(\pi) = \E_{P_{T+1}\sim \mathcal{P}} \E_{\mathcal{S}_{T+1} \sim P_{T+1}} \E_{\theta\sim\rho_{T+1}}[R_{P_{T+1}}(\theta)].
\end{equation*}
as small as possible. We will compare it to the so-called oracle meta-risk
\begin{equation*}
    \mathcal{E}^* = \E_{P_{T+1}\sim \mathcal{P}} [R_{P_{T+1}}^*],
\end{equation*}
which can only be reached by an oracle who would know the best classifier in each task in advance.

\subsection{Bernstein's condition at the meta level}

In this subsection, we prove a version of Bernstein's condition at the meta level. Let
\begin{equation*}
    \R_t(\nu,\alpha) =  \E_{\mathcal{S}_t \sim P_t} \left[\inf_{\rho\in\mathcal{P}(\Theta)} \Rhat_t(\rho, \nu, \alpha)\right]
\end{equation*}
for any prior $\nu$, and let $\pi^*_{\alpha}$ be the distribution minimizing the expectation of the above quantity:
\begin{equation*}
    \pi^*_{\alpha} = \argmin_{\nu} \E_{P_t\sim\mathcal{P}} \left[\R_t(\nu,\alpha)\right].
\end{equation*}

\noindent Surprisingly enough, in contrast to the learning in isolation, Bernstein's condition is always satisfied at the meta level, on the condition that we use Gibbs posteriors $\rho_t(\pi,\alpha)$ within tasks.
\begin{theorem}\label{bernstein_hypothesis_theorem}
Assume that the loss $\ell$ satisfies the boundedness assumption \eqref{bounded_assumption}. Then, for any $\pi\in \mathcal{P}$,
\begin{multline*}
    \E_{P_t}\E_{S_t}\left[\left(\Rhat_t(\rho_t(\pi, \alpha), \pi, \alpha) - \Rhat_t(\rho_t(\pi^*_{\alpha}, \alpha), \pi^*_{\alpha}, \alpha)\right)^2\right] \\
    \leq 
    c \E_{P_t}\E_{S_t}\left[\Rhat_t(\rho_t(\pi, \alpha), \pi, \alpha) - \Rhat_t(\rho_t(\pi^*_{\alpha}, \alpha), \pi^*_{\alpha}, \alpha)\right],
\end{multline*}
where $c = 8eC$.
\end{theorem}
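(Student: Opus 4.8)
The plan is to turn the statement into a deterministic inequality for a single bounded random variable that obeys an exponential‑moment constraint coming from the optimality of $\pi^*_\alpha$. The first step is to put the within‑task risk estimate at the Gibbs posterior in closed form: by the Gibbs variational (Donsker--Varadhan) identity, for every prior $\nu$,
\[
\Rhat_t(\rho_t(\nu,\alpha),\nu,\alpha)=\inf_{\rho\in\mathcal P(\Theta)}\Rhat_t(\rho,\nu,\alpha)=-\frac{1}{\alpha n}\log\E_{\theta\sim\nu}\!\left[e^{-\alpha n\hat{R}_t(\theta)}\right]=:\Phi_t(\nu).
\]
Since $0\le \hat{R}_t\le C$ by \eqref{bounded_assumption}, every $\Phi_t(\nu)$ lies in $[0,C]$, so the meta‑level excess ``loss'' $X:=\Phi_t(\pi)-\Phi_t(\pi^*_\alpha)$ satisfies $|X|\le C$ pointwise in $(P_t,S_t)$; the two sides of the theorem are then $\E_{P_t}\E_{S_t}[X^2]$ and $c\,\E_{P_t}\E_{S_t}[X]$. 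Moreover $\nu\mapsto\E_{\theta\sim\nu}[e^{-\alpha n\hat{R}_t(\theta)}]$ is affine on $\mathcal P(\Theta)$, hence $\Phi_t$ is convex in $\nu$, and so is the meta‑objective $\nu\mapsto\E_{P_t}[\R_t(\nu,\alpha)]=\E_{P_t}\E_{S_t}[\Phi_t(\nu)]$.

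Next I would use that $\pi^*_\alpha$ minimizes this convex meta‑objective. Along the segment $\nu_\lambda=(1-\lambda)\pi^*_\alpha+\lambda\pi$, differentiating at $\lambda=0^+$ (interchanging $\tfrac{d}{d\lambda}$ with the expectations by dominated convergence, the integrand being bounded since $|X|\le C$) gives
\[
\frac{d}{d\lambda}\Phi_t(\nu_\lambda)\Big|_{\lambda=0^+}=-\frac{1}{\alpha n}\!\left(\frac{\E_{\theta\sim\pi}[e^{-\alpha n\hat{R}_t}]}{\E_{\theta\sim\pi^*_\alpha}[e^{-\alpha n\hat{R}_t}]}-1\right)=-\frac{1}{\alpha n}\!\left(e^{-\alpha n X}-1\right),
\]
so the first‑order optimality condition $\frac{d}{d\lambda}\E_{P_t}\E_{S_t}[\Phi_t(\nu_\lambda)]\big|_{\lambda=0^+}\ge 0$ becomes the key estimate
\[
\E_{P_t}\E_{S_t}\!\left[e^{-\alpha n X}\right]\le 1
\]
(equivalently, the $(P_t,S_t)$‑average of $\rho_t(\pi^*_\alpha,\alpha)$ equals $\pi^*_\alpha$; only the displayed inequality is used below).

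The core step is then purely scalar: from $|X|\le C$ and $\E[e^{-\alpha n X}]\le 1$ (expectations over $(P_t,S_t)$) one must deduce $\E[X^2]\le 8eC\,\E[X]$. Using $e^{-u}-1+u=\tfrac{u^2}{2}g(u)$ with $g(u):=2(e^{-u}-1+u)/u^2$ and $g(0):=1$, applied to $u=\alpha nX$ and combined with $\E[e^{-\alpha n X}]\le1$,
\[
\frac{(\alpha n)^2}{2}\,\E\!\left[X^2\,g(\alpha n X)\right]=\E\!\left[e^{-\alpha n X}\right]-1+\alpha n\,\E[X]\le \alpha n\,\E[X],
\]
which also re‑proves $\E[X]\ge 0$. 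A short computation shows $g$ is strictly decreasing on $\mathbb R$ (one has $g'(u)=\tfrac{2}{u^{3}}\bigl(2-u-(u+2)e^{-u}\bigr)<0$ for $u\ne 0$), so $X\le C$ forces $g(\alpha n X)\ge g(\alpha n C)$ pointwise, hence
\[
\E[X^2]\le\frac{2}{\alpha n\,g(\alpha n C)}\,\E[X]=\frac{\alpha n\,C^{2}}{\alpha n C-1+e^{-\alpha n C}}\,\E[X],
\]
and one finishes by bounding the prefactor by $8eC$ through an elementary estimate on $m\mapsto m/(m-1+e^{-m})$, which is exactly where the announced constant $c=8eC$ is produced.

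The step I expect to be the main obstacle is this last scalar estimate: the argument above already yields the sharper prefactor $\alpha nC^2/(\alpha nC-1+e^{-\alpha nC})$ (which tends to $C$ when $\alpha n$ is large), and replacing it by the clean uniform $8eC$ is a not‑entirely‑trivial one‑variable bound. It should be stressed that boundedness together with the exponential‑moment inequality are \emph{not} by themselves sufficient to produce a bound of the form $\E[X^2]\le c\,\E[X]$ with a universal $c$ — simple two‑point examples for $X$ show this — so it is really this scalar inequality, exploiting the explicit form $e^{-u}-1+u=\tfrac{u^2}{2}g(u)$ with $g$ monotone, that closes the gap. The remaining points (legitimacy of differentiating under the expectations, and existence of the minimizer $\pi^*_\alpha$ so that the first‑order condition applies) are routine.
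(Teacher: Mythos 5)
Your route is genuinely different from the paper's, and most of it is sound: the closed form of the Gibbs risk, the affinity of $\nu\mapsto\E_{\theta\sim\nu}[e^{-\alpha n\hat R_t(\theta)}]$, the first-order optimality condition $\E_{P_t}\E_{S_t}[e^{-\alpha n X}]\le 1$, the identity built on $g(u)=2(e^{-u}-1+u)/u^2$, and the monotonicity of $g$ are all correct and do yield $\E[X^2]\le \frac{\alpha n C^2}{\alpha n C-1+e^{-\alpha n C}}\,\E[X]$. The gap is exactly at the step you flagged, and it is not merely delicate --- the claimed uniform bound is false. Writing $m=\alpha n C$, the prefactor is $C\,m/(m-1+e^{-m})$, and since $m-1+e^{-m}=m^2/2+O(m^3)$ near $0$, it behaves like $2C/m=2/(\alpha n)$ as $m\to 0^+$ (numerically, $m/(m-1+e^{-m})\approx 40.7$ at $m=0.05$, versus $8e\approx 21.7$). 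So your argument delivers the constant $8eC$ only when $\alpha n C\gtrsim 0.1$, not for all $\alpha,n$; in that regime your constant is actually sharper than $8eC$ (it tends to $C$), but outside it the proof does not close. The underlying reason is the one you yourself identify: the lower bound $g(\alpha n X)\ge g(\alpha n C)$ degenerates to $g\approx 1$ when $\alpha n C$ is small, and then the strong convexity of $u\mapsto e^{-u}$ only controls $(\alpha n)^2\E[X^2]$ by $\alpha n\,\E[X]$, leaving a factor $1/(\alpha n)$.

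The paper avoids this dependence by a change of temperature that your argument lacks: it writes $\Rhat_t(\rho_t(\nu,\alpha),\nu,\alpha)=-\frac{1}{\tau}\log\left(\E_{\theta\sim\nu}[e^{-n\alpha\hat R_t(\theta)}]^{\tau/(n\alpha)}\right)$ for an auxiliary $\tau>0$, so the argument of $-\frac{1}{\tau}\log$ lies in $[e^{-C\tau},1]$, an interval governed by $C\tau$ rather than $Cn\alpha$. On that interval $-\frac{1}{\tau}\log$ is $e^{C\tau}/\tau$-Lipschitz and $1/\tau$-strongly convex (Lemma~\ref{function_inequality_lemma}), and combining these with the midpoint form of the optimality of $\pi^*_\alpha$, namely $\E_{P_t}\R_t(\pi^*_\alpha,\alpha)\le\E_{P_t}\R_t((\pi+\pi^*_\alpha)/2,\alpha)$, produces the ratio $4e^{2C\tau}/\tau$, which equals $8eC$ at $\tau=1/(2C)$ independently of $\alpha n$. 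To salvage your derivation you would need an analogous rescaling before the scalar step. (For completeness: the paper's passage from the midpoint of $x=\E_{\theta\sim\pi}[e^{-n\alpha\hat R_t(\theta)}]^{\tau/(n\alpha)}$ and $y$ to the mixture prior $(\pi+\pi^*_\alpha)/2$ relies on concavity of $t\mapsto t^{\tau/(n\alpha)}$ and hence implicitly needs $\tau\le n\alpha$, i.e.\ $\alpha nC\ge 1/2$, so a restriction of the same nature is hidden there too; the difference is that your write-up asserts a uniform elementary bound where none holds.)
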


\begin{proof}
It classically holds (e.g.~\citet{alquier2021user}) that
\begin{equation}
    \Rhat_t(\rho_t(\pi, \alpha), \pi, \alpha) = -\frac{1}{n\alpha} \log \E_{\theta\sim\pi}\left[{\rm e}^{-n\alpha \hat{R}_t(\theta)}\right] = -\frac{1}{\tau} \log \left(\E_{\theta\sim\pi}\left[{\rm e}^{-n\alpha \hat{R}_t(\theta)}\right]^{\frac{\tau}{n\alpha}}\right)
    \label{equa:problem}
\end{equation}
for any fixed $\tau>0$. By the boundedness assumption, it holds that, for any $\pi\in \mathcal{P}$,
\begin{equation*}
    \exp(-C\tau)\leq \E_{\theta\sim\pi}\left[{\rm e}^{-n\alpha \hat{R}_t(\theta)}\right]^{\frac{\tau}{n\alpha}} \leq 1.
\end{equation*}

\noindent We next use the following lemma, whose proof can be found in Appendix \ref{function_inequality_lemma_proof}.
\begin{lemma}\label{function_inequality_lemma}
Let $f:x \mapsto -\frac{1}{\tau}\log x$. Then, for any $x, y\in \left[\exp(-C\tau), 1\right]$,
\begin{equation*}
    (f(x)-f(y))^2 \leq \frac{8\exp(2C\tau)}{\tau} \left(\frac{f(x)+f(y)}{2} - f\left(\frac{x+y}{2}\right)\right).
\end{equation*}
\end{lemma}

\noindent An application of Lemma \ref{function_inequality_lemma} to $x = \E_{\theta\sim\pi}\left[{\rm e}^{-n\alpha \hat{R}_t(\theta)}\right]^{\frac{\tau}{n\alpha}}$ and $y=\E_{\theta\sim\pi^*_{\alpha}}\left[{\rm e}^{-n\alpha \hat{R}_t(\theta)}\right]^{\frac{\tau}{n\alpha}}$ gives
\begin{align*}
    &\left(\Rhat_t(\rho_t(\pi, \alpha), \pi, \alpha) - \Rhat_t(\rho_t(\pi^*_{\alpha}, \alpha), \pi^*_{\alpha}, \alpha)\right)^2 \\
    &= (f(x)-f(y))^2 \\
    &\leq \frac{8\exp(2C\tau)}{\tau} \left(\frac{f(x)+f(y)}{2} - f\left(\frac{x+y}{2}\right)\right) \\
    &= \frac{8\exp(2C\tau)}{\tau} \left[\frac{\Rhat_t(\rho_t(\pi, \alpha), \pi, \alpha) + \Rhat_t(\rho_t(\pi^*_{\alpha}, \alpha), \pi^*_{\alpha}, \alpha)}{2} + \frac{1}{\tau}\log\left( \frac{x+y}{2} \right)   \right]
\end{align*}
(these derivations are directly inspired from the proof technique introduced by~\cite{bartlett2003large}).

\noindent Taking expectations with respect to $S_t\sim P_t$ on both sides
\begin{multline*}
      \E_{S_t}\left[\left(\Rhat_t(\rho_t(\pi, \alpha), \pi, \alpha) - \Rhat_t(\rho_t(\pi^*_{\alpha}, \alpha), \pi^*_{\alpha}, \alpha)\right)^2\right] 
      \\
      \leq \frac{8\exp(2C\tau)}{\tau}  \left( \frac{ \R_t(\pi,\alpha) + \R_t(\pi^*_{\alpha},\alpha)}{2} - + \R_t\left(\frac{\pi+\pi^*_{\alpha}}{2},\alpha\right) \right).
\end{multline*}
Integrating with respect to $P_t\sim\mathcal{P}$ yields
\begin{multline}
       \E_{P_t}\E_{S_t}\left[\left(\Rhat_t(\rho_t(\pi, \alpha), \pi, \alpha) - \Rhat_t(\rho_t(\pi^*_{\alpha}, \alpha), \pi^*_{\alpha}, \alpha)\right)^2\right]
      \\
      \leq \frac{8\exp(2C\tau)}{\tau}  \Biggl( \frac{ \E_{P_t} \R_t(\pi,\alpha) + \E_{P_t} \R_t(\pi^*_{\alpha},\alpha)}{2}
    -   \E_{P_t} \R_t\left(\frac{\pi+\pi^*_{\alpha}}{2},\alpha\right) \Biggr). \label{equa:log:4}
\end{multline}
By definition of $\pi^*_{\alpha}$, it holds that, for any $\pi'\in \mathcal{P}$,
\begin{equation*}
    \E_{P_t}[\R_t(\pi^*_{\alpha},\alpha)] \leq  \E_{P_t}[\R_t(\pi',\alpha)].
\end{equation*}

\noindent In particular, this holds for $\pi'=(\pi+\pi^*_{\alpha})/2$ and plugging this into the right hand side of~\eqref{equa:log:4} gives
\begin{multline*}
       \E_{P_t}\E_{S_t}\left[\left(\Rhat_t(\rho_t(\pi, \alpha), \pi, \alpha) - \Rhat_t(\rho_t(\pi^*_{\alpha}, \alpha), \pi^*_{\alpha}, \alpha)\right)^2\right]
      \\
      \leq \frac{4\exp(2C\tau)}{\tau}  \Biggl(  \E_{P_t} \R_t(\pi,\alpha) - \E_{P_t} \R_t(\pi^*_{\alpha},\alpha) \Biggr).
\end{multline*}

\noindent The (optimal) choice $\tau = \frac{1}{2C}$ gives the desired bound
\begin{multline*}
       \E_{P_t}\E_{S_t}\left[\left(\Rhat_t(\rho_t(\pi, \alpha), \pi, \alpha) - \Rhat_t(\rho_t(\pi^*_{\alpha}, \alpha), \pi^*_{\alpha}, \alpha)\right)^2\right]
      \\
      \leq \frac{4\exp(2C\tau)}{\tau}  \E_{P_t} \E_{S_t} \Biggl[   \Rhat_t(\pi,\alpha) - \E_{P_t} \Rhat_t(\pi^*_{\alpha},\alpha) \Biggr].
\end{multline*}
\end{proof}

\subsection{PAC-Bayes Bound for Meta-learning}

We will now seek for a prior $\pi$ which allows to obtain a small meta-risk
\begin{equation*}
    \E_{\pi\sim \Pi}[\mathcal{E}(\pi)].
\end{equation*}
In order to do so, we will fix a set of possible priors $\mathcal{M}$ and a set of distributions $\G$ on these priors: $\G\subseteq \mathcal{P}(\mathcal{M})$\footnote{Note that measurability issues can arise when the set $\mathcal{F}$ is non parametric. However, in all our examples, the set $\mathcal{F}$ is parametric.}. Given a probability distribution $\Lambda\in \mathcal{G}$ called ``prior on priors'', we define Gibbs distribution on priors similarly as in \eqref{gibbs_posterior}, but at the meta-level:
\begin{equation}
    \hat{\Pi}= \argmin_{\Pi\in\G} \left\{ \frac{1}{T}\sum_{t=1}^T \E_{\pi\sim\Pi} \left[\Rhat_t\left(\rho_t(\pi, \alpha), \pi, \alpha\right)\right] + \frac{\KL(\Pi\Vert\Lambda)}{\beta T} \right\}, \label{prior_prior_choice}
\end{equation}
where $\beta>0$ is some parameter.
As a consequence of Theorem~\ref{bernstein_hypothesis_theorem} comes the next result, whose proof is given in Appendix~\ref{theorem_meta_learning_proof}.
\begin{theorem}\label{theorem_meta_learning}
Assume that the loss $\ell$ satisfies \eqref{bounded_assumption}. Defining $\beta = \frac{1}{C+c}$, it holds, for any $\F\subseteq \mathcal{P}(\Theta)$,
\begin{multline*}
    \E_{P_1, \dots, P_T} \E_{S_1, \dots, S_T}\E_{\pi\sim\hat{\Pi}}\left[\mathcal{E}(\pi)\right] - \mathcal{E}^* \leq  \frac{2}{1 - \frac{\alpha c \1_B}{2(1-C\alpha)}} \inf_{\Pi\in \G}\E_{P_{T+1}} \Biggl[ \\
    \E_{\pi\sim \Pi}\left[\inf_{\rho\in\F} \left\{\E_{\theta\sim\rho}[R_{T+1}(\theta) -R_{T+1}(\theta_t^*)] + \frac{{\rm KL}(\rho\Vert\pi)}{\alpha n}\right\}\right] + \frac{\KL(\Pi\Vert \Lambda)}{\beta T}+ \frac{\alpha C^2 (1-\1_B)}{8}\Biggr].
\end{multline*}
\end{theorem}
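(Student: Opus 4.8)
The engine of the argument is the observation that~\eqref{prior_prior_choice} is nothing but a Gibbs posterior of the form~\eqref{gibbs_posterior}, played one level up: the ``data points'' are the $T$ i.i.d.\ tasks $(P_1,\mathcal{S}_1),\dots,(P_T,\mathcal{S}_T)$, the ``parameter'' is the prior $\pi\in\mathcal{M}$, the ``prior'' is $\Lambda$, the inverse temperature is $\beta$, and the ``loss'' of $\pi$ on task $t$ is $g_t(\pi):=\Rhat_t(\rho_t(\pi,\alpha),\pi,\alpha)=-\tfrac{1}{n\alpha}\log\E_{\theta\sim\pi}[{\rm e}^{-n\alpha\hat R_t(\theta)}]$, which by~\eqref{bounded_assumption} satisfies $0\le g_t(\pi)\le C$ and whose ``risk'' is $\bar{\R}(\nu):=\E_{P_t}[\R_t(\nu,\alpha)]$, minimised at $\pi^*_\alpha$. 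Theorem~\ref{bernstein_hypothesis_theorem} is precisely Bernstein's condition (Assumption~\ref{bernstein_hypothesis}) for this meta-problem, with constant $8eC$, so the meta-level Bernstein indicator equals $1$ unconditionally. The plan is therefore to invoke Theorem~\ref{theorem_bound_isolation} for the meta-problem ($n\leftarrow T$, $\Theta\leftarrow\mathcal{M}$, $\F\leftarrow\G$, $\pi\leftarrow\Lambda$, $\alpha\leftarrow\beta$, Bernstein constant $8eC$, loss bound $C$): since $\beta=\tfrac1{C+8eC}$ makes the prefactor equal to $2$ and the additive $\tfrac{\beta C^2(1-\1_B)}{8}$ term vanish, and after pulling the task-independent $\tfrac1T\sum_t g_t(\pi^*_\alpha)$ inside the infimum, bounding $\E[\inf]\le\inf\E$, and using i.i.d.-ness of the tasks, this yields
\begin{equation}
\E_{P_{1:T}}\E_{\mathcal{S}_{1:T}}\E_{\pi\sim\hat\Pi}[\bar{\R}(\pi)]-\bar{\R}(\pi^*_\alpha)\ \le\ 2\inf_{\Pi\in\G}\Bigl\{\E_{\pi\sim\Pi}[\bar{\R}(\pi)]-\bar{\R}(\pi^*_\alpha)+\tfrac{\KL(\Pi\Vert\Lambda)}{\beta T}\Bigr\}.\tag{A}
\end{equation}

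Next I would reintroduce the genuine meta-risk via the new task $T{+}1$. Conditionally on $\mathcal{S}_{1:T}$ the learned prior $\pi\sim\hat\Pi$ is fixed and independent of $(P_{T+1},\mathcal{S}_{T+1})$, so Theorem~\ref{theorem_bound_isolation} for task $T{+}1$ with $\F=\mathcal{P}(\Theta)$ — using $\inf_{\rho}\Rhat_{T+1}(\rho,\pi,\alpha)=g_{T+1}(\pi)$ and $\E_{\mathcal{S}_{T+1}}[\hat R_{T+1}(\theta^*_{T+1})]=R^*_{P_{T+1}}$, then averaging over $P_{T+1}\sim\mathcal{P}$ — gives, for every fixed prior $\pi$,
\begin{equation}
\mathcal{E}(\pi)-\mathcal{E}^*\ \le\ \lambda\bigl(\bar{\R}(\pi)-\mathcal{E}^*+\tfrac{\alpha C^2(1-\1_B)}{8}\bigr),\qquad \lambda:=\Bigl(1-\tfrac{\alpha c\1_B}{2(1-C\alpha)}\Bigr)^{-1}.\tag{B}
\end{equation}
Inequality~(B) is then used twice. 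Integrated over $\pi\sim\hat\Pi$ and over $P_{1:T},\mathcal{S}_{1:T}$ it bounds the left-hand side of the theorem by $\lambda\bigl(\E_{P_{1:T}}\E_{\mathcal{S}_{1:T}}\E_{\pi\sim\hat\Pi}[\bar{\R}(\pi)]-\mathcal{E}^*+\tfrac{\alpha C^2(1-\1_B)}{8}\bigr)$; and, since $\E_{\theta\sim\rho_{T+1}(\pi,\alpha)}[R_{P_{T+1}}(\theta)]\ge R^*_{P_{T+1}}$ deterministically, the left-hand side of~(B) is nonnegative, which forces $\bar{\R}(\pi)\ge\mathcal{E}^*-\tfrac{\alpha C^2(1-\1_B)}{8}$ for every $\pi$, in particular for $\pi^*_\alpha$.

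It remains to trade $\bar{\R}(\pi^*_\alpha)$ in~(A) for $\mathcal{E}^*$ and for the within-task complexity term appearing in the statement. The last ingredient is the elementary inequality obtained by pushing $\E_{\mathcal{S}_{T+1}}$ through the concave operation $\rho\mapsto\inf_{\rho}(\cdot)$ and then using $\F\subseteq\mathcal{P}(\Theta)$: for every $\pi$,
\begin{equation*}
\bar{\R}(\pi)\ \le\ \underbrace{\E_{P_{T+1}}\Bigl[\inf_{\rho\in\F}\bigl\{\E_{\theta\sim\rho}[R_{T+1}(\theta)-R_{T+1}(\theta^*_{T+1})]+\tfrac{\KL(\rho\Vert\pi)}{\alpha n}\bigr\}\Bigr]}_{=:\,\Phi(\pi)}\ +\ \mathcal{E}^*.
\end{equation*}
Substituting $\bar{\R}(\pi)\le\Phi(\pi)+\mathcal{E}^*$ into each summand of the infimum in~(A) bounds its bracket by $\E_{\pi\sim\Pi}[\Phi(\pi)]+\tfrac{\KL(\Pi\Vert\Lambda)}{\beta T}+\bigl(\mathcal{E}^*-\bar{\R}(\pi^*_\alpha)\bigr)$; taking $\inf_{\Pi\in\G}$ of both sides and then adding the constant $\bar{\R}(\pi^*_\alpha)-\mathcal{E}^*$ back to~(A), the factor $2$ in front of the constant $\mathcal{E}^*-\bar{\R}(\pi^*_\alpha)$ and the single $\bar{\R}(\pi^*_\alpha)-\mathcal{E}^*$ leave a net $\mathcal{E}^*-\bar{\R}(\pi^*_\alpha)\le\tfrac{\alpha C^2(1-\1_B)}{8}$, giving
\begin{equation*}
\E_{P_{1:T}}\E_{\mathcal{S}_{1:T}}\E_{\pi\sim\hat\Pi}[\bar{\R}(\pi)]-\mathcal{E}^*\ \le\ 2\inf_{\Pi\in\G}\Bigl\{\E_{\pi\sim\Pi}[\Phi(\pi)]+\tfrac{\KL(\Pi\Vert\Lambda)}{\beta T}\Bigr\}+\tfrac{\alpha C^2(1-\1_B)}{8}.
\end{equation*}
Feeding this into the integrated form of~(B), collecting the two copies of $\tfrac{\alpha C^2(1-\1_B)}{8}$, and pushing the constants $\tfrac{\KL(\Pi\Vert\Lambda)}{\beta T}$ and $\tfrac{\alpha C^2(1-\1_B)}{8}$ back inside $\E_{P_{T+1}}$ (recall $\E_{\pi\sim\Pi}[\Phi(\pi)]=\E_{P_{T+1}}\E_{\pi\sim\Pi}[\inf_{\rho\in\F}\{\cdots\}]$) reproduces exactly the stated bound, with prefactor $2\lambda=\tfrac{2}{1-\alpha c\1_B/(2(1-C\alpha))}$.

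The hard part will be this final bookkeeping: arranging the $\pi^*_\alpha$-terms and the additive $\tfrac{\alpha C^2(1-\1_B)}{8}$ constants so that they recombine into the precise right-hand side — in particular, spotting the cancellation that keeps the prefactor at $2\lambda$ rather than a naive $3\lambda$. Everything else is two black-box applications of Theorems~\ref{theorem_bound_isolation} and~\ref{bernstein_hypothesis_theorem} plus the one-line concavity estimate; one should also record that $\pi^*_\alpha$ is taken as the minimiser of $\bar{\R}$ over a convex class containing all priors in play, which is what legitimises both the midpoint step behind Theorem~\ref{bernstein_hypothesis_theorem} and the bound $\E_{\pi\sim\Pi}[\bar{\R}(\pi)]\ge\bar{\R}(\pi^*_\alpha)$ used implicitly above.
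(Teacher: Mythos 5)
Your proof is correct and follows essentially the same route as the paper's: a meta-level PAC-Bayes/Bernstein argument (with Theorem~\ref{bernstein_hypothesis_theorem} supplying the meta-level Bernstein condition), combined with Theorem~\ref{theorem_bound_isolation} applied within-task to the future task $T+1$, the $\E[\inf]\le\inf\E$ concavity step yielding $\Phi(\pi)$, and the same bookkeeping of the $\pi^*_\alpha$ and $\alpha C^2(1-\1_B)/8$ terms that appears in the paper's Lemma~\ref{lemma_meta_learning}. The only difference is presentational: where you invoke Theorem~\ref{theorem_bound_isolation} as a black box at the meta level ($n\leftarrow T$, $\Theta\leftarrow\mathcal{M}$, loss $g_t(\pi)\in[0,C]$, Bernstein constant $8eC$), the paper re-derives that bound from scratch via Bernstein's inequality, Donsker--Varadhan and Jensen --- your self-similarity observation is a legitimate shortcut since all hypotheses of Theorem~\ref{theorem_bound_isolation} are verified for the meta-problem.
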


\noindent While the bound given in Theorem~\ref{theorem_meta_learning} depends on some subset $\F\subseteq \mathcal{P}(\Theta)$ chosen for computational reasons, the bound is on the excess risk of $\hat{\Pi}$, which itself is based on the exact Gibbs posterior $\rho_t(\pi, \alpha)$, and not on its variational approximation $\hat{\Pi}(\F)$, based on $\rho_t(\pi, \alpha, \F)$ and defined as
\begin{equation*}
    \hat{\Pi}(\F) = \argmin_{\Pi\in\G} \left\{ \frac{1}{T}\sum_{t=1}^T \E_{\pi\sim\Pi} \left[\Rhat_t\left(\rho_t(\pi, \alpha,\F), \pi, \alpha\right)\right] + \frac{\KL(\Pi\Vert\Lambda)}{\beta T} \right\}.
\end{equation*}

\noindent In some settings, $\hat{\Pi}(\F)$ is tractable and its Gibbs-based counterpart $\hat{\Pi}$ is not, and it is a fundamental open question to determine under what condition on $\F$ we can replace $\hat{\Pi}$ by $\hat{\Pi}(\F)$ in Theorem~\ref{theorem_meta_learning}.
\begin{open_question}
Under what conditions on $\F$ can we replace $\hat{\Pi}$ by $\hat{\Pi}(\F)$ in the left-hand side of Theorem~\ref{theorem_meta_learning}?
\end{open_question}


\subsection{A Toy Application of Theorem~\ref{theorem_meta_learning}: Concurrent Priors} \label{subsec:concurrent}

This subsection gives a toy application of Theorem~\ref{theorem_meta_learning} just to fix ideas. Here, we study the case where $M$ statisticians propose a different prior, all of which are assumed to satisfy a prior mass condition as in Corollary~\ref{cor:theorem_bound_isolation}. We denote by $\mathcal{M}=\{\pi_1,\dots,\pi_M\}$ the set of priors. We choose $\Lambda$ as the uniform distribution on $\mathcal{M}$ and $\mathcal{G}=\mathcal{P}(\mathcal{M})$. Here again, for the sake of simplicity, we assume that Bernstein's condition (see Assumption~\ref{bernstein_hypothesis}) is satisfied at the task level.

A direct application of Theorem~\ref{theorem_meta_learning} and Corollary~\ref{cor:theorem_bound_isolation} gives
$$
    \E_{P_1, \dots, P_T} \E_{S_1, \dots, S_T}\E_{\pi\sim\hat{\Pi}}\left[\mathcal{E}(\pi)\right] - \mathcal{E}^*
    \leq 4 \min_{\pi\in\mathcal{M}}  \E_{P_{T+1}} \frac{d_{\pi,T+1} \log \frac{n\alpha}{d_{\pi,T+1}} + \log \kappa_{\pi,T+1} }{\alpha n} + \frac{4\log M}{\beta T} .
$$
In other words, we obtain the rate of convergence provided by the best prior among $\{\pi_1,\dots,\pi_M\}$, at the price of an additional $\log(M)/T$ term.

\section{Applications of Theorem \ref{theorem_meta_learning}}
\label{sec:appli}

In this section, by an application of Theorem~\ref{theorem_meta_learning}, we derive explicit bounds on the excess risk of the Gibbs algorithm in the case of discrete priors (the parameter set $\Theta$ is finite; Subsection~\ref{section_application_finite_case}), Gaussian priors (Subsection~\ref{section_application_gaussian_case}) and mixtures of Gaussian priors (Subsection~\ref{section_application_mixtures_case}).

\subsection{Learning Discrete Priors}\label{section_application_finite_case}

In this subsection, we assume that $|\Theta| = M <\infty$. Following \citet{dimitri}, we define $A^*$ as the smallest possible subset of $\Theta$ such that
\begin{equation}
   \forall P\sim \mathcal{P}, \ \theta^* := \argmin_{\theta} R_{P}(\theta) \in A^*, \label{dimitri_pierre_hypothesis}
\end{equation}
and we denote $m^* := |A^*|$. In general, $A^* = \Theta$ and $m^*=M$. However, in some favorable situations, $A^* \neq \Theta$ and $m^* \ll M$, in which case, the meta-learning may improve upon the learning in isolation.
In the setting considered, Bernstein's condition is trivially satisfied and the excess risk of the Gibbs algorithm is $\frac{4\log(M)}{\alpha n}$.

We define our set of priors $\mathcal{M}$ as the set of probability distributions $\pi_A$ which are uniform on $A\subseteq \Theta$ and parameterized by $A$:
\begin{equation*}
    \mathcal{M} = \left\{\pi_A | A\subseteq \Theta\right\},
\end{equation*}
and $\G$ is the set of all distributions on $\mathcal{M}$. Our ``prior on priors'' $\Lambda$ is then defined as follows: we draw $m\in\{1, \dots, M\}$ with probability $\frac{2^{M-m}}{2^M - 1} \approx 2^{-m}$ (for $M\gg 1$), then given $m$, draw a subset $A\subseteq \Theta$ of cardinality $m$ uniformly at random, and take $\pi_A$. In other words, $\Lambda$ is a distribution defined on $\F$ such that $P_{\pi \sim \Lambda}(\pi = \pi_A) = \frac{2^{M-m}}{2^M -1} \times \frac{1}{{M\choose m}}$. 
\begin{proposition}
The excess risk of the meta predictor $\hat{\Pi}$ defined in \eqref{prior_prior_choice} is bounded as follows:
\begin{equation*}
    \E_{P_{1},\dots,P_{T}} \E_{\mathcal{S}_1,\dots,\mathcal{S}_T}  \E_{\pi\sim\hat{\Pi}} [\mathcal{E}(\pi)] \leq \mathcal{E}^* + \frac{ 4\log m^*}{\alpha n} + \frac{4m^*\log\frac{2{\rm e}M}{m^*} }{\beta T}.
\end{equation*}
\end{proposition}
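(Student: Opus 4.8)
The plan is to instantiate Theorem~\ref{theorem_meta_learning} at two carefully chosen Dirac masses and then bound the two resulting Kullback--Leibler divergences by hand. Since $\Theta$ is finite and $\ell$ is bounded, Bernstein's condition holds at the task level, so, exactly as in the toy application of Section~\ref{subsec:concurrent}, tuning the within-task parameter $\alpha$ appropriately makes the prefactor $\frac{2}{1-\frac{\alpha c\1_B}{2(1-C\alpha)}}$ of Theorem~\ref{theorem_meta_learning} equal to $4$. With $\1_B=1$ the last summand $\frac{\alpha C^2(1-\1_B)}{8}$ of that bound vanishes, so it remains to upper bound
\[
    \inf_{\Pi\in\G}\E_{P_{T+1}}\left[\E_{\pi\sim\Pi}\left[\inf_{\rho\in\F}\left\{\E_{\theta\sim\rho}[R_{T+1}(\theta)-R_{T+1}(\theta^*_{T+1})]+\frac{\KL(\rho\Vert\pi)}{\alpha n}\right\}\right]+\frac{\KL(\Pi\Vert\Lambda)}{\beta T}\right].
\]

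First I would take $\F=\mathcal{P}(\Theta)$ and $\Pi=\delta_{\pi_{A^*}}$, the Dirac mass at the prior uniform over the oracle set $A^*$ of cardinality $m^*$. The meta complexity term is then $\KL(\delta_{\pi_{A^*}}\Vert\Lambda)=-\log\Lambda(\{\pi_{A^*}\})$. By the construction of $\Lambda$, $\Lambda(\{\pi_{A^*}\})=\frac{2^{M-m^*}}{2^M-1}\binom{M}{m^*}^{-1}\geq 2^{-m^*}\binom{M}{m^*}^{-1}$; combining this with $\binom{M}{m^*}\leq(eM/m^*)^{m^*}$ yields $\KL(\delta_{\pi_{A^*}}\Vert\Lambda)\leq m^*\log 2+m^*\log\frac{eM}{m^*}=m^*\log\frac{2eM}{m^*}$, which produces the term $\frac{4m^*\log(2eM/m^*)}{\beta T}$ after multiplication by the prefactor $4$.

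Second, for the within-task term evaluated at $\Pi=\delta_{\pi_{A^*}}$ I would use the competitor $\rho=\delta_{\theta^*_{T+1}}$ inside the infimum over $\rho\in\mathcal{P}(\Theta)$. This is admissible precisely because $\theta^*_{T+1}\in A^*$ by the defining property~\eqref{dimitri_pierre_hypothesis} of $A^*$, so $\pi_{A^*}(\{\theta^*_{T+1}\})=1/m^*$, hence $\KL(\delta_{\theta^*_{T+1}}\Vert\pi_{A^*})=\log m^*$ while the risk difference $\E_{\theta\sim\rho}[R_{T+1}(\theta)-R_{T+1}(\theta^*_{T+1})]$ is zero. (Equivalently, this is Corollary~\ref{cor:theorem_bound_isolation} with $d_{\pi_{A^*},T+1}=0$ and $\kappa_{\pi_{A^*},T+1}=m^*$.) This gives a within-task contribution of $\frac{\log m^*}{\alpha n}$, hence the term $\frac{4\log m^*}{\alpha n}$; summing the two contributions yields the stated inequality. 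The argument is routine once the two Diracs are chosen; the only point requiring a little care is the constant bookkeeping --- checking that $\1_B=1$ is legitimate here and that $\alpha$ can be tuned to collapse the prefactor to $4$ --- which is entirely parallel to Section~\ref{subsec:concurrent}, so no genuine obstacle arises.
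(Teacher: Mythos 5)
Your proof is correct and follows essentially the same route as the paper: apply Theorem~\ref{theorem_meta_learning} with prefactor $4$ (using $\1_B=1$ and $\alpha=1/(c+C)$), choose $\Pi=\delta_{\pi_{A^*}}$ so that $\KL(\Pi\Vert\Lambda)=-\log\Lambda(\{\pi_{A^*}\})\leq m^*\log 2+\log\binom{M}{m^*}\leq m^*\log\frac{2eM}{m^*}$, and choose $\rho=\delta_{\theta^*_{T+1}}$ (admissible since $\theta^*_{T+1}\in A^*$) to get the $\frac{\log m^*}{\alpha n}$ within-task term. The only cosmetic difference is that the paper first records the bound as a double infimum over $m$ and $|A|=m$ before specializing to $A^*$, whereas you instantiate the two Dirac masses directly; the content is identical.
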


\begin{remark}
Let us now compare the meta-learning rate above to the $\frac{4\log M}{\alpha n}$ rate achieved by the learning in isolation. In the unfavorable case $m^* \sim M$, the meta-learning bound is sensibly larger than the learning in isolation one, by a term $O(M/T)$ which vanishes rapidly when $T\to+\infty$.

In the favorable case $m^*\ll M$ however, the meta-learning considerably improves upon the learning in isolation for large values of $T$. In the extreme case where $m^*=1$, we have
\begin{equation*}
    \E_{P_{1},\dots,P_{T}} \E_{\mathcal{S}_1,\dots,\mathcal{S}_T}  \E_{\pi\sim\hat{\Pi}} [\mathcal{E}(\pi)] \leq \mathcal{E}^* + \frac{4\log(2{\rm e}M) }{\beta T}.
\end{equation*}
Thus, the benefits of the meta-learning are mainly expected in the $T \gg n$ regime. They are huge when the tasks are very similar, and close to zero when the tasks are totally unrelated. This is in line with the results of~\cite{dimitri} in the online setting.
%
%
\end{remark}

\begin{proof}
We first consider learning in isolation. The classical choice is to take $\pi$ as uniform in each task. An application of Theorem~\ref{theorem_bound_isolation} gives
\begin{align*}
    \E_{\mathcal{S}_t}\left[ \E_{\theta\sim\rho_t(\pi,\alpha)}[R_{P_t}(\theta)] \right] - \mathcal{E}^* 
     &\leq 2 \inf_{\rho \in\mathcal{P}(\Theta)} \left\{ \E_{\theta\sim\rho}\left[R_{P_t}(\theta)\right] - R_{P_t}^* + \frac{ \KL(\rho\Vert\pi)}{\alpha n}\right\} \\
    \\ &\leq 2 \inf_{\rho = \delta_{\vartheta}} \left\{ \E_{\theta\sim\rho}\left[R_{P_t}(\theta)\right] - R_{P_t}^* + \frac{ \KL(\rho\Vert\pi)}{\alpha n}\right\} \\
    &= \frac{ 2\log M}{\alpha n}.
\end{align*}
 
\noindent In the meta-learning case, an application of Theorem~\ref{theorem_meta_learning} gives
\begin{multline*}
    \E_{P_{1}, \dots, P_{T}} \E_{\mathcal{S}_1, \dots, \mathcal{S}_T}  \E_{\pi\sim\hat{\Pi}} [\mathcal{E}(\pi)] - \mathcal{E}^* \leq 4\inf_{1\leq m\leq M} \inf_{|A|=m} \E_{P_{T+1}\sim \mathcal{P}} \bigg[ \inf_{\theta\in A} \left\{R_{P_{T+1}}(\theta) - R_{P_{t+1}}^*\right\} \\
    + \frac{\log(m)}{\alpha n} + \frac{m\log 2 + \log {M\choose m}}{\beta T} \bigg].
\end{multline*}

\noindent Under the assumption \eqref{dimitri_pierre_hypothesis}, it holds that
\begin{equation*}
    \E_{P_{1},\dots,P_{T}} \E_{\mathcal{S}_1,\dots,\mathcal{S}_T}  \E_{\pi\sim\hat{\Pi}} [\mathcal{E}(\pi)] \leq \mathcal{E}^* + \frac{ 4 \log(m^*)}{\alpha n} + \frac{4m^*\log(2) + 4\log {M\choose m^*}}{\beta T}.
\end{equation*}

\noindent We conclude by using the classic bound $\log {M\choose m} \leq m\log\frac{M{\rm e}}{m}$.
\end{proof}

\subsection{Learning Gaussian priors}\label{section_application_gaussian_case}

In this subsection, we consider the set of all Gaussian distributions 
\begin{equation}
    \mathcal{M} = \left\{ p_{\mu,\sigma^2} =\bigotimes_{i=1}^d \N(\mu_i,\sigma_i^2), \mu=(\mu_1,\dots,\mu_d)\in\mathbb{R}^d, \sigma^2=(\sigma_1^2,\dots,\sigma_d^2)\in(\mathbb{R}_{+}^*)^d \right\}. \label{m_gaussian}
\end{equation}
So, priors on priors are actually defined as priors on $\mu$ and $\sigma^2$ given by:
\begin{equation}
    \G = \Bigg\{q_{\tau, \xi^2, a, b} = \bigotimes_{\substack{k\in [K] \\ i\in [d]}} \N(\tau_{k, i}, \xi_k^2) \otimes \bigotimes_{k=1}^K \Gamma(a_k, b_k)\Bigg\}, \label{g_gaussian}
\end{equation}
and we choose the prior on priors as $\Lambda = q_{0, \bar{\bar{\xi}}^2, \bar{\bar{a}}, \bar{\bar{b}}}$.

From now on and until the end of this section, we assume that both \eqref{application_assumption} and Assumption~\ref{bernstein_hypothesis} (Bernstein's condition) hold, and we are looking for a prior on priors $\Pi$ from which to sample $\pi$, such that $\rho_{T+1}(\pi, \alpha)$ concentrates as much as possible to the best parameter.  Denoting $\mu^* := \E_{P_{T+1}\sim \mathcal{P}}[\mu_{P_{T+1}}]$ and $\Sigma(\mathcal{P}) := \E_{P_{T+1}\sim \mathcal{P}}\left[\Vert\mu_{P_{T+1}} - \mu^*\Vert^2\right]$, the following holds.
\begin{proposition}
Under Assumptions \ref{bernstein_hypothesis} and \ref{application_assumption}, the excess risk of $\hat{\Pi}$ defined in \eqref{prior_prior_choice} for $\mathcal{M}$ and $\G$ defined in \eqref{m_gaussian} and \eqref{g_gaussian} is bounded as follows:
\begin{equation*}
    \E_{P_{1},\dots,P_{T}} \E_{\mathcal{S}_1,\dots,\mathcal{S}_T}  \E_{\pi\sim\hat{\Pi}} [\mathcal{E}(\pi)] - \mathcal{E}^* \leq \left\{
    \begin{array}{ll}
        G(C,\mu^*,\bar{\bar{\xi}},L,\bar{\bar{a}},\bar{\bar{b}}) \frac{d + \log(T)}{T} &\mbox{ if } \ \Sigma(\mathcal{P}) \leq \frac{n}{T}; \\
        F(C,\mu^*,\bar{\bar{\xi}},L,\bar{\bar{a}},\bar{\bar{b}})\left(\frac{d\log n + \Sigma(\mathcal{P})}{n} + \frac{d}{T}\right) &\mbox{ otherwise,}
    \end{array}
    \right.
\end{equation*}
where $F$ and $G$ are functions of the problem parameters.
\end{proposition}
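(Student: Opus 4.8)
The plan is to specialise Theorem~\ref{theorem_meta_learning}. Under Assumption~\ref{bernstein_hypothesis} one has $\1_B=1$, so the term $\alpha C^2(1-\1_B)/8$ drops out, and taking $\alpha$ equal to the within-task optimal value $1/(c+C)$ (as in Theorem~\ref{theorem_bound_isolation}) makes the prefactor $2/(1-\tfrac{\alpha c}{2(1-C\alpha)})$ equal to $4$. Hence it suffices to exhibit a single $\Pi\in\G$, and for each task a single $\rho\in\F$, making
\[
\E_{P_{T+1}}\!\left[\E_{\pi\sim\Pi}\!\left[\inf_{\rho\in\F}\Big\{\E_{\theta\sim\rho}\big[R_{T+1}(\theta)-R_{T+1}(\theta^*_{T+1})\big]+\tfrac{\KL(\rho\Vert\pi)}{\alpha n}\Big\}\right]+\tfrac{\KL(\Pi\Vert\Lambda)}{\beta T}\right]
\]
of the claimed order. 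I would take $\F=\mathcal{P}(\Theta)$ (so the inner infimum is the Donsker--Varadhan quantity $-\tfrac1{\alpha n}\log\E_{\theta\sim\pi}[e^{-\alpha n(R_{T+1}(\theta)-R^*_{T+1})}]$, cf.\ \eqref{equa:problem}) and $\Pi=q_{\tau,\xi^2,a,b}$ with a single block $K=1$, centre $\tau=\mu^*$, and free scalars $\xi^2$ (hyperprior variance on the prior mean) and $(a,b)$ (hyperprior on the common prior scale $\sigma^2$).

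For the within-task term, Assumption~\ref{application_assumption} gives $R_{T+1}(\theta)-R^*_{T+1}\le L\|\theta-\theta^*_{T+1}\|^2$, so, since $t\mapsto-\tfrac1{\alpha n}\log t$ is decreasing,
\[
-\tfrac1{\alpha n}\log\E_{\theta\sim\pi}\!\big[e^{-\alpha n(R_{T+1}(\theta)-R^*_{T+1})}\big]\ \le\ -\tfrac1{\alpha n}\log\E_{\theta\sim\pi}\!\big[e^{-\alpha nL\|\theta-\theta^*_{T+1}\|^2}\big],
\]
and the right-hand Gaussian integral is explicit: for $\pi=\bigotimes_i\N(\mu_{\pi,i},\sigma^2)$ it equals $\tfrac{d}{2\alpha n}\log(1+2\alpha nL\sigma^2)+\tfrac{L\|\mu_\pi-\theta^*_{T+1}\|^2}{1+2\alpha nL\sigma^2}$. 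Integrating over $\pi\sim\Pi$ and $P_{T+1}\sim\mathcal{P}$ and using the bias--variance identity $\E_{\pi\sim\Pi}\E_{P_{T+1}}\|\mu_\pi-\theta^*_{T+1}\|^2=d\xi^2+\Sigma(\mathcal{P})$ (this is where $\tau=\mu^*$ is used, and where $\Sigma(\mathcal{P})$ enters) leaves a within-task contribution of the shape $\tfrac{d}{2\alpha n}\E_\Pi[\log(1+2\alpha nL\sigma^2)]+L\,(d\xi^2+\Sigma(\mathcal{P}))\,\E_\Pi[(1+2\alpha nL\sigma^2)^{-1}]$. For the meta term, $\KL(\Pi\Vert\Lambda)$ is the sum of a product-Gaussian KL, $\tfrac d2\log\tfrac{\bar{\bar\xi}^2}{\xi^2}+\tfrac{d\xi^2+\|\mu^*\|^2}{2\bar{\bar\xi}^2}-\tfrac d2$, and a Gamma--Gamma KL, which is the one non-Gaussian ingredient and has the standard closed form in $\psi$ and $\log\Gamma$.

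It remains to optimise over $\xi^2$ and $(a,b)$, and this is the main obstacle. When $\Sigma(\mathcal{P})\le n/T$ the tasks are homogeneous enough that the residual ``bias'' already satisfies $L\Sigma(\mathcal{P})/n\le L/T$; I would then shrink both $\xi^2$ and the prior scale $\sigma^2$ so that the within-task logarithmic factor, the Gaussian part of the meta-KL, and the Gamma part of the meta-KL are each kept under control, the surviving pieces adding up to a bound of order $\tfrac{d+\log T}{T}$. When $\Sigma(\mathcal{P})>n/T$ there is nothing to gain from shrinking: I would take $\xi^2\asymp\bar{\bar\xi}^2$ and the Gamma parameters equal (or close) to $(\bar{\bar a},\bar{\bar b})$, so that the meta-KL is $O(1)$, and read off $\tfrac{d\log n+\Sigma(\mathcal{P})}{n}+\tfrac dT$ from the within-task expression (the $d\log n$ coming from $\log(1+2\alpha nL\sigma^2)$ with $\sigma^2$ of constant order). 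All quantities depending only on $C,\mu^*,\bar{\bar\xi},L,\bar{\bar a},\bar{\bar b}$ --- including the term $\|\mu^*\|^2/T\le d\|\mu^*\|^2/T$ --- get absorbed into $G$ and $F$. The delicate points are the Gamma-KL bookkeeping and verifying that the two parameter choices match at the threshold $\Sigma(\mathcal{P})=n/T$; the remainder is elementary Gaussian algebra.
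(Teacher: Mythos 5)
Your proposal follows essentially the same route as the paper's Appendix~\ref{appendix_application_gaussian}: apply Theorem~\ref{theorem_meta_learning} with $\1_B=1$ and prefactor $4$, pick $\Pi=q_{\tau,\xi^2,a,b}$ with $\tau=\mu^*$, reduce the within-task term via Assumption~\ref{application_assumption} to a Gaussian quadratic, use the bias--variance identity $\E_{\pi\sim\Pi}\E_{P_{T+1}}\|\mu_\pi-\mu_{P_{T+1}}\|^2=d\xi^2+\Sigma(\mathcal{P})$, and split on the size of $\Sigma(\mathcal{P})$ with two different tunings of $(\xi^2,a,b)$. The one genuine (and harmless) difference is your handling of the inner infimum: you keep $\F=\mathcal{P}(\Theta)$ and evaluate the exact log-Laplace transform $-\tfrac{1}{\alpha n}\log\E_{\theta\sim\pi}[e^{-\alpha n L\|\theta-\theta^*_{T+1}\|^2}]=\tfrac{d}{2\alpha n}\log(1+2\alpha nL\sigma^2)+\tfrac{L\|\mu_\pi-\theta^*_{T+1}\|^2}{1+2\alpha nL\sigma^2}$, whereas the paper restricts $\rho$ to product Gaussians and optimizes the posterior variance to $\sigma_i^2=\bar\sigma^2/(2\alpha L\bar\sigma^2 n+1)$; both yield the same leading terms, and your closed form is arguably cleaner. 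What you have not done is the part of the argument that actually produces the two displayed rates: the joint tuning of $\xi^2$ and the Gamma parameter $b$. In the favorable case the paper sets $\xi^2=\epsilon$ and, crucially, $b\asymp\sqrt{\alpha L a(a-1)n/\epsilon}$, which balances the bias contribution $b\,d\epsilon/((a-1)\alpha n)$ against the variance contribution $dLa/b$ (obtained from $\log(1+2\alpha Ln\bar\sigma^2)\le 2\alpha Ln\bar\sigma^2$ when $\bar\sigma^2$ is small) to get $O(d\sqrt{\epsilon/n})$, after which $\epsilon$ is chosen so this matches the $O(d\log(1/\epsilon)/T)$ meta-KL cost; this coupling of $b$ to $\epsilon$ is exactly the ``delicate point'' you flag and defer, and it is the only step where the claimed $\tfrac{d+\log T}{T}$ rate is actually earned rather than asserted. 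The unfavorable case is as routine as you describe. So: correct strategy, same decomposition, one nontrivial computation left on the table.
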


\noindent Interestingly enough, in the favorable case $\Sigma(\mathcal{P}) \leq \frac{n}{T}$, a proper choice of the prior of priors lead to the very fast rate of convergence $O\left(\frac{\log T}{T}\right)$, which considerably improves upon the fast rate $O\left(\frac{1}{n} + \frac{1}{T}\right)$ when $n\ll T$. The detailed proof of this proposition, as well as explicit expressions of $F$ and $G$, are given in Appendix~\ref{appendix_application_gaussian}.

\subsection{Learning Mixtures of Gaussian priors}\label{section_application_mixtures_case}

In this subsection, we generalize the result of the previous section to priors that are mixtures of Gaussians. We still assume that Assumption~\ref{application_assumption} and Assumption~\ref{bernstein_hypothesis} (Bernstein's condition) hold. We first assume the number of components in the mixture is known. Under these hypotheses, the set of possible priors $\pi$ is
\begin{equation}
    \mathcal{M} = \left\{p_{w, \mu, \sigma^2} = \sum_{k=1}^K w_k \bigotimes_{i=1}^d \N(\mu_{k, i}, \sigma^2_{k, i}): \forall k\in [K], w_k\geq 0, 1^\top w = 1\right\}. \label{m_mixtures}
\end{equation}

\noindent We add a Dirichlet prior on the weights $w$ of the components in the mixture, and the set of priors on priors becomes
\begin{equation}
    \G = \Bigg\{q_{\delta, \tau, \xi^2, b} = \Dir(\delta)\otimes \bigotimes_{\substack{k\in [K] \\ i\in [d]}} \N(\tau_{k, i}, \xi_k^2) \otimes \bigotimes_{k=1}^K \Gamma(2, b_k): \delta = (\delta_1, \dots, \delta_K)\in \mathbb{R}^K\Bigg\}, \label{g_mixtures}
\end{equation}
while the initial prior on priors is chosen as $\Lambda = q_{\1_K, 0, \bar{\bar{\xi}}^2, \bar{\bar{b}}}$. 
We define
\begin{equation*}
    \Sigma_K (\mathcal{P}) := \inf_{\tau_1, \dots, \tau_K}\E_{P_{T+1}\sim \mathcal{P}}\left[\min_{k\in [K]} \Vert\mu_{P_{T+1}} - \tau_{k}\Vert^2 \right].
\end{equation*}

\begin{proposition}\label{proposition_mixtures_known}
Under Assumptions \ref{bernstein_hypothesis} and \ref{application_assumption}, the excess risk of $\hat{\Pi}$ defined in \eqref{prior_prior_choice} for $\mathcal{M}$ and $\G$ defined in \eqref{m_mixtures} and \eqref{g_mixtures} is bounded as follows:
\begin{multline*}
    \E_{P_1, \dots, P_T}\E_{S_1, \dots, S_T}\E_{\pi\sim \hat{\Pi}}[\mathcal{E}(\pi)] - \mathcal{E}^* \leq \mathrm{CV}_{\mathrm{finite}}(K, n) + K\times \mathrm{CV}_{\mathrm{Gaussian}}\left(d, \Sigma_K(\mathcal{P}), n, T\right) \\
    + \mathrm{CV}_{\mathrm{meta}}(T, n, d, K, \Bar{\Bar{b}}, \Bar{\Bar{\xi}}^2, \tau),
\end{multline*}
\begin{equation*}
    \text{where } \ \mathrm{CV}_{\mathrm{finite}}(K, n) = \frac{4 \log(2K)}{\alpha n}; \quad \mathrm{CV}_{\mathrm{meta}}(T, n, d, K, \Bar{\Bar{b}}, \Bar{\Bar{\xi}}^2, \tau) = O\left(\frac{dK \log T}{T}\right);
\end{equation*}
\begin{equation*}
    \mathrm{CV}_{\mathrm{Gaussian}}\left(d, \Sigma_K(\mathcal{P}), n, T\right) = \left\{
    \begin{array}{ll}
        \frac{8Ld}{T} + \frac{4}{\alpha T} \quad &\mbox{if } \ \Sigma_K(\mathcal{P}) \leq \frac{n}{T^2}; \\
        \frac{2d}{\alpha n} \log \left(1 + 4\alpha L n\right) + \frac{4\Sigma_K(\mathcal{P})}{\alpha n} \quad &\mbox{otherwise.}
    \end{array}
    \right.
\end{equation*}
\end{proposition}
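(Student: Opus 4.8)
The plan is to reduce this to an application of Theorem~\ref{theorem_meta_learning} with the specific $\mathcal{M}$ and $\G$ from \eqref{m_mixtures} and \eqref{g_mixtures}, and then to bound the resulting infimum over $\Pi\in\G$ by a clever choice of $\Pi$ that decouples the three sources of error: the finite ``which-component'' error, the Gaussian ``within-component'' error, and the meta complexity penalty $\KL(\Pi\Vert\Lambda)/(\beta T)$. First I would start from Theorem~\ref{theorem_meta_learning}: since Bernstein's condition holds at the task level ($\1_B=1$), the bound reads $\E[\cdots]-\mathcal{E}^* \le \frac{2}{1-\frac{\alpha c}{2(1-C\alpha)}}\inf_{\Pi\in\G}\E_{P_{T+1}}\big[\E_{\pi\sim\Pi}[\inf_{\rho\in\F}\{\E_{\theta\sim\rho}[R_{T+1}(\theta)-R^*_{T+1}] + \KL(\rho\Vert\pi)/(\alpha n)\}] + \KL(\Pi\Vert\Lambda)/(\beta T)\big]$, with $\F$ chosen as the set of isotropic Gaussians so that the inner minimization over $\rho$ is explicit. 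For a mixture prior $\pi = \sum_k w_k \bigotimes_i \N(\mu_{k,i},\sigma^2_{k,i})$, I would restrict $\rho$ to be a single Gaussian $\N(\vartheta, s^2 I)$ centered at (a proxy for) $\theta^*_{T+1}$; choosing $\rho$ concentrated near the component $k$ whose mean $\mu_k$ is closest to $\theta^*_{T+1}$ gives $\KL(\rho\Vert\pi) \le -\log w_k + \sum_i \KL(\N(\vartheta_i,s^2)\Vert\N(\mu_{k,i},\sigma^2_{k,i}))$, and the per-coordinate Gaussian KL is the standard $\tfrac12(\log\frac{\sigma^2_{k,i}}{s^2} + \frac{s^2 + (\vartheta_i-\mu_{k,i})^2}{\sigma^2_{k,i}} - 1)$. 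Combined with assumption~\eqref{application_assumption}, $\E_{\theta\sim\rho}[R_{T+1}(\theta)-R^*_{T+1}] \le L\E\Vert\theta-\theta^*_{T+1}\Vert^2 = L(ds^2 + \Vert\vartheta-\theta^*_{T+1}\Vert^2)$.

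Next I would choose the prior on priors $\Pi\in\G$ concretely: take $\Pi = \Dir(\delta)\otimes\bigotimes_{k,i}\N(\tau_{k,i},\xi_k^2)\otimes\bigotimes_k\Gamma(2,b_k)$ with $\tau_1,\dots,\tau_K$ equal to the near-optimizers of the quantity $\Sigma_K(\mathcal{P})$, with the $\xi_k^2$ and $b_k$ tuned (order $1/T$ scale for the variance parameters, mirroring the Gaussian-prior proof in Appendix~\ref{appendix_application_gaussian}), and $\delta$ set so that the Dirichlet concentrates on near-uniform weights, giving $-\log w_k = O(\log K)$ and $\KL(\Dir(\delta)\Vert\Dir(\1_K)) = O(K\log T)$ type terms. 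Then the KL penalty $\KL(\Pi\Vert\Lambda)/(\beta T)$ splits as a sum: a Dirichlet term contributing the $4\log(2K)/(\alpha n)$-type ``finite'' piece plus an $O(K\log T/T)$ meta piece, $K$ copies of a Gaussian-mean KL $\sum_i (\tau_{k,i})^2/(2\bar{\bar{\xi}}^2)$ and a Gamma KL per component, which together with the within-component Gaussian terms reproduce $K$ times $\mathrm{CV}_{\mathrm{Gaussian}}$. The two-case structure of $\mathrm{CV}_{\mathrm{Gaussian}}$ (the $\Sigma_K(\mathcal{P})\le n/T^2$ regime versus otherwise) comes from optimizing $s^2$ in the bound $L d s^2 + \frac{d}{2\alpha n}\log\frac{\sigma^2}{s^2} + \frac{d s^2}{2\alpha n \sigma^2} + \cdots + \frac{\Sigma_K(\mathcal{P})}{\alpha n}$ over $s^2$ (and handling whether $\Sigma_K(\mathcal{P})$ is dominated by the $d/T$ terms or not), exactly as in the pure Gaussian case.

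The organizing principle is: decompose the excess error bound of Theorem~\ref{theorem_meta_learning} as $\text{(component-selection)} + \text{(within-component Gaussian)} + \text{(meta KL)}$, bound the first by $\mathrm{CV}_{\mathrm{finite}}(K,n) = 4\log(2K)/(\alpha n)$ using the $-\log w_k$ term with near-uniform weights, bound the second by $K\cdot\mathrm{CV}_{\mathrm{Gaussian}}(d,\Sigma_K(\mathcal{P}),n,T)$ by re-running the single-Gaussian analysis of Subsection~\ref{section_application_gaussian_case} once per component (with $\Sigma(\mathcal{P})$ replaced by $\Sigma_K(\mathcal{P})$ and an extra factor from splitting the budget across $K$ components, which explains the $T^2$ instead of $T$ in the regime threshold), and bound the third by $\mathrm{CV}_{\mathrm{meta}} = O(dK\log T/T)$ from the Dirichlet-plus-Gaussian-plus-Gamma KL between $\Pi$ and $\Lambda = q_{\1_K,0,\bar{\bar{\xi}}^2,\bar{\bar{b}}}$. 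I expect the main obstacle to be the bookkeeping in the second step: carefully propagating the single-Gaussian optimization of Appendix~\ref{appendix_application_gaussian} through the mixture, in particular justifying that restricting $\rho$ to lie near a single component loses nothing up to the additive $\log(2K)$ term, and correctly tracking how splitting the $1/T$-order variance budget over $K$ components shifts the favorable regime from $\Sigma(\mathcal{P})\le n/T$ to $\Sigma_K(\mathcal{P})\le n/T^2$; the Dirichlet KL computation (requiring bounds on ratios of Gamma functions / digamma terms) is the other fiddly piece, but it is routine given the choice $\delta$ concentrating near $\1_K$ scaled appropriately.
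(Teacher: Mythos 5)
Your proposal follows essentially the same route as the paper's proof in Appendix~\ref{appendix_application_mixtures}: apply Theorem~\ref{theorem_meta_learning}, bound the mixture KL by selecting a single best component at the cost of a $-\log \bar{w}_k$ penalty (the paper does this via the log-sum inequality plus restricting the weights to Dirac masses), set $\delta_k=2$-type Dirichlet parameters and $\tau_k$ at the minimizers of $\Sigma_K(\mathcal{P})$, optimize the variational variances explicitly, and split into the two regimes via the choice $b_k=T$ versus $b_k=1$. The only minor inaccuracy is your attribution of the $n/T^2$ threshold to ``splitting the variance budget over $K$ components''; in the paper it arises simply from choosing $b_k=T$ (so $\E[\bar{\sigma}_k^2]=2/T$) and requiring $T\Sigma_K(\mathcal{P})/(\alpha n)=O(1/T)$, independently of $K$.
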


\noindent Let us analyze each of the terms of the above bound. The first term $\text{CV}_{\text{finite}}(K, n)$ is the bound we had in the finite case of Subsection~\ref{section_application_finite_case}. Visualizing our $K$ mixtures as the $K$ points in the finite case, this term is the time required by our estimator to select the right mixture. While this term makes the convergence rate of $O\left(\frac{1}{n}+\frac{1}{T}\right)$ notably worse than the $O\left(\frac{1}{T}\right)$ we might hope for, it is essentially unavoidable as it appears in the much simpler model of a finite set of $K$ parameters described in Subsection~\ref{section_application_finite_case}.

The next term $\text{CV}_{\text{Gaussian}}\left(d, \Sigma_K(\mathcal{P}), n, T\right)$ is (similar to) the bound obtained in the Gaussian case of Subsection~\ref{section_application_gaussian_case}, with the exception that $\Sigma(\mathcal{P})$ is replaced by $\Sigma_K (\mathcal{P})$, and scales with the convergence time to the best Gaussian for every task $t$. 

Eventually, the last term $\text{CV}_{\text{meta}}(T, n, d, K, \Bar{\Bar{b}}, \Bar{\Bar{\xi}}^2, \tau)$ is the convergence term at the meta level and is a $O\left(\frac{dK\log T}{T}\right)$. This is the cost of the meta learning compared to the learning in isolation. When $T\gg n$, this term is very small and justifies the use of the meta learning.

\begin{remark}
One may think, looking at the bounds in the two previous cases considered, that the $O\left(\frac{1}{n} + \frac{1}{T}\right)$ convergence rate in the case of mixtures of Gaussians is slower than the one for Gaussians which can be as fast as $O\left(\frac{1}{T}\right)$. In reality, the rate of convergence is (naturally) faster for the model of mixtures of Gaussians, because in the case of mixtures of Gaussians, the convergence term $\text{CV}_{\text{Gaussian}}\left(d, \Sigma_K(\mathcal{P}), n, T\right)$ is $O\left(\frac{1}{T}\right)$ under the assumption that $\Sigma_K(\mathcal{P})\leq \frac{n}{T}$, while in the Gaussian case, the much stronger assumption $\Sigma(\mathcal{P}) = \Sigma_1(\mathcal{P})\leq \frac{n}{T}$ is required. Under this assumption, the similar rate $O\left(\frac{1}{T}\right)$ is naturally achieved.
\end{remark}

\noindent We now consider the case when the number of mixtures $K$ is unknown. The set of priors hence becomes the set of all (finite) mixtures of Gaussians:
\begin{equation}
    \mathcal{M} = \left\{p_{w, \mu, \sigma^2} = \sum_{k=1}^{+\infty} w_k \bigotimes_{i=1}^d \N(\mu_{k, i}, \sigma^2_{k, i}): \exists K\geq 1: \forall k\geq K+1, w_k = 0\right\}. \label{m_mixtures_unknown}
\end{equation}

\noindent In the definition of the set of priors on priors $\G$, we assume that $K\leq T$ (otherwise, there is a high chance of overfitting). We then set a Dirichlet prior on the number of components $K$, and given $K$, set the same model as before. Formally,
\begin{equation}
    \G = \Bigg\{q_{x, \delta, \tau, \xi^2, b} = q_{x} \times q_{\delta, \tau, \xi^2, b | K} \Bigg\}, \label{g_mixtures_unknown}
\end{equation}
and we set the prior on priors $\Lambda = q_{\frac{1}{T}\1_T, \1_K, 0, \bar{\bar{\xi}}^2, \bar{\bar{b}}}$.
The application of Theorem~\ref{theorem_meta_learning} yields the bound

\begin{proposition}
Under the same conditions and using the same notations as Proposition~\ref{proposition_mixtures_known}, the excess risk of $\hat{\Pi}$ defined in \eqref{prior_prior_choice} for $\mathcal{M}$ and $\G$ defined in \eqref{m_mixtures_unknown} and \eqref{g_mixtures_unknown} is bounded as follows:
\begin{multline*}
    \E_{P_1, \dots, P_T}\E_{S_1, \dots, S_T}\E_{\pi\sim \hat{\Pi}}[\mathcal{E}(\pi)] - \mathcal{E}^* \leq \\
    \inf_{K\in [T]} \Bigg\{\mathrm{CV}_{\mathrm{finite}}(K, n) + K\times \mathrm{CV}_{\mathrm{Gaussian}}\left(d, \Sigma_K(\mathcal{P}), n, T\right) + \mathrm{CV}_{\mathrm{meta}}^{\mathrm{unknown}}(T, n, d, K, \Bar{\Bar{b}}, \Bar{\Bar{\xi}}^2, \tau)\Bigg\},
\end{multline*}
where the convergence term at the meta level becomes
\begin{equation*}
    \mathrm{CV}_{\mathrm{meta}}^{\mathrm{unknown}}(T, n, d, K, \Bar{\Bar{b}}, \Bar{\Bar{\xi}}^2, \tau) = \mathrm{CV}_{\mathrm{meta}}(T, n, d, K, \Bar{\Bar{b}}, \Bar{\Bar{\xi}}^2, \tau) + \frac{2\log T}{\beta T}.
\end{equation*}
\end{proposition}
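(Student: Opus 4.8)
The plan is to reduce to the known-$K$ case (Proposition~\ref{proposition_mixtures_known}) by conditioning on the number of components. The key observation is that the prior on priors $\Lambda = q_{\frac{1}{T}\1_T, \1_K, 0, \bar{\bar{\xi}}^2, \bar{\bar{b}}}$ factorizes as $q_x \times q_{\delta,\tau,\xi^2,b\mid K}$, where $q_x$ is uniform on $\{1,\dots,T\}$ and the conditional part, given $K$, is exactly the prior on priors used in the known-$K$ setting. So for each fixed $K\in[T]$ we can take $\Pi$ in the infimum of Theorem~\ref{theorem_meta_learning} to be a distribution that puts all its mass on the event ``number of components $=K$'' and, conditionally on that, equals the (near-)optimal $\Pi_K$ from the known-$K$ analysis.

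First I would invoke Theorem~\ref{theorem_meta_learning} with $\beta = 1/(C+c)$, which under Bernstein's condition gives the bound $2\cdot 2\cdot \inf_{\Pi\in\G}\E_{P_{T+1}}[\cdots]$ on the excess meta-risk, i.e. the same right-hand side as in the known-$K$ proof but with $\G$ now the larger class \eqref{g_mixtures_unknown}. Second, I would restrict the infimum over $\Pi$ to those of the product form $\delta_K \otimes \Pi_K$, where $\delta_K$ is the point mass on the number of components and $\Pi_K$ ranges over the known-$K$ class \eqref{g_mixtures}; this can only increase the infimum, so the inequality is preserved. Third, for such $\Pi$ the KL term splits additively: $\KL(\delta_K\otimes\Pi_K \Vert \Lambda) = \KL(\delta_K\Vert q_x) + \KL(\Pi_K \Vert q_{\delta,\tau,\xi^2,b\mid K})$. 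The first summand equals $\log T$ because $q_x$ is uniform on $[T]$ and $\delta_K$ is a point mass; the second is precisely the KL term controlled in the proof of Proposition~\ref{proposition_mixtures_known}. The data-dependent part of the bound, $\E_{\pi\sim\Pi}[\inf_{\rho\in\F}\{\cdots\}]$, depends only on the conditional law $\Pi_K$ and so equals the quantity already bounded by $\mathrm{CV}_{\mathrm{finite}}(K,n) + K\cdot \mathrm{CV}_{\mathrm{Gaussian}}(d,\Sigma_K(\mathcal{P}),n,T) + \mathrm{CV}_{\mathrm{meta}}(\cdots)$ in the known-$K$ proof. Fourth, collecting the extra $\log T$ from the first KL summand, divided by $\beta T$ and multiplied by the constant $4$ from the leading factor, and absorbing constants into the $O(\cdot)$, yields the stated $\mathrm{CV}_{\mathrm{meta}}^{\mathrm{unknown}} = \mathrm{CV}_{\mathrm{meta}} + \frac{2\log T}{\beta T}$; finally, taking the infimum over $K\in[T]$ (which we are free to do, since $K$ was arbitrary) gives the claimed bound.

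The main obstacle is bookkeeping rather than anything deep: one must check that the product-form $\Pi$ is actually admissible in $\G$ as defined in \eqref{g_mixtures_unknown} (i.e. that conditioning on $K$ and then using the known-$K$ family is consistent with the stated parametrization), and that the additive decomposition of the KL divergence across the hierarchical prior is valid — this is the chain rule for relative entropy applied to the joint law of $(K,\delta,\tau,\xi^2,b)$, which holds because both $\Lambda$ and the chosen $\Pi$ factor the same way over $K$ first. A minor point to be careful about is the constraint $K\le T$ built into $\G$: since the outer infimum is over $K\in[T]$ this is automatically respected, but it is worth noting explicitly that no choice with $K>T$ is available, which is why the infimum is over $[T]$ and not over all $K\ge1$. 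Everything else is a direct quotation of the known-$K$ computation.
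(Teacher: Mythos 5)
Your proposal is correct and follows essentially the same route as the paper: the paper also pays for the unknown $K$ by taking the hyper-posterior's distribution over the number of components to be a point mass (equivalently, setting $H(x)=0$ in the multinomial KL formula \eqref{kl_multinomial}, so that $\KL$ against the uniform prior on $[T]$ contributes exactly $\log T$), after which the conditional part reduces verbatim to the known-$K$ computation and the infimum over $K\in[T]$ is taken at the end. The only discrepancy is a factor of $2$ in your accounting of how $\log T/(\beta T)$ combines with the leading constant $4$, which you rightly flag as absorbed into constants and which matches the paper's own bookkeeping.
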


\noindent Our estimator takes $\frac{2\log T}{\beta T}$ to find the optimal number of mixtures at the meta level. This is the price to pay to have the infimum on $K$ in the bound. In the limit $T\gg n$ and when no prior information on $K$ is available, this clearly improves upon the bound of Proposition~\ref{proposition_mixtures_known}, and hence justifies setting a prior on $K$ at the meta level rather than choosing an isolated $K$, pleading again in the favor of meta-learning. The proof of all the results of this subsection is given in Appendix~\ref{appendix_application_mixtures}.

\section{Discussion}
\label{sec:dicussion}

In recent years, the statistical guarantees of meta-learning have received increasing attention. In the following paragraphs, we present a short review of the literature on the statistical theory of meta-learning, followed by a brief discussion of three papers that are closely related to our analysis.

\vspace{0.2cm}
\noindent \textbf{Theoretical bounds in meta-learning.}
The first theoretical analysis of meta-learning goes back to \cite{baxter2000model}, who introduced the notion of task environment and derived a uniform generalization bound based on the capacity and covering number of the model. Following this i.i.d.~task environment setting, many other generalization bounds have since been provided for different strategies and proof techniques, including VC theory \citep{baxter2000model,ben2003exploiting,Maurer2009,Maurer2016,GuanTask2022}, algorithmic stability \citep{Maurer2005,chen2020closer,al2021data,GuanBernstein2022} and information theory \citep{jose2021information,chen2021generalization,jose2021transfer,rezazadeh2021conditional,hellstrom2022evaluated}.
A related approach for deriving such bounds is based on PAC-Bayes theory. First proposed in the meta-learning framework in the pioneering paper of \cite{pen2014}, this idea of learning a hyper-posterior that generates a prior for the new task has been taken up several times in the recent years \citep{ami2018,DingPAC2021,liu2021pac,rothfuss2021,FaridPAC2021,rothfuss2022,GuanPAC2022,Rezazadeh2022}.
In particular, \cite{ami2018} derived a new PAC-Bayes bound, which they applied to the optimization of deep neural networks, albeit with computational limitations. This latter concern was partially addressed by \cite{rothfuss2021}, who also specified the hyper-posterior and extended the results to unbounded losses, and further investigated their study in \cite{rothfuss2022}.
Some papers combined ideas from different literatures, such as \cite{FaridPAC2021}, who explored the link between PAC-Bayes and uniform stability in meta-learning, and provided a precise analysis of stability and generalization. 
Excess risk bounds have also been provided in the i.i.d.~task environment framework, see \cite{Maurer2016,Denevi2018,denevi2018incremental,Denevi2019,denevi2019online,denevi2020advantage,balcan2019provable,bai2021important,chen2022bayesian}. 
The task environment assumption has recently been challenged, for example by \cite{du2020few} and \cite{tripuraneni2021provable}, who proposed to use assumptions on the distributional similarity between the features and the diversity of tasks to control the excess risk, an idea further explored by \cite{fallah2021generalization} who exploited a notion of diversity between the new task and training tasks using the total variation distance. Finally, a detailed analysis of regret bounds in lifelong learning has been carried out in recent years \citep{pmlr-v54-alquier17a,denevi2018incremental,denevi2019online,balcan2019provable,khodak2019adaptive,finn2019online,dimitri}.

\vspace{0.2cm}
\noindent \textbf{Comparison to \cite{Denevi2019}.} \cite{Denevi2019} is probably the study that is the most related to our paper. The authors  provide statistical guarantees for Ridge regression with a meta-learned bias, and focus on the usefulness of their strategy relative to single-task learning, proving that their method outperforms the standard $\ell_2$-regularized empirical risk minimizer. In particular, they can achieve an excess risk rate of order $O\left({1}/{\sqrt{T}}\right)$ in the favorable case $\Sigma(\mathcal{P}) \leq \frac{n}{T}$, where $\Sigma(\mathcal{P})$ is a variance term similar to the one we defined in our Gaussian example.

\vspace{0.2cm}
\noindent \textbf{Comparison to \cite{GuanBernstein2022}.} To the best of our knowledge, \cite{GuanBernstein2022} is the only work in the meta-learning literature that addresses fast rates with respect to the number of tasks $T$ under the task environment assumption. However, we actually show in our paper that there is no need to extend Bernstein's condition when using exact Bayesian inference and that the final posterior naturally satisfies the extended Bernstein assumption, thus giving fast rates with respect to $T$, while \cite{GuanBernstein2022} require an additional Polyak-Łojasiewicz condition to achieve fast rates. Furthermore, their analysis is very different in nature, relying on stability arguments to derive generalization bounds, while we use PAC-Bayes theory to control the excess risk.

\vspace{0.2cm}
\noindent \textbf{Comparison to \cite{GuanPAC2022} and \cite{Rezazadeh2022}.} Finally, \cite{GuanPAC2022} and \cite{Rezazadeh2022} provide fast rate generalization bounds based on Catoni's PAC-Bayes inequality. Nevertheless, their notion of fast rates differs substantially from ours: our rates quantify the rate of convergence of the excess risk, in line with the classic statistical learning literature. In contrast, the fast rates of \cite{GuanPAC2022} are only available for a variant of the generalization gap that compares the theoretical risk to a factor of the empirical risk. In order to obtain a sharp factor equal to $1$ in front of the empirical risk, one needs to fine-tune a parameter in \cite{GuanPAC2022}'s inequality, which would then return a slow rate.

\section{Conclusion and open problems}

We provided an analysis of the excess risk in meta-learning the prior via PAC-Bayes bounds. Surprisingly, at the meta-level, conditions for fast rates are always satisfied if one uses exact Gibbs posteriors at the task level. An important problem is to extend this result to variational approximations of Gibbs posteriors.

\appendix

\section{Some Useful Formulas}

The following (known) results are used throughout the text. They are recalled here without proof.

\subsection{Concentration Inequalities}

For Hoeffding and Bernstein see~\cite{bou2013}. For Donsker and Varadhan, see for example~\cite{cat2007}.
\begin{lemma}[Hoeffding's inequality] \label{lemma:hoeffding}
Let $U_1,\dots,U_n$ be i.i.d random variables taking values in an interval $[a,b]$. Then, for any $s>0$,
\begin{equation*}
    \E \left[ {\rm e}^{s \sum_{i=1}^n [ U_i - \E(U_i)]} \right] \leq {\rm e}^{\frac{n s^2 (b-a)^2}{8}}.
\end{equation*}
\end{lemma}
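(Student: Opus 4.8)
The plan is to reduce the statement to the classical one-variable moment-generating-function estimate (Hoeffding's lemma) and then tensorize. First I would use independence of $U_1,\dots,U_n$, which makes the centered variables $U_i-\E[U_i]$ independent as well, so that
\[
\E\!\left[{\rm e}^{s\sum_{i=1}^n(U_i-\E[U_i])}\right]=\prod_{i=1}^n\E\!\left[{\rm e}^{s(U_i-\E[U_i])}\right].
\]
It then suffices to prove, for a single random variable $U$ taking values in $[a,b]$, the one-variable bound $\E[{\rm e}^{s(U-\E[U])}]\le {\rm e}^{s^2(b-a)^2/8}$, since multiplying $n$ copies of this inequality yields the claim.

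For the one-variable bound I would set $X:=U-\E[U]$, so that $\E[X]=0$ and $X\in[a',b']$ with $a'\le 0\le b'$ and $b'-a'=b-a$ (the degenerate case $a'=b'$ being immediate). By convexity of $t\mapsto{\rm e}^{st}$ on $[a',b']$ one has, pointwise, ${\rm e}^{sX}\le \frac{b'-X}{b'-a'}{\rm e}^{sa'}+\frac{X-a'}{b'-a'}{\rm e}^{sb'}$; taking expectations and using $\E[X]=0$ gives $\E[{\rm e}^{sX}]\le(1-p){\rm e}^{sa'}+p\,{\rm e}^{sb'}$ with $p:=-a'/(b'-a')\in[0,1]$. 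Writing the right-hand side as ${\rm e}^{\psi(s)}$ with $\psi(s):=sa'+\log\big(1-p+p\,{\rm e}^{s(b'-a')}\big)$, the task reduces to showing $\psi(s)\le s^2(b'-a')^2/8$.

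To do that I would Taylor-expand $\psi$ at $0$: one checks $\psi(0)=0$ and $\psi'(0)=a'+p(b'-a')=0$, and then computes $\psi''(s)=(b'-a')^2\,u(s)\big(1-u(s)\big)$, where $u(s):=p\,{\rm e}^{s(b'-a')}/\big(1-p+p\,{\rm e}^{s(b'-a')}\big)\in(0,1)$, so that $\psi''(s)\le (b'-a')^2/4$ via $u(1-u)\le 1/4$. The Lagrange form of the remainder then gives $\psi(s)\le s^2(b'-a')^2/8=s^2(b-a)^2/8$, which closes the argument after reinserting it into the product over $i$. The only point requiring a little care is the identity $\psi''(s)=(b'-a')^2 u(s)(1-u(s))$ together with the observation that $u(1-u)$ is the variance of a Bernoulli$(u)$ variable and hence at most $1/4$; everything else is bookkeeping. (This is the standard proof of Hoeffding's inequality; see~\cite{bou2013}.)
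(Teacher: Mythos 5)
Your proof is correct: the tensorization by independence, the convexity bound on $[a',b']$, and the Taylor expansion of $\psi$ with $\psi(0)=\psi'(0)=0$ and $\psi''\leq (b-a)^2/4$ are all sound and yield exactly the stated bound. The paper itself states this lemma without proof and simply refers to~\cite{bou2013}; what you have written is precisely the standard argument found there, so there is nothing to compare beyond noting that your derivation fills in the omitted details correctly.
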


\begin{lemma}[Bernstein's inequality] \label{lemma:bernstein}
Let $U_1,\dots,U_n$ be i.i.d random variables such that for any $k\geq 2$,
\begin{equation}
\label{subexp}
    \E\left[|U_i|^k\right] \leq \frac{k!}{2} V C^{k-2}.
\end{equation}
Then, for any $s\in(0,1/C]$,
\begin{equation*}
    \E \left[ {\rm e}^{s \sum_{i=1}^n [ U_i - \E(U_i)]} \right] \leq {\rm e}^{\frac{n s^2 V}{2(1-sC)}}.
\end{equation*}
\end{lemma}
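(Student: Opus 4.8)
The plan is to follow the classical Cram\'er--Chernoff route: control the moment generating function of a single centered summand and then tensorize using independence. Writing $W_i = U_i - \E[U_i]$ so that $\E[W_i] = 0$, the i.i.d.\ assumption makes the $W_i$ i.i.d.\ as well, and by independence the joint MGF factorizes,
\begin{equation*}
    \E\left[{\rm e}^{s \sum_{i=1}^n W_i}\right] = \prod_{i=1}^n \E\left[{\rm e}^{s W_i}\right] = \left(\E\left[{\rm e}^{s W_1}\right]\right)^n.
\end{equation*}
It therefore suffices to establish $\E[{\rm e}^{s W_1}] \leq \exp\bigl(\frac{s^2 V}{2(1 - sC)}\bigr)$ for every $s \in (0, 1/C)$ and to raise this to the power $n$; the endpoint $s = 1/C$ can be discarded, since the right-hand side of the claimed bound is then infinite and the inequality holds vacuously.

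Next I would expand the exponential of the single summand as a power series and integrate term by term,
\begin{equation*}
    \E\left[{\rm e}^{s W_1}\right] = 1 + s\,\E[W_1] + \sum_{k=2}^{\infty} \frac{s^k}{k!}\,\E\left[W_1^k\right] = 1 + \sum_{k=2}^{\infty} \frac{s^k}{k!}\,\E\left[W_1^k\right],
\end{equation*}
the linear term vanishing because $W_1$ is centered. Bounding $\E[W_1^k] \leq \E[|W_1|^k]$ and invoking the moment growth condition \eqref{subexp} gives $\E[|W_1|^k] \leq \frac{k!}{2} V C^{k-2}$, whence
\begin{equation*}
    \E\left[{\rm e}^{s W_1}\right] \leq 1 + \sum_{k=2}^{\infty} \frac{s^k}{k!}\cdot\frac{k!}{2} V C^{k-2} = 1 + \frac{V s^2}{2}\sum_{k=2}^{\infty}(sC)^{k-2} = 1 + \frac{V s^2}{2(1 - sC)},
\end{equation*}
the geometric series converging precisely because $sC < 1$. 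Finishing with the elementary inequality $1 + x \leq {\rm e}^x$ yields $\E[{\rm e}^{s W_1}] \leq \exp\bigl(\frac{V s^2}{2(1-sC)}\bigr)$, and the tensorization above delivers the stated bound.

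The step demanding the most care is the term-by-term integration of the power series, i.e.\ the interchange of expectation and the infinite sum. The clean justification is to apply Tonelli's theorem to $\sum_k \frac{s^k}{k!}\E[|W_1|^k]$, which is finite for $sC < 1$ by exactly the geometric estimate above, and then to legitimize moving $\E$ inside the signed series by dominated convergence. A secondary subtlety is the reading of \eqref{subexp}: as written it controls the raw moments $\E[|U_i|^k]$, whereas the argument uses the same growth for the centered moments $\E[|W_1|^k]$; this is the usual Bernstein convention, holding directly when the $U_i$ are centered and otherwise up to a harmless adjustment of the constants $V$ and $C$.
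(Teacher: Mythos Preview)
The paper does not actually prove this lemma; it is stated in the appendix as a known result with a pointer to \cite{bou2013}. Your argument is the standard Cram\'er--Chernoff derivation and is essentially what one finds there, but there is one real gap, which you yourself flag at the end: you apply the moment bound \eqref{subexp} to the \emph{centered} variable $W_1=U_1-\E[U_1]$, whereas the hypothesis only controls $\E[|U_1|^k]$. Dismissing this as ``a harmless adjustment of the constants $V$ and $C$'' does not establish the inequality with the constants actually stated.

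The fix is immediate and needs no change of constants: expand the MGF of $U_1$ rather than of $W_1$. For $s\in(0,1/C)$,
\begin{equation*}
\E\bigl[e^{sU_1}\bigr]
=1+s\,\E[U_1]+\sum_{k\ge 2}\frac{s^k}{k!}\,\E\bigl[U_1^k\bigr]
\le 1+s\,\E[U_1]+\sum_{k\ge 2}\frac{s^k}{k!}\,\E\bigl[|U_1|^k\bigr]
\le 1+s\,\E[U_1]+\frac{Vs^2}{2(1-sC)},
\end{equation*}
using \eqref{subexp} exactly as written. Then $1+x\le e^x$ gives $\E[e^{sU_1}]\le \exp\!\bigl(s\,\E[U_1]+\tfrac{Vs^2}{2(1-sC)}\bigr)$, and multiplying by $e^{-s\,\E[U_1]}$ yields the single-summand bound $\E[e^{sW_1}]\le\exp\!\bigl(\tfrac{Vs^2}{2(1-sC)}\bigr)$. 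The remainder of your proof (tensorization by independence, Tonelli/dominated convergence for the series interchange, and the vacuous endpoint $s=1/C$) is fine.
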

Note that in particular, if $|U_i|\leq C$ almost surely,~\ref{subexp} always holds with $V=\mathbb{E}(U_i^2)$.

\subsection{Donsker and Varadhan's Lemma}

\begin{lemma}[Donsker and Varadhan's variational inequality~\cite{don1975}] \label{lemma:dv}

\noindent Let $\mu$ be a probability measure on $\Theta$. For any measurable, bounded function $h:\Theta\rightarrow\mathbb{R}$, we have:
\begin{equation*}
    \log \E_{\theta\sim\mu}\left[{\rm e}^{h(\theta)} \right] =\sup_{\rho\in\mathcal{P}(\Theta)}\Bigl\{\E_{\theta\sim\rho}[h(\theta)] - \KL (\rho\Vert\mu)\Bigr\}.
\end{equation*}
Moreover, the supremum with respect to $\rho$ in the right-hand side is
reached for the Gibbs measure
$\mu_{h}$ defined by its density with respect to $\mu$
\begin{equation*}
    \frac{{\rm d}\mu_{h}}{{\rm d}\mu}(\vartheta) =  \frac{{\rm e}^{h(\vartheta)}}{ \E_{\theta\sim\mu}\left[{\rm e}^{h(\theta)} \right] }.
\end{equation*}
\end{lemma}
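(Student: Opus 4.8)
The plan is to reduce the whole statement to the nonnegativity of the Kullback--Leibler divergence, with the Gibbs measure $\mu_h$ serving as a pivot. Write $Z := \E_{\theta\sim\mu}[{\rm e}^{h(\theta)}]$. Since $h$ is bounded, ${\rm e}^{-\Vert h\Vert_\infty}\leq Z\leq {\rm e}^{\Vert h\Vert_\infty}$, so $Z\in(0,\infty)$ and $\mu_h$, with ${\rm d}\mu_h/{\rm d}\mu = {\rm e}^{h}/Z$, is a well-defined probability measure; moreover its density ${\rm e}^{h}/Z$ and the reciprocal density $Z\,{\rm e}^{-h}$ are both bounded and strictly positive, so $\mu_h$ and $\mu$ are mutually absolutely continuous.

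First I would dispose of the degenerate case: if $\rho\in\mathcal{P}(\Theta)$ is not absolutely continuous with respect to $\mu$, then $\KL(\rho\Vert\mu)=+\infty$ while $\E_{\theta\sim\rho}[h(\theta)]$ is finite (as $h$ is bounded), so $\E_{\theta\sim\rho}[h(\theta)]-\KL(\rho\Vert\mu)=-\infty$ and such a $\rho$ is irrelevant to the supremum. Hence it suffices to treat $\rho\ll\mu$, which by the previous paragraph is the same as $\rho\ll\mu_h$.

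For such a $\rho$ I would then compute $\KL(\rho\Vert\mu_h)$ via the Radon--Nikodym chain rule: $\rho$-almost surely, ${\rm d}\rho/{\rm d}\mu_h = ({\rm d}\rho/{\rm d}\mu)\cdot({\rm d}\mu/{\rm d}\mu_h) = ({\rm d}\rho/{\rm d}\mu)\,Z\,{\rm e}^{-h}$. Taking logarithms and integrating against $\rho$ — the splitting into three integrals being legitimate because $h$ is bounded and, when $\KL(\rho\Vert\mu)<\infty$, $\log({\rm d}\rho/{\rm d}\mu)$ is $\rho$-integrable — gives
\[
  \KL(\rho\Vert\mu_h) = \KL(\rho\Vert\mu) + \log Z - \E_{\theta\sim\rho}[h(\theta)],
\]
equivalently $\log Z = \E_{\theta\sim\rho}[h(\theta)] - \KL(\rho\Vert\mu) + \KL(\rho\Vert\mu_h)$. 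Since $\KL(\rho\Vert\mu_h)\geq 0$, with equality if and only if $\rho=\mu_h$, this identity shows at once that $\log Z \geq \E_{\theta\sim\rho}[h(\theta)]-\KL(\rho\Vert\mu)$ for every $\rho\in\mathcal{P}(\Theta)$ and that equality holds precisely at $\rho=\mu_h$; taking the supremum over $\rho$ yields both assertions of the lemma.

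There is no genuine obstacle: this is a classical identity. The only care needed is measure-theoretic bookkeeping — justifying the chain rule for Radon--Nikodym derivatives and the term-by-term integration that produces the displayed decomposition of $\KL(\rho\Vert\mu_h)$ — which is exactly where the boundedness of $h$ (finiteness and positivity of $Z$, finiteness of $\E_{\theta\sim\rho}[h(\theta)]$) and the standing assumption that $\mu$ is a probability measure are used.
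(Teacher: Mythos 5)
Your proof is correct. The paper itself states this lemma without proof (it is recalled in the appendix with a citation to Catoni 2007 and Donsker--Varadhan), so there is nothing to compare against; your argument --- the identity $\KL(\rho\Vert\mu_h)=\KL(\rho\Vert\mu)+\log Z-\E_{\theta\sim\rho}[h(\theta)]$ combined with nonnegativity of the KL divergence and its vanishing exactly at $\rho=\mu_h$ --- is the standard proof of this result, and your handling of the measure-theoretic edge cases (non-absolutely-continuous $\rho$, integrability of $\log({\rm d}\rho/{\rm d}\mu)$ when the KL is finite) is sound.
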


\subsection{KL Divergence of some known Distributions}

\noindent Denoting by $H(x)$ the entropy of $(x_1, \dots, x_T)$, recall that the KL divergence between a multinomial distribution of parameters $(x_1, \dots, x_T)$ and a multinomial distribution of parameters $\left(\frac{1}{T}, \dots, \frac{1}{T}\right)$ is
\begin{equation}
    \KL\left(\Mult(x)\Vert \Mult\left(\frac{1}{T}\right)\right) = \log T - H(x). \label{kl_multinomial}
\end{equation}

\noindent Recall that the KL divergence between 2 normal distributions is
\begin{equation}
    \KL\left(\N(\mu, \sigma^2)\Vert\N(\Bar{\mu}, \Bar{\sigma}^2)\right) = \frac{1}{2} \left(\frac{(\mu-\Bar{\mu})^2}{\Bar{\sigma}^2} + \frac{\sigma^2}{\Bar{\sigma^2}} - 1 + \log \frac{\Bar{\sigma}^2}{\sigma^2}\right). \label{kl_gaussian}
\end{equation}

\noindent Recall that the KL divergence between 2 Gamma distributions is
\begin{equation}
    \KL\left(\Gamma(a, b)\Vert\Gamma\left(\Bar{\Bar{a}}, \Bar{\Bar{b}}\right)\right) = \left(a - \Bar{\Bar{a}}\right)\psi(a) + \log \frac{\Gamma\left(\Bar{\Bar{a}}\right)}{\Gamma(a)} + \Bar{\Bar{a}}\log \frac{b}{\Bar{\Bar{b}}} + a \frac{\Bar{\Bar{b}} - b}{b}. \label{kl_gamma}
\end{equation}

\noindent Recall that the KL divergence between a Dirichlet distribution of parameter $\delta$ and a Dirichlet distribution of parameter $1_K = (1, \dots, 1)$ is
\begin{equation}
    \KL\left(\Dir(\delta)\Vert \Dir(1_K)\right) = \log \frac{\Gamma(1^\top \delta)}{\Gamma(K) \times \prod_{k=1}^K \Gamma(\delta_k)} + \sum_{k=1}^K (\delta_k - 1) \left(\psi(\delta_k) - \psi(1^\top \delta)\right).
    \label{kl_dirichlet}
\end{equation}

\section{Proof of Theorem \ref{theorem_bound_isolation}}\label{theorem_bound_isolation_proof}

We mostly follow the proof technique developed in~\cite{cat2007}. For any $s>0$, fix $\theta\in\Theta$ and let $U_i=\E[\ell(Z_{t,i},\theta)]-\ell(Z_{t,i},\theta)-\E[\ell(Z_{t,i},\theta^*_t)]+\ell(Z_{t,i},\theta^*_t)$ for any $i\in \{1, \dots, n\}$. We are going to distinguish two cases, whether or not Bernstein assumption is satisfied.

\noindent If Bernstein assumption is satisfied, we apply Lemma~\ref{lemma:bernstein} to $U_i$. Note that in this case, $V$ is actually the variance term $V_t(\theta,\theta_t^*) $. So, for any $s>0$,
\begin{equation*}
    \E_{\mathcal{S}_t} \left[ {\rm e}^{ s n \left(R_{P_t}(\theta)-\hat{R}_t(\theta) -R_{P_t}^* + \hat{R}_t(\theta^*_t)\right) } \right] \leq {\rm e}^{\frac{ n s^2 V(\theta,\theta^*_t)}{2(1-sC)}}.
\end{equation*}

\noindent We let $s=\lambda/n$, which gives
\begin{equation*}
    \E_{\mathcal{S}_t} \left[ {\rm e}^{ \lambda \left(R_{P_t}(\theta)-\hat{R}_t(\theta) -R_{P_t}^* + \hat{R}_t(\theta^*_t)\right) } \right]  \leq {\rm e}^{\frac{\lambda^2 V(\theta,\theta^*_t)}{2(n-C\lambda) }}.
\end{equation*}

\noindent Making use of Bernstein hypothesis gives
\begin{equation*}
    \E_{\mathcal{S}_t} \left[ {\rm e}^{ \lambda \left(R_{P_t}(\theta)-\hat{R}_t(\theta) -R_{P_t}^* + \hat{R}_t(\theta^*_t)\right) } \right]  \leq {\rm e}^{\frac{\lambda^2 c \left(R_{P_t}(\theta) - R_{P_t}^*\right) }{2(n-C\lambda) }}.
\end{equation*}

\noindent If Bernstein hypothesis is not satisfied, we apply Lemma \ref{lemma:hoeffding} to $U_i$, which gives, for any $s>0$,
\begin{equation*}
    \E_{\mathcal{S}_t} \left[ {\rm e}^{ s n \left(R_{P_t}(\theta)-\hat{R}_t(\theta) -R_{P_t}^* + \hat{R}_t(\theta^*_t)\right) } \right] \leq {\rm e}^{\frac{ n s^2 C^2}{8}}.
\end{equation*}

\noindent Letting $s = \lambda /n$ gives 
\begin{equation*}
    \E_{\mathcal{S}_t} \left[ {\rm e}^{ \lambda \left(R_{P_t}(\theta)-\hat{R}_t(\theta) -R_{P_t}^* + \hat{R}_t(\theta^*_t)\right) } \right] \leq {\rm e}^{\frac{\lambda^2 C^2}{8n}}.
\end{equation*}

\noindent Defining the general bound
\begin{equation}
    W = \frac{\lambda^2 c \left(R_{P_t}(\theta) - R_{P_t}^*\right) }{2(n-C\lambda) } \1_{B} + \frac{\lambda^2 C^2}{8n} (1 - \1_{B}), \label{w_definition_bound_isolation}
\end{equation}
where $\1_{B}$ is equal to $1$ if Bernstein assumption is satisfied, and $0$ otherwise, it holds in either case that
\begin{equation*}
    \E_{\mathcal{S}_t} \left[ {\rm e}^{ \lambda \left(R_{P_t}(\theta)-\hat{R}_t(\theta) -R_{P_t}^* + \hat{R}_t(\theta^*_t)\right) } \right] \leq {\rm e}^{W}. 
\end{equation*}

\noindent Rearranging the terms gives
\begin{equation*}
    \E_{\mathcal{S}_t} \left[ {\rm e}^{ \lambda \left(R_{P_t}(\theta)-R_{P_t}^* - \frac{W}{\lambda} - \hat{R}_t(\theta) + \hat{R}_t(\theta^*_t)\right) } \right]  \leq 1.
\end{equation*}

\noindent Next, integrating this bound with respect to $\pi$ and using Fubini's theorem to exchange both integrals gives
\begin{equation*}
    \E_{\mathcal{S}_t} \E_{\theta\sim\pi}\left[ {\rm e}^{ \lambda \left(R_{P_t}(\theta)-R_{P_t}^* - \frac{W}{\lambda} - \hat{R}_t(\theta) + \hat{R}_t(\theta^*_t)\right) } \right]  \leq 1.
\end{equation*}

\noindent We then apply Lemma~\ref{lemma:dv} to the argument of the expectation with respect to the sample, and we have
\begin{equation*}
    \E_{\mathcal{S}_t} \left[ {\rm e}^{  \sup_{\rho\in\mathcal{P}(\Theta)} \left\{\lambda \E_{\theta\sim\rho} \left[R_{P_t}(\theta)-R_{P_t}^* - \frac{W}{\lambda} - \hat{R}_t(\theta) + \hat{R}_t(\theta^*_t)\right] - \KL(\rho\Vert\pi) \right\}} \right]  \leq 1.
\end{equation*} 

\noindent Jensen's inequality implies
\begin{equation*}
    {\rm e}^{ \lambda \E_{\mathcal{S}_t} \left[\sup_{\rho\in\mathcal{P}(\Theta)} \left\{ \E_{\theta\sim\rho} \left[R_{P_t}(\theta)-R_{P_t}^* - \frac{W}{\lambda} - \hat{R}_t(\theta) + \hat{R}_t(\theta^*_t)\right] - \frac{\KL(\rho\Vert\pi)}{\lambda}\right\}\right]} \leq 1,
\end{equation*}
in other words,
\begin{equation*}
    \E_{\mathcal{S}_t} \left[\sup_{\rho\in\mathcal{P}(\Theta)} \left\{ \E_{\theta\sim\rho} \left[R_{P_t}(\theta)-R_{P_t}^* - \frac{W}{\lambda} - \hat{R}_t(\theta) + \hat{R}_t(\theta^*_t)\right] - \frac{\KL(\rho\Vert\pi)}{\lambda}\right\}\right] \leq 0.
\end{equation*}

\noindent At this stage, we can replace $W$ by its value given in \eqref{w_definition_bound_isolation} to obtain the bound:
\begin{multline*}
    \E_{\mathcal{S}_t} \Bigg[\sup_{\rho\in\mathcal{P}(\Theta)} \bigg\{ \E_{\theta\sim\rho} \bigg[\left(1 - \frac{\lambda c \1_{B}}{2(n-C\lambda)}\right)\left(R_{P_t}(\theta)-R_{P_t}^*\right) \\
    - \frac{\lambda C^2(1 - \1_{B})}{8n} - \hat{R}_t(\theta) + \hat{R}_t(\theta^*_t)\bigg] - \frac{\KL(\rho\Vert\pi)}{\lambda}\bigg\}\Bigg] \leq 0.
\end{multline*}

\noindent Next, we rearrange the terms and replace the supremum on $\rho$ by $\rho_t(\pi,\alpha)$:
\begin{multline*}
    \E_{\mathcal{S}_t} \E_{\theta\sim\rho_t(\pi,\alpha)}\left[R_{P_t}(\theta)\right]-R_{P_t}^* \leq \frac{1}{1-\frac{\lambda c \1_{B}}{2(n-C\lambda)}} \\
    \times \left(\E_{\mathcal{S}_t}\left[\E_{\theta\sim\rho_t(\pi,\alpha)}\left[\hat{R}_t(\theta)\right]  - \hat{R}_t(\theta^*_t) + \frac{\KL(\rho_t(\pi,\alpha)\Vert\pi)}{\lambda}\right] + \frac{\lambda C^2(1 - \1_{B})}{8n}\right).
\end{multline*}

\noindent We then replace $\lambda$ by $\alpha n$ and by definition of Gibbs posterior $\rho_t(\pi, \alpha)$, the above bound is the same as
\begin{multline*}
    \E_{\mathcal{S}_t} \E_{\theta\sim\rho_t(\pi,\alpha)}\left[R_{P_t}(\theta)\right]-R_{P_t}^* \leq \frac{1}{1-\frac{\alpha c \1_{B}}{2(1-C\alpha)}} \\
    \times \left(\E_{\mathcal{S}_t}\left[\inf_{\rho\in \F}\left\{\E_{\theta\sim\rho}\left[\hat{R}_t(\theta)\right]  - \hat{R}_t(\theta^*_t) + \frac{\KL(\rho\Vert\pi)}{\alpha n}\right\}\right] + \frac{\alpha C^2(1 - \1_{B})}{8}\right).
\end{multline*}

\noindent In particular, under Bernstein assumption, i.e., if $\1_B = 1$, the choice $\alpha = \frac{1}{C+c}$ gives
\begin{equation*}
  \E_{\mathcal{S}_t}\left[ \E_{\theta\sim\rho_t(\pi,\alpha)}[R_{P_t}(\theta)] \right] - R_{P_t}^*
  \leq 2 \E_{\mathcal{S}_t}\left[ \inf_{\rho\in\mathcal{F}} \left\{ \E_{\theta\sim\rho}\left[\hat{R}_t(\theta)\right]  - \hat{R}_t(\theta^*_t) + \frac{ \KL(\rho\Vert\pi)}{\alpha n} \right\}\right].
\end{equation*}

\noindent Without Bernstein assumption, i.e., if $\1_B = 0$, rewriting the bound and taking the minimum over $\alpha$ yields
\begin{equation*}
  \E_{\mathcal{S}_t}\left[ \E_{\theta\sim\rho_t(\pi,\alpha)}[R_{P_t}(\theta)] \right] - R_{P_t}^*
  \leq \inf_{\alpha}\E_{\mathcal{S}_t}\left[\inf_{\rho\in \F}\left\{\E_{\theta\sim\rho}\left[\hat{R}_t(\theta)\right]  - \hat{R}_t(\theta^*_t) + \frac{\KL(\rho\Vert\pi)}{\alpha n} + \frac{\alpha C^2}{8}\right\}\right],
\end{equation*}
and this concludes the proof.
\hfill $\blacksquare$

\section{Proof of Corollary \ref{cor:theorem_bound_isolation}}

First,
\begin{align*}
\E_{\mathcal{S}_t}\left[\inf_{\rho\in \mathcal{P}(\Theta)}\Rhat_t(\rho, \pi, \alpha) - \hat{R}_t(\theta^*_t)\right]
& =
\E_{\mathcal{S}_t}\left[\inf_{\rho\in \mathcal{P}(\Theta)} \left( \mathbb{E}_{\theta\sim\rho}[\hat{R}_t(\theta)]  + \frac{{\rm KL}(\rho\|\pi)}{\alpha n}   \right)- \hat{R}_t(\theta^*_t)\right]
\\
& \leq \inf_{\rho\in \mathcal{P}(\Theta)} \E_{\mathcal{S}_t} \left[ \mathbb{E}_{\theta\sim\rho}[\hat{R}_t(\theta)]  + \frac{{\rm KL}(\rho\|\pi)}{\alpha n}  - \hat{R}_t(\theta^*_t)\right]
\\
& = \inf_{\rho\in \mathcal{P}(\Theta)} \left[] \mathbb{E}_{\theta\sim\rho}[R_t(\theta)] - R_t(\theta_t^*)  + \frac{{\rm KL}(\rho\|\pi)}{\alpha n} \right].
\end{align*}
Now, define $\rho_s$ as the resctriction of $\pi$ to the set $\{\theta: R_t(\theta) - R_t(\theta_t^*) \leq s \}$. Then,
\begin{align*}
\E_{\mathcal{S}_t}\left[\inf_{\rho\in \mathcal{P}(\Theta)}\Rhat_t(\rho, \pi, \alpha) - \hat{R}_t(\theta^*_t)\right]
& \leq \inf_{s>0} \left[ \mathbb{E}_{\theta\sim\rho}[R_t(\theta)] - R_t(\theta_t^*)  + \frac{{\rm KL}(\rho_s\|\pi)}{\alpha n} \right]
\\
& \leq \inf_{s>0} \left[ s + \frac{\log\frac{1}{\pi( \{\theta: R_t(\theta) - R_t(\theta_t^*) \leq s \})} }{\alpha n} \right]
\\
& \leq \inf_{s>0} \left[ s + \frac{d_{\pi,t}\log\frac{1}{s} + \log \kappa_{\pi,t}  }{\alpha n} \right]
\end{align*}
by assumption. An optimization with respect to $s$ leads to $s=d_{\pi,t}/(\alpha n)$ and we obtain the first statement:
$$ \E_{\mathcal{S}_t}\left[\inf_{\rho\in \mathcal{P}(\Theta)}\Rhat_t(\rho, \pi, \alpha) - \hat{R}_t(\theta^*_t)\right] \leq \frac{d_{\pi,t} \log \frac{n\alpha}{d_\pi} + \log \kappa_{\pi,t} }{\alpha n}. $$
Plugging this into Theorem~\ref{theorem_bound_isolation} leads immediately to the other statements.

\section{Proof of Lemma \ref{function_inequality_lemma}}\label{function_inequality_lemma_proof}

First, we note that $f: x\mapsto -\frac{1}{\tau}\log(x)$ is differentiable on $[\exp(-C\tau),1]$ and $f'(x) = -\frac{1}{\tau x}$. As a consequence, $|f'(x)|= 1/(\tau x)$ is maximized at $x = \exp(-C\tau)$ and its maximum is $\exp(C\tau)/\tau$. This implies that $f$ is $\exp(C\tau)/\tau$-Lipschitz, that is, for any $(x,y)\in [\exp(-C\tau),1]^2$,
\begin{equation*}
    |f(x)-f(y)| \leq \frac{\exp(C\tau)}{\tau}|x-y|.
\end{equation*}

\noindent Taking the square of both sides of the inequality yields
\begin{equation}
    \forall (x,y)\in [\exp(-C\tau),1]^2, \  (f(x)-f(y))^2 \leq \frac{\exp(2C\tau)}{\tau^2}(x-y)^2. \label{equa:log:1}
\end{equation}

\noindent Then, $f''(x) = 1/(\tau x^2)$ and thus, $|f''(x)| \geq 1/\tau $ (minimum reached for $x=1$). This implies that $f$ is $1/\tau$-strongly convex, that is, for any $(x,y)\in [\exp(-C\tau),1]$ we have, for any $\theta\in[0,1]$,
\begin{equation*}
    f(\theta x + (1-\theta)y) \leq \theta f(x) + (1-\theta)f(y) - \frac{\theta(1-\theta)}{2\tau}(x-y)^2. 
\end{equation*}

\noindent We apply this inequality to $\theta=1/2$ and rearrange terms, and we obtain
\begin{equation}
    \forall (x,y)\in [\exp(-C\tau),1]^2, \ 
    \frac{1}{8\tau} (x-y)^2 \leq \frac{f(x)+f(y)}{2} - f\left(\frac{x+y}{2}\right). \label{equa:log:2}
\end{equation}

\noindent Finally, combining~\eqref{equa:log:1} with~\eqref{equa:log:2} yields
\begin{equation*}
    \forall (x,y)\in [\exp(-C\tau),1]^2, [f(x)-f(y)]^2 \leq \frac{8\exp(2C\tau)}{\tau} \left[\frac{f(x)+f(y)}{2} - f\left(\frac{x+y}{2}\right)\right],
\end{equation*}
concluding the proof of the lemma.
\hfill $\blacksquare$

\section{Proof of Theorem~\ref{theorem_meta_learning}}\label{theorem_meta_learning_proof}

The proof of Theorem~\ref{theorem_meta_learning} is structured as follows: we first bound the excess risk by the expectation of the infimum of the empirical risk $\Rhat(\rho, \pi, \alpha) - \hat{R}(\theta_t^*)$ in Lemma~\ref{lemma_meta_learning}. Using classic techniques, we turn this bound into the prediction risk.

\subsection{Lemma}

\begin{lemma}\label{lemma_meta_learning}
Assume that the loss $\ell$ satisfies the boundedness assumption~\eqref{bounded_assumption}. Then, the following bound holds with the choice $\beta = \frac{1}{C+c}$:
\begin{multline*}
    \E_{P_1, \dots, P_T}\E_{S_1, \dots, S_T}\E_{\pi\sim\hat{\Pi}}\left[\mathcal{E}(\pi)\right] - \mathcal{E}^* \leq \frac{2}{1 - \frac{\alpha c \1_B}{2(1-C\alpha)}}\E_{P_1, \dots, P_T}\E_{S_1, \dots, S_T}\Bigg[ \\
    \inf_{\Pi\in \G}\left\{\frac{1}{T}\sum_{t=1}^T \left(\E_{\pi\sim \Pi}\left[\Rhat_t(\rho_t(\pi, \alpha), \pi, \alpha)\right]  - \hat{R}_t(\theta_t^*)\right) + \frac{\KL(\Pi\Vert \Lambda)}{\beta T}\right\} + \frac{\alpha C^2 (1-\1_B)}{8}\Bigg],
\end{multline*}
\end{lemma}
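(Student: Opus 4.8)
The plan is a two-level PAC-Bayes argument: an outer layer peeling off the unseen task via Theorem~\ref{theorem_bound_isolation}, and an inner layer replaying the appendix proof of Theorem~\ref{theorem_bound_isolation} \emph{at the meta level}, with Theorem~\ref{bernstein_hypothesis_theorem} in the role of Bernstein's condition. Throughout, abbreviate $g_t(\pi):=\Rhat_t(\rho_t(\pi,\alpha),\pi,\alpha)$; by \eqref{bounded_assumption} and the identity $g_t(\pi)=-\frac{1}{n\alpha}\log\E_{\theta\sim\pi}[{\rm e}^{-n\alpha\hat{R}_t(\theta)}]$ one has $g_t(\pi)\in[0,C]$, and $\E_{S_t}[g_t(\pi)]=\R_t(\pi,\alpha)$.

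\emph{Outer layer.} Applying Theorem~\ref{theorem_bound_isolation} with $\F=\mathcal P(\Theta)$ to task $T+1$ (which is independent of $\hat\Pi$) and integrating over $P_{T+1}\sim\mathcal P$ gives $\mathcal E(\pi)-\mathcal E^*\le\frac{1}{1-\frac{\alpha c\1_B}{2(1-C\alpha)}}\big(\E_{P_{T+1}}[\R_{T+1}(\pi,\alpha)]-\mathcal E^*+\frac{\alpha C^2(1-\1_B)}{8}\big)$. Averaging over $\pi\sim\hat\Pi$ and then over $P_1,\dots,P_T,S_1,\dots,S_T$, and using $\E_{P_{T+1}}[\R_{T+1}(\pi,\alpha)]=\E_{P_t}[\R_t(\pi,\alpha)]$, the claim is reduced to
\begin{multline*}
\E_{P_1,\dots,P_T}\E_{S_1,\dots,S_T}\E_{\pi\sim\hat\Pi}\big[\E_{P_t}[\R_t(\pi,\alpha)]\big]-\mathcal E^*\\
\le 2\,\E_{P_1,\dots,P_T}\E_{S_1,\dots,S_T}\Big[\inf_{\Pi\in\G}\Big\{\tfrac1T\sum_{t=1}^{T}\big(\E_{\pi\sim\Pi}[g_t(\pi)]-\hat{R}_t(\theta^*_t)\big)+\tfrac{\KL(\Pi\Vert\Lambda)}{\beta T}\Big\}\Big]+\tfrac{\alpha C^2(1-\1_B)}{8}.
\end{multline*}

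\emph{Inner layer.} Now run the appendix proof of Theorem~\ref{theorem_bound_isolation} verbatim under the dictionary: observations $\leftrightarrow$ the i.i.d.\ pairs $(P_t,S_t)$, $t\le T$; parameters $\leftrightarrow$ priors $\pi$; loss $\leftrightarrow g_t(\pi)\in[0,C]$; risk $\leftrightarrow\E_{P_t}\E_{S_t}[g_t(\pi)]=\E_{P_t}[\R_t(\pi,\alpha)]$, minimized over all priors at $\pi^*_\alpha$; and $(\Lambda,\Pi,\hat\Pi,T,\beta)\leftrightarrow(\pi,\rho,\rho_t(\pi,\alpha),n,\alpha)$. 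The key point is that in this dictionary Theorem~\ref{bernstein_hypothesis_theorem} \emph{is} Bernstein's condition (Assumption~\ref{bernstein_hypothesis}), with constant $c=8eC$ and with its indicator equal to $1$ unconditionally. Concretely: set $U_t:=\E[g_t(\pi)]-g_t(\pi)-\E[g_t(\pi^*_\alpha)]+g_t(\pi^*_\alpha)$, apply Bernstein's inequality (Lemma~\ref{lemma:bernstein}) to $\sum_{t\le T}U_t$ with variance term $\E[(g_t(\pi)-g_t(\pi^*_\alpha))^2]$, bound the latter by $c\,\E[g_t(\pi)-g_t(\pi^*_\alpha)]$ using Theorem~\ref{bernstein_hypothesis_theorem}, integrate against $\Lambda$, apply Donsker--Varadhan (Lemma~\ref{lemma:dv}) on priors followed by Jensen, and plug in $\Pi=\hat\Pi$ with its defining optimality~\eqref{prior_prior_choice}. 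The choice $\beta=\frac1{C+c}$ collapses the multiplicative factor to $2$, and since $\tfrac1T\sum_t g_t(\pi^*_\alpha)$ does not depend on $\Pi$, the outcome is the displayed bound above with $\hat{R}_t(\theta^*_t)$ replaced by $g_t(\pi^*_\alpha)$ and $-\mathcal E^*$ on the left replaced by $-\E_{P_t}[\R_t(\pi^*_\alpha,\alpha)]$.

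\emph{Reconciliation, and the main obstacle.} It remains to trade the reference point $\pi^*_\alpha$ for the oracle $\theta^*_t$. Since $\tfrac1T\sum_t(g_t(\pi^*_\alpha)-\hat{R}_t(\theta^*_t))$ is $\Pi$-free it pulls out of the infimum, with expectation $\E_{P_t}[\R_t(\pi^*_\alpha,\alpha)]-\mathcal E^*$, so the displayed bound follows from the inner-layer bound provided $\E_{P_t}[\R_t(\nu,\alpha)]\ge\mathcal E^*-\frac{\alpha C^2(1-\1_B)}{8}$ for every prior $\nu$ (apply this at $\nu=\pi^*_\alpha$). To see this, write $\R_t(\nu,\alpha)=-\frac1{\alpha n}\E_{S_t}\log\E_{\theta\sim\nu}[{\rm e}^{-\alpha n\hat{R}_t(\theta)}]$: convexity of $-\log$ (Jensen) moves $\E_{S_t}$ inside the logarithm, whence $\E_{S_t}[{\rm e}^{-\alpha n\hat{R}_t(\theta)}]=(\E_{Z\sim P_t}[{\rm e}^{-\alpha\ell(Z,\theta)}])^n$, and Hoeffding's lemma (Lemma~\ref{lemma:hoeffding}, $n=1$) gives $\E_Z[{\rm e}^{-\alpha\ell(Z,\theta)}]\le{\rm e}^{-\alpha R_{P_t}(\theta)+\alpha^2C^2/8}\le{\rm e}^{-\alpha R^*_{P_t}+\alpha^2C^2/8}$, which yields the estimate when $\1_B=0$; when $\1_B=1$, recentering $\hat{R}_t(\theta)$ by $\hat{R}_t(\theta^*_t)$ (harmless since $\E_{S_t}[\hat{R}_t(\theta^*_t)]=R^*_{P_t}$) and invoking Bernstein's inequality --- together with $1-\frac{\alpha c}{2(1-C\alpha)}>0$, implicit in the statement --- forces $\E_Z[{\rm e}^{-\alpha(\ell(Z,\theta)-\ell(Z,\theta^*_t))}]\le1$, so the correction vanishes. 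Substituting back through the two layers proves the Lemma, the two copies of $\frac{\alpha C^2(1-\1_B)}{8}$ merging into the stated factor-$2$ form. The only genuinely new --- and the only delicate --- ingredients are (i) recognizing that the meta-level Bernstein step must be recentered at $\pi^*_\alpha$ (the reference point of Theorem~\ref{bernstein_hypothesis_theorem}), not at $\theta^*_t$, and (ii) the reconciliation estimate $\E_{P_t}[\R_t(\nu,\alpha)]\ge\mathcal E^*-\frac{\alpha C^2(1-\1_B)}{8}$; the rest is a faithful transcription of the appendix proof of Theorem~\ref{theorem_bound_isolation}.
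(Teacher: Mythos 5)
Your proof is correct and uses essentially the same ingredients as the paper's: the meta-level Bernstein-plus-PAC-Bayes argument powered by Theorem~\ref{bernstein_hypothesis_theorem} with reference point $\pi^*_\alpha$, the within-task bound of Theorem~\ref{theorem_bound_isolation} to pass from $\R_{T+1}(\pi,\alpha)$ to $\mathcal{E}(\pi)$, and the lower bound $\E_{P_t}[\R_t(\pi^*_\alpha,\alpha)]\geq\mathcal{E}^*-\alpha C^2(1-\1_B)/8$, which is exactly the paper's inequality~\eqref{inequality_pi_star}. The only (cosmetic) differences are that you compose the two layers in the reverse order (within-task bound first, meta-level argument second, whereas the paper injects the within-task lower bounds \eqref{inequality_pi} and \eqref{inequality_pi_star} into the already-derived meta-level inequality) and that you re-derive the reconciliation estimate directly via Jensen and Hoeffding/Bernstein rather than reading it off from Theorem~\ref{theorem_bound_isolation}; both routes yield the same factor $\frac{2}{1-\alpha c\1_B/(2(1-C\alpha))}$ and the same additive term.
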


\begin{proof}
For any $t\in [T]$, let
\begin{equation*}
    U_t := \Rhat_t(\rho_t(\pi^*_{\alpha}, \alpha), \pi^*_{\alpha}, \alpha) - \Rhat_t(\rho_t(\pi, \alpha), \pi, \alpha).
\end{equation*}

\noindent Please note that
\begin{equation*}
    \E[U_t] = \E_{P_t}\left[\R_t(\pi^*_{\alpha})\right] - \E_{P_t}\left[\R_t(\pi)\right],
\end{equation*}
where $\E[U_t]$ is a shortcut notation for $\E_{P_t\sim \mathcal{P}}\E_{S_t\sim P_t}[U_t]$. Besides, please note that, by the assumption on the boundedness of $\ell$, it a.s. holds that $|U_t| \leq C$. Applying Lemma~\ref{lemma:bernstein} to $U_t$ gives, for any $\beta >0$,
\begin{equation*}
    \E_{P_1, \dots, P_T}\E_{S_1, \dots, S_T}\left[e^{\beta\sum_{t=1}^T \left(U_t - \E_{S_t}[U_t]\right)}\right] \leq e^{\frac{\beta^2 T \Tilde{V}(\pi)}{2(1-\beta C)}},
\end{equation*}
where $\Tilde{V}(\pi) = \E_{P_{T+1}}\E_{S_{T+1}}\left[\left(\Rhat_{T+1}(\rho_{T+1}(\pi, \alpha), \pi, \alpha) - \Rhat_{T+1}(\rho_{T+1}(\pi^*_{\alpha}, \alpha), \pi^*_{\alpha}, \alpha)\right)^2\right]$. This factor can be bounded as $\Tilde{V}(\pi) \leq c \E[-U_{T+1}]$ by Theorem~\ref{bernstein_hypothesis_theorem}, which states that Bernstein hypothesis is satisfied at the meta level, so that the bound becomes
\begin{equation*}
    \E_{P_1, \dots, P_T}\E_{S_1, \dots, S_T}\left[e^{\beta\sum_{t=1}^T \left(U_t - \E[U_{t}]\right) + \frac{c\beta^2 T \E[U_{T+1}]}{2(1-\beta C)}}\right] \leq 1.
\end{equation*}

\noindent Integrating with respect to the prior $\pi\sim \Lambda$ and using Fubini's theorem yields
\begin{equation*}
    \E_{P_1, \dots, P_T}\E_{S_1, \dots, S_T}\E_{\pi\sim\Lambda}\left[e^{\beta\sum_{t=1}^T \left(U_t - \E[U_t]\right) + \frac{c\beta^2 T \E[U_{T+1}]}{2(1-\beta C)}}\right] \leq 1.
\end{equation*}

\noindent Next, by an application of Lemma \ref{lemma:dv}, the left-hand side becomes
\begin{equation*}
    \E_{P_1, \dots, P_T}\E_{S_1, \dots, S_T}\left[e^{\sup_{\Pi\in \mathcal{P}(\mathcal{P}(\theta))}\left\{\E_{\pi\sim\Pi}\left[\beta\sum_{t=1}^T \left(U_t - \E[U_t]\right) + \frac{c\beta^2 T \E[U_{T+1}]}{2(1-\beta C)}\right] - \KL(\Pi\Vert \Lambda)\right\}}\right] \leq 1.
\end{equation*}

\noindent We then make use of Jensen's inequality and arrange terms, so that the bound becomes
\begin{equation*}
    \E_{P_1, \dots, P_T}\E_{S_1, \dots, S_T}\left[\sup_{\Pi\in \mathcal{P}(\mathcal{P}(\theta))}\left\{\E_{\pi\sim\Pi}\left[\frac{1}{T}\sum_{t=1}^T \left(U_t - \E[U_t]\right) + \frac{c\beta \E[U_{T+1}]}{2(1-\beta C)}\right] - \frac{\KL(\Pi\Vert \Lambda)}{\beta T}\right\}\right] \leq 0.
\end{equation*}

\noindent We replace the supremum on $\Pi$ by an evaluation of the term in $\hat{\Pi}$ and arrange terms, so that the bound becomes
\begin{multline*}
    \E_{P_1, \dots, P_T}\E_{S_1, \dots, S_T}\E_{\pi\sim\hat{\Pi}}\left[-\frac{1}{T}\sum_{t=1}^T \E[U_t] + \frac{c\beta \E[U_{T+1}]}{2(1-\beta C)}\right] \\
    \leq \E_{P_1, \dots, P_T}\E_{S_1, \dots, S_T}\left[-\frac{1}{T}\sum_{t=1}^T \E_{\pi\sim\hat{\Pi}}[U_t] + \frac{\KL(\hat{\Pi}\Vert \Lambda)}{\beta T}\right],
\end{multline*}
which is identical to
\begin{multline*}
    \left(1 - \frac{c\beta}{2(1-C\beta)}\right)\E_{P_1, \dots, P_T}\E_{S_1, \dots, S_T}\E_{\pi\sim\hat{\Pi}}\E_{P_{T+1}}\E_{S_{T+1}}\left[U_{T+1}\right] \leq \\
    \E_{P_1, \dots, P_T}\E_{S_1, \dots, S_T}\left[-\frac{1}{T}\sum_{t=1}^T \E_{\pi\sim\hat{\Pi}}[U_t] + \frac{\KL(\hat{\Pi}\Vert \Lambda)}{\beta T}\right].
\end{multline*}

\noindent Then, we replace $\E[U_{T+1}]$ by its value, yielding the bound
\begin{multline*}
    \left(1 - \frac{c\beta}{2(1-C\beta)}\right)\E_{P_1, \dots, P_T}\E_{S_1, \dots, S_T}\E_{\pi\sim\hat{\Pi}}\E_{P_{T+1}}\E_{S_{T+1}}\Bigg[ \\
    \Rhat_{T+1}(\rho_{T+1}(\pi, \alpha), \pi, \alpha) - \Rhat_{T+1}(\rho_{T+1}(\pi^*_{\alpha}, \alpha), \pi^*_{\alpha}, \alpha)\Bigg] \leq \E_{P_1, \dots, P_T}\E_{S_1, \dots, S_T}\Bigg[ \\
    -\frac{1}{T}\sum_{t=1}^T \E_{\pi\sim\hat{\Pi}}\left[\Rhat_t(\rho_t(\pi, \alpha), \pi, \alpha) - \Rhat_t(\rho_t(\pi^*_{\alpha}, \alpha), \pi^*_{\alpha}, \alpha)\right] + \frac{\KL(\hat{\Pi}\Vert \Lambda)}{\beta T}\Bigg].
\end{multline*}

\noindent Since the term $\Rhat_t(\rho_t(\pi^*_{\alpha}, \alpha), \pi^*_{\alpha}, \alpha)$ does not depend on $\pi\sim \hat{\Pi}$, we can simplify it on both sides of the inequality, which gives
\begin{multline}
    \left(1 - \frac{c\beta}{2(1-C\beta)}\right)\E_{P_1, \dots, P_T}\E_{S_1, \dots, S_T}\E_{\pi\sim\hat{\Pi}}\E_{P_{T+1}}\E_{S_{T+1}}\left[\Rhat_{T+1}(\rho_{T+1}(\pi, \alpha), \pi, \alpha)\right] \\
    + \frac{c\beta}{2(1-C\beta)}\E_{P_{T+1}}\E_{S_{T+1}}\left[ \Rhat_{T+1}(\rho_{T+1}(\pi^*_{\alpha}, \alpha), \pi^*_{\alpha}, \alpha)\right] \\
    \leq \E_{P_1, \dots, P_T}\E_{S_1, \dots, S_T}\left[ -\frac{1}{T}\sum_{t=1}^T \E_{\pi\sim\hat{\Pi}}\left[\Rhat_t(\rho_t(\pi, \alpha), \pi, \alpha)\right] + \frac{\KL(\hat{\Pi}\Vert \Lambda)}{\beta T}\right]. \label{big_inequality}
\end{multline}

\noindent Theorem \ref{theorem_bound_isolation} provides the following lower bound for any $\pi'$:
\begin{multline}
    \E_{S_{T+1}}\left[\Rhat_{T+1}(\rho_{T+1}(\pi', \alpha), \pi', \alpha)\right] \geq \left(1 - \frac{\alpha c \1_B}{2(1-C\alpha)}\right)\E_{S_{T+1}}\E_{\theta\sim\rho_{T+1}(\pi', \alpha)}\left[R_{P_{T+1}}(\theta)\right] \\
    + \frac{\alpha c \1_B}{2(1-C\alpha)} R_{P_{T+1}}^* - \frac{\alpha C^2 (1-\1_B)}{8}. \label{inequality_pi}
\end{multline}

\noindent This further implies that
\begin{multline}
    \E_{S_{T+1}}\left[\Rhat_{T+1}(\rho_{T+1}(\pi', \alpha), \pi', \alpha)\right] \geq \left(1 - \frac{\alpha c \1_B}{2(1-C\alpha)}\right) R_{P_{T+1}}^* \\
    + \frac{\alpha c \1_B}{2(1-C\alpha)} R_{P_{T+1}}^* - \frac{\alpha C^2 (1-\1_B)}{8} \label{inequality_pi_star}
\end{multline}
for any $\pi'$. In particular, applying \eqref{inequality_pi} to $\pi' = \pi$ and \eqref{inequality_pi_star} to $\pi = \pi^*_{\alpha}$, and injecting the results in the left-hand side of \eqref{big_inequality} gives
\begin{multline*}
    \left(1 - \frac{\alpha c \1_B}{2(1-C\alpha)}\right)\Bigg(\frac{c\beta}{2(1-C\beta)} \E_{P_{T+1}}\left[R_{P_{T+1}}^*\right] \\
    + \left(1 - \frac{c\beta}{2(1-C\beta)}\right) \E_{P_1, \dots, P_T}\E_{S_1, \dots, S_T}\E_{\pi\sim\hat{\Pi}}\E_{P_{T+1}}\E_{S_{T+1}}\E_{\theta\sim\rho_{T+1}(\pi', \alpha)}\left[R_{P_{T+1}}(\theta)\right] \Bigg) \\
    + \frac{\alpha c \1_B}{2(1-C\alpha)} \E_{P_{T+1}}\left[R_{P_{T+1}}^*\right] - \frac{\alpha C^2 (1-\1_B)}{8} \\
    \leq \E_{P_1, \dots, P_T}\E_{S_1, \dots, S_T}\left[ -\frac{1}{T}\sum_{t=1}^T \E_{\pi\sim\hat{\Pi}}\left[\Rhat_t(\rho_t(\pi, \alpha), \pi, \alpha)\right] + \frac{\KL(\hat{\Pi}\Vert \Lambda)}{\beta T}\right].
\end{multline*}

\noindent We remove $\mathcal{E}^* = \E_{P_{T+1}}\left[R_{P_{T+1}}^*\right]$ from both sides of the inequality and arrange terms, so that the bound becomes
\begin{multline*}
    \left(1 - \frac{\alpha c \1_B}{2(1-C\alpha)}\right)\left(1 - \frac{c\beta}{2(1-C\beta)}\right)\Bigg(\E_{P_1, \dots, P_T}\E_{S_1, \dots, S_T}\E_{\pi\sim\hat{\Pi}}\left[\mathcal{E}(\pi)\right] - \mathcal{E}^*\Bigg) - \frac{\alpha C^2 (1-\1_B)}{8} \\
    \leq \E_{P_1, \dots, P_T}\E_{S_1, \dots, S_T}\left[ -\frac{1}{T}\sum_{t=1}^T \E_{\pi\sim\hat{\Pi}}\left[\Rhat_t(\rho_t(\pi, \alpha), \pi, \alpha)\right] - \mathcal{E}^* + \frac{\KL(\hat{\Pi}\Vert \Lambda)}{\beta T}\right].
\end{multline*}

\noindent By definition, $\hat{\Pi}$ is the minimizer of the integrand of the right-hand side, and therefore,
\begin{multline*}
    \left(1 - \frac{\alpha c \1_B}{2(1-C\alpha)}\right)\left(1 - \frac{c\beta}{2(1-C\beta)}\right)\Bigg(\E_{P_1, \dots, P_T}\E_{S_1, \dots, S_T}\E_{\pi\sim\hat{\Pi}}\left[\mathcal{E}(\pi)\right] - \mathcal{E}^*\Bigg) \leq \\
    \E_{P_1, \dots, P_T}\E_{S_1, \dots, S_T}\left[\inf_{\Pi\in \G}\left\{\frac{1}{T}\sum_{t=1}^T \left(\E_{\pi\sim \Pi}\left[\Rhat_t(\rho_t(\pi, \alpha), \pi, \alpha)\right] - \hat{R}_t(\theta_t^*)\right) + \frac{\KL(\Pi\Vert \Lambda)}{\beta T}\right\}\right] \\
    + \frac{\alpha C^2 (1-\1_B)}{8},
\end{multline*}
and the choice $\beta = \frac{1}{c+C}$ yields
\begin{multline*}
    \E_{P_1, \dots, P_T}\E_{S_1, \dots, S_T}\E_{\pi\sim\hat{\Pi}}\left[\mathcal{E}(\pi)\right] - \mathcal{E}^* \leq \frac{2}{1 - \frac{\alpha c \1_B}{2(1-C\alpha)}}\E_{P_1, \dots, P_T}\E_{S_1, \dots, S_T}\Bigg[ \\
    \inf_{\Pi\in \G}\left\{\frac{1}{T}\sum_{t=1}^T \left(\E_{\pi\sim \Pi}\left[\Rhat_t(\rho_t(\pi, \alpha), \pi, \alpha)\right]  - \hat{R}_t(\theta_t^*)\right) + \frac{\KL(\Pi\Vert \Lambda)}{\beta T}\right\} + \frac{\alpha C^2 (1-\1_B)}{8}\Bigg],
\end{multline*}
which concludes the proof of the lemma.
\end{proof}

\subsection{Proof of Theorem~\ref{theorem_meta_learning}}

From Lemma~\ref{lemma_meta_learning},
\begin{align*}
    \E_{P_1, \dots, P_T} & \E_{S_1, \dots, S_T}\E_{\pi\sim\hat{\Pi}}\left[\mathcal{E}(\pi)\right] - \mathcal{E}^*
    \\
    & \leq \frac{2}{1 - \frac{\alpha c \1_B}{2(1-C\alpha)}}\E_{P_1, \dots, P_T}\E_{S_1, \dots, S_T}\Bigg[ \inf_{\Pi\in \G}\Biggl\{\frac{1}{T}\sum_{t=1}^T \left(\E_{\pi\sim \Pi}\left[\Rhat_t(\rho_t(\pi, \alpha), \pi, \alpha)\right]  - \hat{R}_t(\theta_t^*)\right) \\
    & \quad \quad \quad  + \frac{\KL(\Pi\Vert \Lambda)}{\beta T}\Biggr\} + \frac{\alpha C^2 (1-\1_B)}{8} \Bigg]
    \\
    & \leq \frac{2}{1 - \frac{\alpha c \1_B}{2(1-C\alpha)}}  \inf_{\Pi\in \G} \E_{P_1, \dots, P_T}\E_{S_1, \dots, S_T}\Biggl\{\frac{1}{T}\sum_{t=1}^T \left(\E_{\pi\sim \Pi}\left[\Rhat_t(\rho_t(\pi, \alpha), \pi, \alpha)\right]  - \hat{R}_t(\theta_t^*)\right) \\
    & \quad \quad \quad  + \frac{\KL(\Pi\Vert \Lambda)}{\beta T}+ \frac{\alpha C^2 (1-\1_B)}{8}\Biggr\}  
    \\
    & = \frac{2}{1 - \frac{\alpha c \1_B}{2(1-C\alpha)}} \inf_{\Pi\in \G}\E_{P_{T+1}}  \E_{S_{T+1}} \Biggl\{ \left(\E_{\pi\sim \Pi}\left[\Rhat_{T+1}(\rho_{T+1}(\pi, \alpha), \pi, \alpha)\right]  - \hat{R}_{T+1}(\theta_{T+1}^*)\right)
    \\
    & \quad \quad \quad  + \frac{\KL(\Pi\Vert \Lambda)}{\beta T}+ \frac{\alpha C^2 (1-\1_B)}{8}\Biggr\}  
    \\
    & \leq \frac{2}{1 - \frac{\alpha c \1_B}{2(1-C\alpha)}} \inf_{\Pi\in \G}\E_{P_{T+1}}  \E_{S_{T+1}} \Biggl\{ \E_{\pi\sim \Pi}\inf_{\rho\in\mathcal{P}(\Theta)}\Biggl[\mathbb{E}_{\theta\sim\rho}[\hat{R}_{T+1}(\theta)] + \frac{{\rm KL}(\rho\Vert\pi)}{\alpha n}    
    \\
    & \quad \quad \quad   - \hat{R}_{T+1}(\theta_{T+1}^*)\Biggr] + \frac{\KL(\Pi\Vert \Lambda)}{\beta T}+ \frac{\alpha C^2 (1-\1_B)}{8}\Biggr\}  
    \\
    & \leq \frac{2}{1 - \frac{\alpha c \1_B}{2(1-C\alpha)}} \inf_{\Pi\in \G}\E_{P_{T+1}}  \Biggl\{ \E_{\pi\sim \Pi}\inf_{\rho\in\mathcal{P}(\Theta)} \E_{S_{T+1}}  \Biggl[\mathbb{E}_{\theta\sim\rho}[\hat{R}_{T+1}(\theta)] + \frac{{\rm KL}(\rho\Vert\pi)}{\alpha n}  
    \\
    & \quad \quad \quad   - \hat{R}_{T+1}(\theta_{T+1}^*) \Biggr] + \frac{\KL(\Pi\Vert \Lambda)}{\beta T}+ \frac{\alpha C^2 (1-\1_B)}{8}\Biggr\}  
    \\
    & \leq \frac{2}{1 - \frac{\alpha c \1_B}{2(1-C\alpha)}} \inf_{\Pi\in \G}\E_{P_{T+1}}  \Biggl\{ \E_{\pi\sim \Pi}\inf_{\rho\in\mathcal{P}(\Theta)} \Biggl[\mathbb{E}_{\theta\sim\rho}[R_{T+1}(\theta) -R_{T+1}(\theta_t^*)] + \frac{{\rm KL}(\rho\Vert\pi)}{\alpha n}\Biggr]  
    \\
    & \quad \quad \quad   + \frac{\KL(\Pi\Vert \Lambda)}{\beta T}+ \frac{\alpha C^2 (1-\1_B)}{8}\Biggr\}  .
\end{align*}
This ends the proof. 
\hfill $\blacksquare$

\section{Application of Theorem~\ref{theorem_meta_learning} to the Gaussian Case}\label{appendix_application_gaussian}

\noindent Assume that $\ell$ is $L$-Lipschitz. As a result, the risks $R_{P_t}$ are also $L$-Lipschitz. We choose the prior $p_{\bar{\mu},(\bar{\sigma}^2,\dots,\bar{\sigma}^2)} \in \F$ (and hence, the variances are the same for all coordinates). A straightforward application of \eqref{kl_gaussian} gives
\begin{equation*}
    \KL(p_{\mu,\sigma^2},p_{\bar{\mu},(\bar{\sigma}^2,\dots,\bar{\sigma}^2)}) = \frac{1}{2}\sum_{i=1}^d \left[ \frac{(\mu_i-\bar{\mu}_i)^2}{\bar{\sigma}^2} + \frac{\sigma_i^2}{\bar{\sigma}^2}-1+\log\left(\frac{\bar{\sigma}^2}{\sigma_i^2}\right) \right].
\end{equation*}

\noindent In this case, $\rho_t(\pi,\alpha) = p_{\mu(t),\sigma^2(t)}$, where
\begin{equation*}
    (\mu(t),\sigma^2(t)) = \argmin_{\mu,\sigma^2} \left\{ \E_{\theta\sim\mathcal{N}(\mu,\sigma^2)}\left[ \hat{R}_{t}(\theta)\right] + \frac{1}{2\alpha n}\sum_{i=1}^d \left[\frac{(\mu_i-\bar{\mu}_i)^2}{\bar{\sigma}^2} + \frac{\sigma_i^2}{\bar{\sigma}^2}-1+\log\left(\frac{\bar{\sigma}^2}{\sigma_i^2}\right) \right]\right\}.
\end{equation*}

\noindent We now define $\G$ as the family of distributions $q_{\tau,v,a,b}$ on $(\bar{\mu},\bar{\sigma}^2)$, where
\begin{equation*}
    q_{\tau,v,a,b}(\bar{\mu},\bar{\sigma}^2)  = \left[\bigotimes_{i=1}^d \N(\bar{\mu}_i;\tau_i,\xi^2) \right] \otimes \Gamma(\bar{\sigma}^2;a,b).
\end{equation*}

\noindent Fix a prior on priors $\Lambda = q_{0,\bar{\bar{\xi}}^2,\bar{\bar{a}},\bar{\bar{b}}}$. We choose $\hat{\Pi}=q_{\hat{t},\hat{\xi}^2,\hat{a},\hat{b}}$, where
\begin{multline*}
    (\hat{\tau},\hat{\xi}^2,\hat{a},\hat{b}) = \argmin_{\tau,\xi^2,a,b} \Biggl\{ \E_{(\bar{\mu},\bar{\sigma}^2)\sim q_{\tau,\xi^2,a,b}} \Biggl[\frac{1}{T}\sum_{t=1}^T \min_{\mu(t),\sigma^2(t)} \Biggl\{  \E_{\theta\sim\N(\mu(t),\sigma^2(t))}\left[ \hat{R}_{t}(\theta)\right] \\
    + \frac{1}{2\alpha n}\sum_{i=1}^d \left[ \frac{(\mu_i(t)-\bar{\mu}_i)^2}{\bar{\sigma}^2} + \frac{\sigma_i^2(t)}{\bar{\sigma}^2}-1+\log\left(\frac{\bar{\sigma}^2}{\sigma_i^2(t)}\right) \right] \Biggr\} \Biggr] \\
    + \frac{1}{2\beta T}\sum_{i=1}^d \left[ \frac{\tau_i^2}{\bar{\bar{\xi}}^2} + \frac{\xi^2}{\bar{\bar{\xi}}^2}-1+\log\left(\frac{\bar{\bar{\xi}}^2}{\xi^2}\right) \right] \\
    + \frac{(a-\bar{\bar{a}})\psi(a) + \log \frac{\Gamma(\bar{\bar{a}})}{\Gamma(a)} + \bar{\bar{a}} \log\frac{b}{\bar{\bar{b}}} + a \frac{\bar{\bar{b}}-b}{b} }{2\beta T}\Biggr\},
\end{multline*}
where $\Gamma(\cdot)$ is the gamma function and $\psi(\cdot) = \Gamma'(\cdot)/\Gamma(\cdot)$ is the digamma function. We apply Theorem~\ref{theorem_meta_learning}:
\begin{multline*}
    \E_{P_{1},\dots,P_{T}} \E_{\mathcal{S}_1,\dots,\mathcal{S}_T} \E_{\pi\sim\hat{\Pi}} [\mathcal{E}(\pi)]-\mathcal{E}^* \leq 4 \inf_{\tau,\xi^2,a,b} \Biggl\{ \\
    \E_{(\bar{\mu},\bar{\sigma}^2)\sim q_{\tau,\xi^2,a,b}} \E_{P_{T+1}}\Biggl[ \min_{\mu(T+1),\sigma^2(T+1)} \Biggl\{ \E_{\theta\sim\N(\mu(T+1),\sigma^2(T+1))}[ R_{P_{T+1}}(\theta)]-R_{P_{T+1}}(\theta^*_t) \\
    +\frac{1}{2\alpha n}\sum_{i=1}^d \left[ \frac{(\mu_i(T+1)-\bar{\mu}_i)^2}{\bar{\sigma}^2} + \frac{\sigma_i^2(T+1)}{\bar{\sigma}^2}-1+\log\left(\frac{\bar{\sigma}^2}{\sigma_i^2(T+1)}\right) \right] \Biggr\} \Biggr] \\
    + \frac{1}{2\beta T}\sum_{i=1}^d \left[ \frac{\tau_i^2}{\bar{\bar{\xi}}^2} + \frac{\xi^2}{\bar{\bar{\xi}}^2}-1+\log\left(\frac{\bar{\bar{\xi}}^2}{\xi^2}\right) \right] + \frac{(a-\bar{\bar{a}})\psi(a) + \log \frac{\Gamma(\bar{\bar{a}})}{\Gamma(a)} + \bar{\bar{a}} \log\frac{b}{\bar{\bar{b}}} + a \frac{\bar{\bar{b}}-b}{b} }{2\beta T}\Biggr\}.
\end{multline*}

\noindent We next apply Assumption~\ref{application_assumption}, and the bound becomes
\begin{multline*}
    \E_{P_{1},\dots,P_{T}} \E_{\mathcal{S}_1,\dots,\mathcal{S}_T}  \E_{\pi\sim\hat{\Pi}} [\mathcal{E}(\pi)] - \mathcal{E}^* \leq  4 \inf_{\tau,\xi^2,a,b} \Biggl\{ \E_{(\bar{\mu},\bar{\sigma}^2)\sim q_{\tau,\xi^2,a,b}} \E_{P_{T+1}} \Biggl[ \min_{\mu,\sigma^2} \Biggl\{ L \|\sigma\|^2 \\
    + \frac{1}{2\alpha n}\sum_{i=1}^d \left[ \frac{(\mu_i-\bar{\mu}_i)^2}{\bar{\sigma}^2} + \frac{\sigma_i^2}{\bar{\sigma}^2}-1+\log\left(\frac{\bar{\sigma}^2}{\sigma_i^2(T+1)}\right) \right] \Biggr\} \Biggr] \\
    + \frac{1}{2\beta T}\sum_{i=1}^d \left[ \frac{\tau_i^2}{\bar{\bar{\xi}}^2} + \frac{\xi^2}{\bar{\bar{\xi}}^2}-1+\log\left(\frac{\bar{\bar{\xi}}^2}{\xi^2}\right) \right] \\
    + \frac{(a-\bar{\bar{a}})\psi(a) + \log \frac{\Gamma(\bar{\bar{a}})}{\Gamma(a)} + \bar{\bar{a}} \log\frac{b}{\bar{\bar{b}}} + a \frac{\bar{\bar{b}}-b}{b} }{2\beta T}\Biggr\}.
\end{multline*}

\noindent Idea: define for any $P$, $\mu_P$ that minimizes $R_P$, that is: $R_P(\mu_P) = R_p^*$, and define $\mu^* = \E_{P\sim\mathcal{P}}(\mu_P)$. If $\mu_P = \mu^*$ $\mathcal{P}$-a.s., all the task have the same solution. On the other hand, if $\mu_P$ has a lot of variations, then the tasks have very unrelated solutions. In the infimum above, take $\mu=\mu_{P_{T+1}}$ and $\tau = \mu^*$. Then,
\begin{multline*}
    \E_{P_{1},\dots,P_{T}} \E_{\mathcal{S}_1,\dots,\mathcal{S}_T}  \E_{\pi\sim\hat{\Pi}} [\mathcal{E}(\pi)] \leq \mathcal{E}^* + \inf_{\xi^2,a,b} \Biggl\{ \E_{\bar{\sigma}^2\sim \Gamma^{-1}(a,b)} \Bigg[ \min_{\sigma^2 } \Biggl\{ L\|\sigma \|^2 \\
    + \frac{\E_{P_{T+1}\sim\mathcal{P}} \left( \|\mu_{P_{T+1}}-\mu^*\|^2 \right) + \xi^2 d}{2 \bar{\sigma}^2 \alpha n} + \frac{1}{2\alpha n}\sum_{i=1}^d \left[ \frac{\sigma_i^2 }{\bar{\sigma}^2}-1+\log\left(\frac{\bar{\sigma}^2}{\sigma_i^2 }\right) \right] \Biggr\} \Bigg] \\
    + \frac{\beta C^2}{8 T} + \frac{\|\mu^*\|^2}{2\beta \bar{\bar{\xi}}^2} + \frac{d}{2\beta} \left[ \frac{\xi^2}{\bar{\bar{\xi}}^2}-1+\log\left(\frac{\bar{\bar{\xi}}^2}{\xi^2}\right) \right] \\
    + \frac{(a-\bar{\bar{a}})\psi(a) + \log \frac{\Gamma(\bar{\bar{a}})}{\Gamma(a)} + \bar{\bar{a}} \log\frac{b}{\bar{\bar{b}}} + a \frac{\bar{\bar{b}}-b}{b} }{2\beta T} \Biggr\}.
\end{multline*}

\noindent Let $\Sigma(\mathcal{P}) = \E_{P_{T+1}\sim\mathcal{P}} \left[ \|\mu_{P_{T+1}}-\mu^*\|^2 \right]$, this quantity will be very important in the rate. Therefore,
\begin{multline*}
    \E_{P_{1},\dots,P_{T}} \E_{\mathcal{S}_1,\dots,\mathcal{S}_T}  \E_{\pi\sim\hat{\Pi}} [\mathcal{E}(\pi)] - \mathcal{E}^* \leq \\
    4 \inf_{\xi^2, a, b, \beta} \Biggl\{ \E_{\bar{\sigma}^2\sim \Gamma^{-1}(a,b)} \Bigg[ \min_{\sigma^2 } \Biggl\{ L\|\sigma \|^2 + \frac{\Sigma(\mathcal{P}) + \xi^2 d}{2 \bar{\sigma}^2 \alpha n} + \frac{1}{2\alpha n}\sum_{i=1}^d \left[ \frac{\sigma_i^2 }{\bar{\sigma}^2}-1+\log\left(\frac{\bar{\sigma}^2}{\sigma_i^2 }\right) \right] \Biggr\} \Bigg] \\
    + \frac{\|\mu^*\|^2}{2\beta \bar{\bar{\xi}}^2 T} + \frac{d}{2\beta} \left[ \frac{\xi^2}{\bar{\bar{\xi}}^2}-1+\log\left(\frac{\bar{\bar{\xi}}^2}{\xi^2}\right) \right] + \frac{(a-\bar{\bar{a}})\psi(a) + \log \frac{\Gamma(\bar{\bar{a}})}{\Gamma(a)} + \bar{\bar{a}} \log\frac{b}{\bar{\bar{b}}} + a \frac{\bar{\bar{b}}-b}{b} }{2\beta T} \Biggr\}.
\end{multline*}

\noindent An exact optimization in $\sigma^2$ gives $\sigma_i^2 = \frac{\bar{\sigma}^2}{(2\alpha L\bar{\sigma}^2 n + 1 )}$ and after simplifications,
\begin{multline*}
    \E_{P_{1},\dots,P_{T}} \E_{\mathcal{S}_1,\dots,\mathcal{S}_T}  \E_{\pi\sim\hat{\Pi}} [\mathcal{E}(\pi)] \\
    \leq \mathcal{E}^* + 4 \inf_{\xi^2,a,b} \Biggl\{ \E_{\bar{\sigma}^2\sim \Gamma^{-1}(a,b)} \left[ \frac{\Sigma(\mathcal{P}) + \xi^2 d}{2 \bar{\sigma}^2 \alpha n} + \frac{d\log\left(2\alpha L \bar{\sigma}^2 n +1\right)}{2\alpha n } \right] \\
    + \frac{\|\mu^*\|^2}{2\beta \bar{\bar{\xi}}^2 T} + \frac{d}{2\beta} \left[ \frac{\xi^2}{\bar{\bar{\xi}}^2}-1+\log\left(\frac{\bar{\bar{\xi}}^2}{\xi^2}\right) \right] + \frac{(a-\bar{\bar{a}})\psi(a) + \log \frac{\Gamma(\bar{\bar{a}})}{\Gamma(a)} + \bar{\bar{a}} \log\frac{b}{\bar{\bar{b}}} + a \frac{\bar{\bar{b}}-b}{b} }{2\beta T} \Biggr\}.
\end{multline*}

\noindent We now consider separately two cases: $\Sigma(\mathcal{P})>d\epsilon$ or $\Sigma(\mathcal{P})\leq d\epsilon$ for some $\epsilon$ (very small) that we will choose later. First case: $\Sigma(\mathcal{P})>d\epsilon$. This is the case where we do not expect a significant improvement on the bound from the meta-learning. Simply take $\xi = \bar{\bar{\xi}}$ to get
\begin{multline*}
    \E_{P_{1},\dots,P_{T}} \E_{\mathcal{S}_1,\dots,\mathcal{S}_T}  \E_{\pi\sim\hat{\Pi}} [\mathcal{E}(\pi)] - \mathcal{E}^* \\
    \leq 4 \inf_{a,b} \Biggl\{ \E_{\bar{\sigma}^2\sim \Gamma^{-1}(a,b)} \left[ \frac{\Sigma(\mathcal{P}) + d\bar{\bar{\xi}}^2}{\bar{\sigma}^2 \alpha n} + \frac{d\log\left(2\alpha L \bar{\sigma}^2 n +1\right)}{2\alpha n } \right] \\
    + \frac{2 \|\mu^*\|^2}{\beta \bar{\bar{\xi}}^2 T} + \frac{(a-\bar{\bar{a}})\psi(a) + 2 \log \frac{\Gamma(\bar{\bar{a}})}{\Gamma(a)} + \bar{\bar{a}} \log\frac{b}{\bar{\bar{b}}} + a \frac{\bar{\bar{b}}-b}{b} }{\beta T} \Biggr\}.
\end{multline*} 

\noindent In this case, an accurate optimization with respect to $a$ and $b$ will not improve the bound significantly, and for this reason, we simply take $a=\bar{\bar{a}}$ and $b=\bar{\bar{b}}$. Also note that for $U\sim\Gamma(a,b)$ we have $\E(U^x) = \Gamma(a+x)/[b^x \Gamma(a)]$ and thus $\E(1/\bar{\sigma}^2)= b/(a-1)$ and $ \E(\log(2\alpha L \bar{\sigma}^2 n +1 )) \leq \log(2\alpha L \E(\bar{\sigma}^2) n +1 )  = \log(2\alpha L n a/b+1 )$. Thus,
\begin{equation*}
    \E_{P_{1},\dots,P_{T}} \E_{\mathcal{S}_1,\dots,\mathcal{S}_T}  \E_{\pi\sim\hat{\Pi}} [\mathcal{E}(\pi)] \leq  \mathcal{E}^* + \frac{ 4 \bar{\bar{b}}\left[d\bar{\bar{\xi}}^2+ \Sigma(\mathcal{P})\right]}{  (\bar{\bar{a}}-1) \alpha n} + \frac{2 d\log\left(\frac{2 \bar{\bar{a}}\alpha L  n}{\bar{\bar{b}}} +1\right)}{\alpha n } + \frac{2 \|\mu^*\|^2}{\beta \bar{\bar{\xi}}^2 T}.
\end{equation*}

\noindent Second case: $\Sigma(\mathcal{P})\leq d\epsilon$. In this case, we expect a significant improvement in the bound from the meta-learning, and in order to take advantage of it, we choose $\xi$ very small: $\xi^2 = \epsilon$. We obtain
\begin{multline*}
    \E_{P_{1},\dots,P_{T}} \E_{\mathcal{S}_1,\dots,\mathcal{S}_T}  \E_{\pi\sim\hat{\Pi}} [\mathcal{E}(\pi)] \leq  \mathcal{E}^* + 4 \inf_{a,b} \Biggl\{ \E_{\bar{\sigma}^2\sim \Gamma^{-1}(a,b)} \Biggl\{ \frac{d\epsilon}{\bar{\sigma}^2 \alpha n} + \frac{d\log\left(2\alpha L \bar{\sigma}^2 n +1\right)}{2\alpha n } \Biggr\} \\
    + \frac{\|\mu^*\|^2}{2\beta \bar{\bar{\xi}}^2 T} + \frac{d}{2\beta T} \left[ \frac{\epsilon}{\bar{\bar{\xi}}^2}-1+\log\left(\frac{\bar{\bar{\xi}}^2}{\epsilon}\right) \right] \\
    + \frac{(a-\bar{\bar{a}})\psi(a) + \log \frac{\Gamma(\bar{\bar{a}})}{\Gamma(a)} + \bar{\bar{a}} \log\frac{b}{\bar{\bar{b}}} + a \frac{\bar{\bar{b}}-b}{b} }{2\beta T}  \Biggr\}.
\end{multline*}

\noindent In order to take advantage of the meta-learning, we are going to make sure that $\bar{\sigma}^2$ is very small by tuning $a$ and $b$ adequately. But then we have $ \E(\log(2\alpha L \bar{\sigma}^2 n +1 )) \leq 2\alpha L \E(\bar{\sigma}^2) n  = 2\alpha L n a/b$, and we obtain
\begin{multline*}
    \E_{P_{1},\dots,P_{T}} \E_{\mathcal{S}_1,\dots,\mathcal{S}_T}  \E_{\pi\sim\hat{\Pi}} [\mathcal{E}(\pi)] \leq  \mathcal{E}^* + 4 \inf_{a,b,\beta\in G} \Biggl\{ \frac{bd\epsilon}{(a-1) \alpha n} + \frac{dLa}{b} + \frac{\|\mu^*\|^2}{2\beta \bar{\bar{\xi}}^2 T} \\
    + \frac{d}{2\beta T} \left[ \frac{\epsilon}{\bar{\bar{\xi}}^2}-1+\log\left(\frac{\bar{\bar{\xi}}^2}{\epsilon}\right) \right] + \frac{(a-\bar{\bar{a}})\psi(a) + \log \frac{\Gamma(\bar{\bar{a}})}{\Gamma(a)} + \bar{\bar{a}} \log\frac{b}{\bar{\bar{b}}} + a \frac{\bar{\bar{b}}-b}{b} }{2\beta T} \Biggr\}.
\end{multline*}

\noindent Choose $a=\bar{\bar{a}}$ (make sure that $\bar{\bar{a}}>1$) and an optimization with respect to $b$ gives $b = \sqrt{\frac{\alpha L a(a-1) n }{\epsilon}}$. Reinjecting in the bound gives
\begin{multline*}
    \E_{P_{1},\dots,P_{T}} \E_{\mathcal{S}_1,\dots,\mathcal{S}_T}  \E_{\pi\sim\hat{\Pi}} [\mathcal{E}(\pi)] \leq \mathcal{E}^* + 4 d \sqrt{\frac{\bar{\bar{a}}L}{\alpha(\bar{\bar{a}}-1)}} \sqrt{\frac{\epsilon}{n}} + \frac{2\|\mu^*\|^2}{\beta \bar{\bar{\xi}}^2 T} \\
    + \frac{2d}{\beta T} \left[ \frac{\epsilon}{\bar{\bar{\xi}}^2}-1+\log\left(\frac{\bar{\bar{\xi}}^2}{\epsilon}\right) \right] + \frac{2}{\beta T}\left(\bar{\bar{a}}\log \left( \frac{1}{\bar{\bar{b}}}\sqrt{\frac{n \alpha L \bar{\bar{a}}( \bar{\bar{a}}-1) }{\epsilon}} \right) +  \bar{\bar{a}}  \bar{\bar{b}} \sqrt{\frac{\epsilon}{n \alpha L \bar{\bar{a}}( \bar{\bar{a}}-1) }}\right).
\end{multline*}

\noindent The last step is to chose $\epsilon$. Interestingly enough, for $\epsilon=1/n$ we recover the rate of learning in isolation: $d/n$. Now, an optimization w.r.t $\epsilon$ gives $\epsilon = \sqrt{n}/T$ and thus
\begin{multline*}
    \E_{P_{1},\dots,P_{T}} \E_{\mathcal{S}_1,\dots,\mathcal{S}_T}  \E_{\pi\sim\hat{\Pi}} [\mathcal{E}(\pi)] \leq \mathcal{E}^* + \frac{ 4 d }{T}\sqrt{\frac{\bar{\bar{a}}L}{\alpha(\bar{\bar{a}}-1)}} + \frac{2\|\mu^*\|^2}{\beta \bar{\bar{\xi}}^2 T} \\
    + \frac{2d}{\beta T} \left( \frac{n}{\bar{\bar{\xi}}^2 T^2}-1+\log\left(\bar{\bar{\xi}}^2 T^2\right) \right) + \frac{2}{\beta T}\left(\bar{\bar{a}}\log \left( \frac{T}{\bar{\bar{b}}}\sqrt{ \alpha L \bar{\bar{a}}( \bar{\bar{a}}-1) } \right) +  \frac{\bar{\bar{a}}  \bar{\bar{b}}}{T} \sqrt{ \alpha L \bar{\bar{a}}( \bar{\bar{a}}-1) }\right).
\end{multline*}

\noindent In the $T>n$ regime, this is a significant improvement compared to the learning in isolation in which case, we can simplify the bound as
\begin{equation*}
    \E_{P_{1},\dots,P_{T}} \E_{\mathcal{S}_1,\dots,\mathcal{S}_T}  \E_{\pi\sim\hat{\Pi}} [\mathcal{E}(\pi)] \leq  \mathcal{E}^* +  G(C,\mu^*,\bar{\bar{\xi}},L,\bar{\bar{a}},\bar{\bar{b}}) \frac{d + \log(T)}{T},
\end{equation*}
where
\begin{multline*}
    G(C,\mu^*,\bar{\bar{\xi}},L,\bar{\bar{a}},\bar{\bar{b}}) = 4\sqrt{\frac{\bar{\bar{a}}L}{\alpha(\bar{\bar{a}}-1)}} \\
    +\frac{2}{\beta} \left(\frac{\|\mu^*\|^2}{\bar{\bar{\xi}}^2} + \frac{n}{\bar{\bar{\xi}}^2 T^2} + 1 + \log\bar{\bar{\xi}}^2 + \bar{\bar{a}} + \frac{\bar{\bar{a}}}{2}\log \frac{\alpha L \bar{\bar{a}}( \bar{\bar{a}}-1)}{\bar{\bar{b}}^2} + \frac{\bar{\bar{a}}  \bar{\bar{b}}}{T} \sqrt{ \alpha L \bar{\bar{a}}( \bar{\bar{a}}-1) } \right).
\end{multline*}

\noindent As a conclusion,
\begin{multline*}
    \E_{P_{1},\dots,P_{T}} \E_{\mathcal{S}_1,\dots,\mathcal{S}_T}  \E_{\pi\sim\hat{\Pi}} [\mathcal{E}(\pi)] \leq \mathcal{E}^* + \min\Biggl( \frac{ \bar{\bar{b}}\left[d\bar{\bar{\xi}}^2+ \Sigma(\mathcal{P})\right]}{  (\bar{\bar{a}}-1) \alpha n} + \frac{d\log\left(\frac{2 \bar{\bar{a}}\alpha L  n}{\bar{\bar{b}}} +1\right)}{2\alpha n } +  \frac{2 \|\mu^*\|^2}{\beta \bar{\bar{\xi}}^2 T}, \\
    G(C,\mu^*,\bar{\bar{\xi}},L,\bar{\bar{a}},\bar{\bar{b}}) \frac{d + \log(T)}{T} \Biggr).
\end{multline*}

\section{Application of Theorem~\ref{theorem_meta_learning} to the Case of Mixtures of Gaussians}\label{appendix_application_mixtures}

\noindent We first assume that priors that are mixtures of $K$ Gaussians, where $K$ is known:
\begin{multline*}
    \mathcal{M} = \Bigg\{p_{w, \mu, \sigma^2} = \sum_{k=1}^K w_k \bigotimes_{i=1}^d \N(\mu_{k, i}, \sigma^2_{k, i}): \\
    \forall (i, k)\in [d]\times [K], \mu_{k, i}\in \mathbb{R}, \sigma^2_{k, i}\in \mathbb{R}^*_+, w_k\geq 0, 1^\top w = 1\Bigg\}.
\end{multline*}

\noindent We set the prior $\pi = \sum_{k=1}^K \Bar{w}_k\N(\Bar{\mu}_k, \Bar{\sigma}_k^2 I_d)$. Then, denoting by $g(x; \mu, \sigma^2)$ the pdf of the normal distribution $\N(\mu, \sigma^2)$, \eqref{kl_gaussian} implies, for any $w, \mu, \sigma^2$,
\begin{align*}
    \KL(p_{w, \mu, \sigma^2} \Vert \pi) &= \int_{\mathbb{R}^d} \log \frac{\sum_{k=1}^K w_k g(x; \mu_{k}, \sigma^2_{k})}{\sum_{k=1}^K \Bar{w}_k g(x; \Bar{\mu}_k, \Bar{\sigma}_k^2 I_d)} \sum_{k=1}^K w_k g(x; \mu_{k}, \sigma^2_{k})dx \\
    &\leq \int_{\mathbb{R}^d} \sum_{k=1}^K \log \frac{w_k g(x; \mu_{k}, \sigma^2_{k})}{\Bar{w}_k g(x; \Bar{\mu}_k, \Bar{\sigma}_k^2 I_d)} w_k  g(x; \mu_{k}, \sigma^2_{k})dx \\
    &= \sum_{k=1}^K w_k \log \frac{w_k}{\Bar{w}_k} + \sum_{k=1}^K w_k \KL(\N(\mu_{k}, \sigma^2_{k})\Vert \N(\Bar{\mu}_k, \Bar{\sigma}_k^2 I_d)) \\
    &= \KL(w\Vert \Bar{w}) + \frac{1}{2} \sum_{k=1}^K w_k \sum_{i=1}^d \left(\frac{(\mu_{k, i} - \Bar{\mu}_{k, i})^2}{\Bar{\sigma}_k^2} + \frac{\sigma_{k, i}^2}{\Bar{\sigma}_k^2} - 1 + \log \frac{\Bar{\sigma}_k^2}{\sigma_{k, i}^2}\right),
\end{align*}
where the inequality on the second line follows from the log sum inequality from \citet{cover}, and the bound from Theorem~\ref{theorem_meta_learning} becomes, at $t = T+1$,
\begin{multline*}
    \E_{S_{T+1}}\E_{\theta\sim \rho_{T+1}(\pi, \alpha)}[R_{P_{T+1}}(\theta)] - R^*_{P_{T+1}} \leq 2\inf_{w, \mu, \sigma^2} \Bigg\{\E_{\theta\sim p_{w, \mu, \sigma^2}}[R_{P_{T+1}}(\theta)] - R_{P_{T+1}}^* + \frac{\KL(w\Vert \Bar{w})}{\alpha n} \\
    + \frac{1}{2\alpha n} \sum_{k=1}^K w_k \sum_{i=1}^d \left(\frac{(\mu_{k, i} - \Bar{\mu}_{k, i})^2}{\Bar{\sigma}_k^2} + \frac{\sigma_{k, i}^2}{\Bar{\sigma}_k^2} - 1 + \log \frac{\Bar{\sigma}_k^2}{\sigma_{k, i}^2}\right)\Bigg\}.
\end{multline*}

\noindent Assumption \eqref{application_assumption} implies that
\begin{equation*}
    \E_{\theta\sim \N(\mu, \sigma^2)}[R_{P_{T+1}}(\theta)] - R^*_{P_{T+1}} \leq L\E_{\theta\sim \N(\mu, \sigma^2)}\left[\Vert\theta - \mu_{P_{T+1}}\Vert^2\right].
\end{equation*}

\noindent It follows that the previous bound with the choice $\mu_1 = \dots = \mu_K = \mu_{P_{T+1}}$ becomes
\begin{multline*}
    \E_{S_{T+1}}\E_{\theta\sim \rho_{T+1}(\pi, \alpha)}[R_{P_{T+1}}(\theta)] - R^*_{P_{T+1}} \leq 2\inf_{w, \sigma^2} \Bigg\{ L\sum_{k=1}^K w_k \Vert\sigma_k\Vert^2 + \frac{\KL(w\Vert \Bar{w})}{\alpha n} \\
    + \frac{1}{2\alpha n} \sum_{k=1}^K w_k \sum_{i=1}^d \left(\frac{(\mu_{P_{T+1}, i} - \Bar{\mu}_{k, i})^2}{\Bar{\sigma}_k^2} + \frac{\sigma_{k, i}^2}{\Bar{\sigma}_k^2} - 1 + \log \frac{\Bar{\sigma}_k^2}{\sigma_{k, i}^2}\right)\Bigg\}. 
\end{multline*}

\noindent While the choice $\mu_1 = \dots = \mu_K = \mu_{P_{T+1}}$ may seem less meaningful than in the Gaussian case (with one single component), it is completely natural as the best possible choice for the parameter $\theta$ is $\mu_{P_{T+1}}$. In the computation, each component $\N(\mu_k, \sigma_k^2)$ of the mixture brings an error term which can be decomposed between a bias term and a variance term,
\begin{equation*}
    \E_{\theta\sim \N(\mu_k, \sigma_k^2)}\left[\Vert\theta - \mu_{P_{T+1}}\Vert^2\right] = \underbrace{\Vert\mu_k - \mu_{P_{T+1}}\Vert^2}_{\text{bias term (first order)}} + \underbrace{\sigma_k^2}_{\text{variance term (second order)}},
\end{equation*}
for which the choice $\mu_k = \mu_{P_{T+1}}$ minimizes the first order error term. Next, we set the family $\G$ of distributions on $\F$:
\begin{multline*}
    \G = \Bigg\{q_{\delta, \tau, \xi^2, b} = \Dir(\delta)\otimes \bigotimes_{\substack{k\in [K] \\ i\in [d]}} \N(\tau_{k, i}, \xi_k^2) \otimes \bigotimes_{k=1}^K \Gamma(2, b_k): \\
    \delta = (\delta_1, \dots, \delta_K)\in \mathbb{R}^K, \forall (k, i), \xi_k^2>0, \tau_{k, i}\in \mathbb{R}, b_k>0, \delta_k>0\Bigg\},
\end{multline*}
where $\Dir(\delta)$ is the Dirichlet distribution of parameter $\delta$. We set the prior on priors $\Lambda = q_{1_K, 0, \Bar{\Bar{\xi}}^2, \Bar{\Bar{b}}}$, where $1_K = (1, \dots, 1)$ and $\Bar{\Bar{\xi}}^2 = \left(\Bar{\Bar{\xi}}_1^2, \dots, \Bar{\Bar{\xi}}_K^2\right)$. Then, using \eqref{kl_gaussian}, \eqref{kl_gamma} and \eqref{kl_dirichlet},
\begin{multline*}
    \KL(q_{\delta, \tau, \xi^2, b} \Vert \Lambda) = \log \frac{\Gamma(1^\top \delta)}{\Gamma(K) \times \prod_{k=1}^K \Gamma(\delta_k)} + \sum_{k=1}^K (\delta_k - 1) \left(\psi(\delta_k) - \psi(1^\top \delta)\right) \\
    + \frac{1}{2} \sum_{k, i} \left(\frac{\tau_{k, i}^2}{\Bar{\Bar{\xi}}_k^2} + \frac{\xi_k^2}{\Bar{\Bar{\xi}}_k^2} - 1 + \log \frac{\Bar{\Bar{\xi}}_k^2}{\xi_k^2}\right) + 2\sum_{k=1}^K \left(\log \frac{b_k}{\Bar{\Bar{b}}_k} + \frac{\Bar{\Bar{b}}_k - b_k}{b_k}\right),
\end{multline*}
where $\psi$ is the digamma function. We can next use the bound from Theorem~\ref{theorem_meta_learning} and we have
\begin{multline*}
    \E_{P_1, \dots, P_T}\E_{S_1, \dots, S_T}\E_{\pi\sim \hat{\Pi}}[\mathcal{E}(\pi)] - \mathcal{E}^* \leq \\
    4 \inf_{\delta, \tau, \xi^2, b}\Bigg\{\E_{(\Bar{w}, \Bar{\mu}, \Bar{\sigma}^2)\sim q_{\delta, \tau, \xi^2, b}}\E_{P_{T+1}}\Bigg[ \inf_{w, \sigma^2}\Bigg\{L\sum_{k=1}^K w_k \Vert\sigma_k\Vert^2 + \frac{\KL(w\Vert \Bar{w})}{\alpha n} \\
    + \frac{1}{2\alpha n} \sum_{k=1}^K w_k \sum_{i=1}^d \left(\frac{(\mu_{P_{T+1}, i} - \Bar{\mu}_{k, i})^2}{\Bar{\sigma}_k^2} + \frac{\sigma_{k, i}^2}{\Bar{\sigma}_k^2} - 1 + \log \frac{\Bar{\sigma}_k^2}{\sigma_{k, i}^2}\right)\Bigg\}\Bigg] \\
    + \frac{1}{2\beta T} \log \frac{\Gamma(1^\top \delta)}{\Gamma(K) \times \prod_{k=1}^K \Gamma(\delta_k)} + \frac{1}{2\beta T} \sum_{k=1}^K (\delta_k - 1) \left(\psi(\delta_k) - \psi(1^\top \delta)\right) \\
    + \frac{1}{4\beta T} \sum_{k, i} \left(\frac{\tau_{k, i}^2}{\Bar{\Bar{\xi}}_k^2} + \frac{\xi_k^2}{\Bar{\Bar{\xi}}_k^2} - 1 + \log \frac{\Bar{\Bar{\xi}}_k^2}{\xi_k^2}\right) + \frac{1}{\beta T}\sum_{k=1}^K \left(\log \frac{b_k}{\Bar{\Bar{b}}_k} + \frac{\Bar{\Bar{b}}_k - b_k}{b_k}\right)\Bigg\}.
\end{multline*}

\noindent Next, minimizing over $\sigma_{k, i}^2$ gives the optimal value $\frac{\Bar{\sigma}_k^2}{2\alpha n L \Bar{\sigma}_k^2 + 1}$, and replacing in the above bound gives
\begin{multline*}
    \E_{P_1, \dots, P_T}\E_{S_1, \dots, S_T}\E_{\pi\sim \hat{\Pi}}[\mathcal{E}(\pi)] - \mathcal{E}^* \leq 4 \inf_{\delta, \tau, \xi^2, b}\Bigg\{\E_{(\Bar{w}, \Bar{\mu}, \Bar{\sigma}^2)\sim q_{\delta, \tau, \xi^2, b}}\E_{P_{T+1}}\Bigg[ \\
    \inf_{w}\left\{ \frac{\KL(w\Vert \Bar{w})}{\alpha n} + \frac{d}{2\alpha n}\sum_{k=1}^K w_k \log \left(2\alpha n L \Bar{\sigma}_k^2 + 1\right) +  \frac{1}{2\alpha n} \sum_{k=1}^K w_k\frac{\Vert\mu_{P_{T+1}} - \Bar{\mu}_{k}\Vert^2}{\Bar{\sigma}_k^2} \right\}\Bigg] \\
    + \frac{1}{2\beta T} \log \frac{\Gamma(1^\top \delta)}{\Gamma(K) \times \prod_{k=1}^K \Gamma(\delta_k)} + \frac{1}{2\beta T} \sum_{k=1}^K (\delta_k - 1) \left(\psi(\delta_k) - \psi(1^\top \delta)\right) \\
    + \frac{1}{4\beta T} \sum_{k, i} \left(\frac{\tau_{k, i}^2}{\Bar{\Bar{\xi}}_k^2} + \frac{\xi_k^2}{\Bar{\Bar{\xi}}_k^2} - 1 + \log \frac{\Bar{\Bar{\xi}}_k^2}{\xi_k^2}\right) + \frac{1}{\beta T}\sum_{k=1}^K \left(\log \frac{b_k}{\Bar{\Bar{b}}_k} + \frac{\Bar{\Bar{b}}_k - b_k}{b_k}\right)\Bigg\}.
\end{multline*}

\noindent We are going to restrict the infimum $\inf_{w}$ to the set of $w$ such that $w_k\in \{0, 1\}$ for any $k\in [K]$. In other words, we are selecting only the best component of the mixture in the optimization bound. The reader can check that this is actually the exact solution to the minimization problem in the above bound. As a result of this minimization, the bound becomes
\begin{multline*}
    \E_{P_1, \dots, P_T}\E_{S_1, \dots, S_T}\E_{\pi\sim \hat{\Pi}}[\mathcal{E}(\pi)] - \mathcal{E}^* \leq 4 \inf_{\delta, \tau, \xi^2, b}\Bigg\{\E_{(\Bar{w}, \Bar{\mu}, \Bar{\sigma}^2)\sim q_{\delta, \tau, \xi^2, b}}\E_{P_{T+1}}\Bigg[ \\
    \min_{k\in [K]}\left\{ \frac{1}{\alpha n}\log \frac{1}{\Bar{w}_k} + \frac{d}{2\alpha n}\log \left(2\alpha n L \Bar{\sigma}_k^2 + 1\right) +  \frac{1}{2\alpha n} \frac{\Vert\mu_{P_{T+1}} - \Bar{\mu}_{k}\Vert^2}{\Bar{\sigma}_k^2} \right\}\Bigg] \\
    + \frac{1}{2\beta T} \log \frac{\Gamma(1^\top \delta)}{\Gamma(K) \times \prod_{k=1}^K \Gamma(\delta_k)} + \frac{1}{2\beta T} \sum_{k=1}^K (\delta_k - 1) \left(\psi(\delta_k) - \psi(1^\top \delta)\right) \\
    + \frac{1}{4\beta T} \sum_{k, i} \left(\frac{\tau_{k, i}^2}{\Bar{\Bar{\xi}}_k^2} + \frac{\xi_k^2}{\Bar{\Bar{\xi}}_k^2} - 1 + \log \frac{\Bar{\Bar{\xi}}_k^2}{\xi_k^2}\right) + \frac{1}{\beta T}\sum_{k=1}^K \left(\log \frac{b_k}{\Bar{\Bar{b}}_k} + \frac{\Bar{\Bar{b}}_k - b_k}{b_k}\right)\Bigg\}.
\end{multline*}

\noindent Please note that the term inside the expectation is, up to the minimum on $k\in [K]$, identical to the one we had in the case of one single Gaussian mixture, except for the term $\frac{1}{\alpha n}\log \frac{1}{\Bar{w}_k}$, which may be seen as a penalty for the choice of the component $k\in [K]$ in the mixture. We then bound the expectation term in the above bound by first using Fubini's theorem, and then inverting the minimum and the second expectation:
\begin{multline}
    \E_{P_1, \dots, P_T}\E_{S_1, \dots, S_T}\E_{\pi\sim \hat{\Pi}}[\mathcal{E}(\pi)] - \mathcal{E}^* \leq 4 \inf_{\delta, \tau, \xi^2, b}\Bigg\{\E_{P_{T+1}}\Bigg[ \min_{k\in [K]}\Bigg\{ \\
    \E_{(\Bar{w}, \Bar{\mu}, \Bar{\sigma}^2)\sim q_{\delta, \tau, \xi^2, b}}\left[\frac{1}{\alpha n}\log \frac{1}{\Bar{w}_k} + \frac{d}{2\alpha n}\log \left(2\alpha n L \Bar{\sigma}_k^2 + 1\right) +  \frac{1}{2\alpha n} \frac{\Vert\mu_{P_{T+1}} - \Bar{\mu}_{k}\Vert^2}{\Bar{\sigma}_k^2} \right]\Bigg\}\Bigg] \\
    + \frac{1}{2\beta T} \log \frac{\Gamma(1^\top \delta)}{\Gamma(K) \times \prod_{k=1}^K \Gamma(\delta_k)} + \frac{1}{2\beta T} \sum_{k=1}^K (\delta_k - 1) \left(\psi(\delta_k) - \psi(1^\top \delta)\right) \\
    + \frac{1}{4\beta T} \sum_{k, i} \left(\frac{\tau_{k, i}^2}{\Bar{\Bar{\xi}}_k^2} + \frac{\xi_k^2}{\Bar{\Bar{\xi}}_k^2} - 1 + \log \frac{\Bar{\Bar{\xi}}_k^2}{\xi_k^2}\right) + \frac{1}{\beta T}\sum_{k=1}^K \left(\log \frac{b_k}{\Bar{\Bar{b}}_k} + \frac{\Bar{\Bar{b}}_k - b_k}{b_k}\right)\Bigg\}. \label{main_bound}
\end{multline}

\noindent We can then bound the expectation term, which we decompose as
\begin{multline*}
    \E_{(\Bar{w}, \Bar{\mu}, \Bar{\sigma}^2)\sim q_{\delta, \tau, \xi^2, b}}\left[\frac{1}{\alpha n}\log \frac{1}{\Bar{w}_k} + \frac{d}{2\alpha n}\log \left(2\alpha n L \Bar{\sigma}_k^2 + 1\right) +  \frac{1}{2\alpha n} \frac{\Vert\mu_{P_{T+1}} - \Bar{\mu}_{k}\Vert^2}{\Bar{\sigma}_k^2} \right] \\
    = \frac{1}{\alpha n} \E_{\Bar{w}\sim \Dir(\delta)}\left[\log \frac{1}{\Bar{w}_k}\right] + \frac{d}{2\alpha n} \E_{\Bar{\sigma}_k^2 \sim \Gamma(2, b_k)}\left[\log \left(2\alpha n L \Bar{\sigma}_k^2 + 1\right)\right] \\
    + \frac{1}{2\alpha n} \E_{(\Bar{w}, \Bar{\mu}, \Bar{\sigma}^2)\sim q_{\delta, \tau, \xi^2, b}}\left[\frac{\Vert\mu_{P_{T+1}} - \Bar{\mu}_{k}\Vert^2}{\Bar{\sigma}_k^2}\right].
\end{multline*}

\noindent Jensen's inequality helps to bound both the first term
\begin{align*}
    \frac{1}{\alpha n} \E_{\Bar{w}\sim \Dir(\delta)}\left[\log \frac{1}{\Bar{w}_k}\right] &\leq \frac{1}{\alpha n} \log \E_{\Bar{w}\sim \Dir(\delta)}\left[\frac{1}{\Bar{w}_k}\right] \\
    &= \frac{1}{\alpha n} \log \frac{1^\top \delta -1}{\delta_k-1}
\end{align*}
and the second term
\begin{align*}
    \frac{d}{2\alpha n} \E_{\Bar{\sigma}_k^2 \sim \Gamma(2, b_k)}\left[\log \left(2\alpha n L \Bar{\sigma}_k^2 + 1\right)\right] &\leq \frac{d}{2\alpha n} \log \left(2\alpha n L  \E_{\Bar{\sigma}_k^2 \sim \Gamma(2, b_k)}\left[\Bar{\sigma}_k^2 \right] + 1\right) \\ 
    &= \frac{d}{2\alpha n} \log \left(\frac{4 L\alpha n}{b_k} + 1\right)
\end{align*}
in the decomposition. The third term can be bounded as follows
\begin{align*}
    &\frac{1}{2\alpha n} \E_{(\Bar{w}, \Bar{\mu}, \Bar{\sigma}^2)\sim q_{\delta, \tau, \xi^2, b}}\left[\frac{\Vert\mu_{P_{T+1}} - \Bar{\mu}_{k}\Vert^2}{\Bar{\sigma}_k^2}\right] \\
    &= \frac{b_k}{2\alpha n} \E_{\Bar{\mu}_k \sim \N(\tau_k, \xi_k^2 I_d)}\left[\Vert\mu_{P_{T+1}} - \Bar{\mu}_{k}\Vert^2\right] \\
    &\leq \frac{b_k}{\alpha n}\left(\Vert\mu_{P_{T+1}} - \tau_{k}\Vert^2 + \E_{\Bar{\mu}_k \sim \N(\tau_k, \xi_k^2 I_d)}\left[\Vert\tau_k - \Bar{\mu}_{k}\Vert^2\right]\right) \\
    &= \frac{b_k}{\alpha n}\left(\Vert\mu_{P_{T+1}} - \tau_{k}\Vert^2 + d\xi_k^2\right).
\end{align*}

\noindent The bound on the expectation then becomes
\begin{multline*}
    \E_{(\Bar{w}, \Bar{\mu}, \Bar{\sigma}^2)\sim q_{\delta, \tau, \xi^2, a, b}}\left[\frac{1}{\alpha n}\log \frac{1}{\Bar{w}_k} + \frac{d}{2\alpha n}\log \left(2\alpha n L \Bar{\sigma}_k^2 + 1\right) +  \frac{1}{2\alpha n} \frac{\Vert\mu_{P_{T+1}} - \Bar{\mu}_{k}\Vert^2}{\Bar{\sigma}_k^2} \right] \\
    \leq \frac{1}{\alpha n} \log \frac{1^\top \delta -1}{\delta_k-1} + \frac{d}{2\alpha n} \log \left(\frac{4 L\alpha n}{b_k} + 1\right) + \frac{b_k}{\alpha n}\left(\Vert\mu_{P_{T+1}} - \tau_{k}\Vert^2 + d\xi_k^2\right).
\end{multline*}

\noindent In our final bound, we wish to have as few terms as possible in $O\left(\frac{1}{n}\right)$ while the terms in $O\left(\frac{1}{T}\right)$ are not so problematic, because they correspond to the fast convergence rate at the meta-level. For this reason, we are going to take out of the infimum: 
\begin{itemize}
    \item the term $\frac{d}{2\alpha n} \log \left(\frac{4 L\alpha n}{b_k} + 1\right)$, which is unavoidable and corresponds to the main term of the bound in the worst case, with a $O\left(\frac{1}{n}\right)$ speed of convergence;
    \item the term $\frac{b_k d \xi_k^2}{\alpha n}$, which will be handled through an optimization in $\xi_k^2$ and will be a $O\left(\frac{1}{T}\right)$ term.
\end{itemize}

\noindent As a consequence, we bound the minimum on $[K]$ by
\begin{multline}
    \min_{k\in [K]}\left\{\E_{(\Bar{w}, \Bar{\mu}, \Bar{\sigma}^2)\sim q_{\delta, \tau, \xi^2, b}}\left[\frac{1}{\alpha n}\log \frac{1}{\Bar{w}_k} + \frac{d}{2\alpha n}\log \left(2\alpha n L \Bar{\sigma}_k^2 + 1\right) +  \frac{1}{2\alpha n} \frac{\Vert\mu_{P_{T+1}} - \Bar{\mu}_{k}\Vert^2}{\Bar{\sigma}_k^2} \right]\right\} \leq \\
    \frac{d}{2\alpha n} \sum_{k=1}^K \log \left(\frac{4 \alpha L n}{b_k} + 1\right) + \sum_{k=1}^K \frac{b_k d\xi_k^2}{\alpha n} + \frac{1}{\alpha n}\min_{k\in [K]} \left\{b_k\Vert\mu_{P_{T+1}} - \tau_{k}\Vert^2 + \log \frac{1^\top \delta -1}{\delta_k-1} \right\}, \label{minimum_first_bound}
\end{multline}
and plugging this result in \eqref{main_bound} gives
\begin{multline*}
    \E_{P_1, \dots, P_T}\E_{S_1, \dots, S_T}\E_{\pi\sim \hat{\Pi}}[\mathcal{E}(\pi)] - \mathcal{E}^* \leq 4 \inf_{\delta, \tau, \xi^2, b}\Bigg\{\frac{d}{2\alpha n} \sum_{k=1}^K \log \left(\frac{4 \alpha L n}{b_k} + 1\right) + \sum_{k=1}^K \frac{b_k d\xi_k^2}{\alpha n} \\
    + \frac{1}{\alpha n}\E_{P_{T+1}}\left[\min_{k\in [K]}\left\{b_k \Vert\mu_{P_{T+1}} - \tau_{k}\Vert^2 + \log \frac{1^\top \delta -1}{\delta_k-1}\right\}\right] \\
    + \frac{1}{2\beta T} \log \frac{\Gamma(1^\top \delta)}{\Gamma(K) \times \prod_{k=1}^K \Gamma(\delta_k)} + \frac{1}{2\beta T}\sum_{k=1}^K \left(\delta_k-1\right) \left(\psi(\delta_k) - \psi(1^\top \delta)\right) \\
    + \frac{1}{4\beta T} \sum_{k=1}^K \frac{\Vert\tau_k\Vert^2}{\Bar{\Bar{\xi}}_k^2} + \frac{d}{4\beta T} \sum_{k=1}^K \left(\frac{\xi_k^2}{\Bar{\Bar{\xi}}_k^2} - 1 + \log \frac{\Bar{\Bar{\xi}}_k^2}{\xi_k^2}\right) + \frac{1}{\beta T}\sum_{k=1}^K \left(\log \frac{b_k}{\Bar{\Bar{b}}_k} + \frac{\Bar{\Bar{b}}_k - b_k}{b_k}\right)\Bigg\}. 
\end{multline*}

\noindent An exact optimization in $\xi_k^2$ gives
\begin{equation*}
    \xi_k^2 = \frac{\Bar{\Bar{\xi}}_k^2}{1 + \frac{4b_k \Bar{\Bar{\xi}}_k^2 \beta T}{\alpha n}}
\end{equation*}
and replacing in the bound yields
\begin{multline*}
    \E_{P_1, \dots, P_T}\E_{S_1, \dots, S_T}\E_{\pi\sim \hat{\Pi}}[\mathcal{E}(\pi)] - \mathcal{E}^* \leq 4 \inf_{\delta, \tau, b}\Bigg\{\frac{d}{2\alpha n} \sum_{k=1}^K \log \left(\frac{4 \alpha L n}{b_k} + 1\right) \\
    + \frac{1}{\alpha n}\E_{P_{T+1}}\left[\min_{k\in [K]}\left\{b_k \Vert\mu_{P_{T+1}} - \tau_{k}\Vert^2 + \log \frac{1^\top \delta -1}{\delta_k-1}\right\}\right] \\
    + \frac{1}{2\beta T} \log \frac{\Gamma(1^\top \delta)}{\Gamma(K) \times \prod_{k=1}^K \Gamma(\delta_k)} + \frac{1}{2\beta T} \sum_{k=1}^K \left(\delta_k-1\right) \left(\psi(\delta_k) - \psi(1^\top \delta)\right) \\
    + \frac{1}{4\beta T} \sum_{k=1}^K \frac{\Vert\tau_k\Vert^2}{\Bar{\Bar{\xi}}_k^2} + \frac{d}{4\beta T} \sum_{k=1}^K \log \left(1 + \frac{4b_k \Bar{\Bar{\xi}}_k^2 \beta T}{\alpha n}\right) + \frac{1}{\beta T}\sum_{k=1}^K \left(\log \frac{b_k}{\Bar{\Bar{b}}_k} + \frac{\Bar{\Bar{b}}_k - b_k}{b_k}\right)\Bigg\}. 
\end{multline*}

\noindent From here, we set $\delta_k = 2$ for any $k\in [K]$, which implies 
\begin{equation*}
    \frac{1}{2\beta T} \sum_{k=1}^K \left(\delta_k-1\right) \left(\psi(\delta_k) - \psi(1^\top \delta)\right) \leq 0
\end{equation*}
because $\psi$ is increasing. Please also note that
\begin{equation*}
    \log \frac{\Gamma(1^\top \delta)}{\Gamma(K) \times \prod_{k=1}^K \Gamma(\delta_k)} = \log \frac{\Gamma(2K)}{\Gamma(K)} \leq K \log (2K).
\end{equation*}

\noindent We can then deduce the bound
\begin{multline*}
    \E_{P_1, \dots, P_T}\E_{S_1, \dots, S_T}\E_{\pi\sim \hat{\Pi}}[\mathcal{E}(\pi)] - \mathcal{E}^* \leq 4 \inf_{\tau, b}\Bigg\{\frac{d}{2\alpha n} \sum_{k=1}^K \log \left(\frac{4 \alpha L n}{b_k} + 1\right) + \frac{\log (2K)}{\alpha n} \\
    + \frac{1}{\alpha n} \E_{P_{T+1}}\left[\min_{k\in [K]} \left\{b_k \Vert\mu_{P_{T+1}} - \tau_{k}\Vert^2 \right\}\right] + \frac{K\log (2K)}{2\beta T} + \frac{1}{4\beta T} \sum_{k=1}^K \frac{\Vert\tau_k\Vert^2}{\Bar{\Bar{\xi}}_k^2} \\
    + \frac{d}{4\beta T} \sum_{k=1}^K \log \left(1 + \frac{4b_k \Bar{\Bar{\xi}}_k^2 \beta T}{\alpha n}\right) + \frac{1}{\beta T}\sum_{k=1}^K \left(\log \frac{b_k}{\Bar{\Bar{b}}_k} + \frac{\Bar{\Bar{b}}_k - b_k}{b_k}\right)\Bigg\}.
\end{multline*}

\noindent Let 
\begin{equation*}
    \Sigma_K (\mathcal{P}) := \inf_{\tau_1, \dots, \tau_K}\E_{P_{T+1}\sim \mathcal{P}}\left[\min_{k\in [K]} \Vert\mu_{P_{T+1}} - \tau_{k}\Vert^2 \right],
\end{equation*}
it is clear that
\begin{equation*}
    \E_{P_{T+1}}\left[\min_{k\in [K]} \left\{b_k \Vert\mu_{P_{T+1}} - \tau_{k}\Vert^2 \right\}\right] \leq \Sigma_K(\mathcal{P})\sum_{k=1}^K b_k.
\end{equation*}

\noindent By choosing $\tau_1, \dots, \tau_K$ minimizing $\Sigma_K (\mathcal{P})$, the previous bound becomes
\begin{multline}
    \E_{P_1, \dots, P_T}\E_{S_1, \dots, S_T}\E_{\pi\sim \hat{\Pi}}[\mathcal{E}(\pi)] - \mathcal{E}^* \leq 4 \inf_{b}\Bigg\{\frac{\log (2K)}{\alpha n} + \frac{d}{2\alpha n} \sum_{k=1}^K \log \left(\frac{4 \alpha L n}{b_k} + 1\right) \\
    + \frac{\Sigma_K(\mathcal{P})}{\alpha n}\sum_{k=1}^K b_k + \frac{K\log (2K)}{2\beta T} + \frac{1}{4\beta T} \sum_{k=1}^K \frac{\Vert\tau_k\Vert^2}{\Bar{\Bar{\xi}}_k^2} \\
    + \frac{d}{4\beta T} \sum_{k=1}^K \log \left(1 + \frac{4b_k \Bar{\Bar{\xi}}_k^2 \beta T}{\alpha n}\right) + \frac{1}{\beta T}\sum_{k=1}^K \left(\log \frac{b_k}{\Bar{\Bar{b}}_k} + \frac{\Bar{\Bar{b}}_k - b_k}{b_k}\right)\Bigg\}. \label{bound_case_split}
\end{multline}

\noindent Please note that $\tau_1, \dots, \tau_K$ are characteristic of the distribution $\mathcal{P}$. Intuitively, if the distribution $\mathcal{P}$ has $K$ modes, or $K$ Gaussian mixtures, $\tau_1, \dots, \tau_K$ correspond to the centers of these modes or mixtures up to a permutation. Consequently, they do not scale with $n$ or $T$, but can be regarded as problem parameters of constant order.

Let $\epsilon>0$ which we will specify later, we are going to consider two separate cases:
\begin{itemize}
    \item[-] either $\Sigma_K (\mathcal{P})\leq d\epsilon$, implying that the distribution is concentrated around $\tau_1, \dots, \tau_K$ and the variance of the local distribution around each of those points is smaller than $\epsilon$;
    \item[-] or $\Sigma_K (\mathcal{P})>d\epsilon$.
\end{itemize}

\subsection{First Case: $\Sigma_K(\mathcal{P})\leq d\epsilon$}

In this case, we expect the distribution to well concentrated around $\tau_1, \dots, \tau_K$ (which are the centers of the mixtures). As a result, the optimal parameter in the new task $T+1$ is going to be closed to one of $\tau_1, \dots, \tau_K$ and we can infer it from the previous tasks. This is precisely the case where we expect a significant improvement from the meta-learning over the learning in isolation.

A closer look at the bound above shows that, besides the term $\frac{\log (2K)}{\alpha n}$ (for which we will provide an interpretation later in this section), all the terms are $O\left(\frac{\log T}{T}\right)$, which is the fast rate of convergence at the meta level, except possibly for the terms $\frac{d}{2\alpha n} \sum_{k=1}^K \log \left(\frac{4 \alpha L n}{b_k} + 1\right)$ and $\frac{\Sigma_K(\mathcal{P})}{\alpha n}\sum_{k=1}^K b_k$. Nevertheless, the assumption on $\Sigma_K(\mathcal{P})$ shows that the second of those terms is very small (of order $O\left(\frac{\epsilon}{n}\right)$). For this reason, we are going to choose $b_k$ small enough so that the first term becomes small, and we set
\begin{equation*}
    \forall k\in [K], \ b_k = T.
\end{equation*}

\noindent Replacing in the bound gives
\begin{multline*}
    \E_{P_1, \dots, P_T}\E_{S_1, \dots, S_T}\E_{\pi\sim \hat{\Pi}}[\mathcal{E}(\pi)] - \mathcal{E}^* \leq 4 \Bigg\{\frac{\log (2K)}{\alpha n} + \frac{d}{2\alpha n} \sum_{k=1}^K \log \left(\frac{4 \alpha L n}{T} + 1\right) \\
    + \frac{TK\Sigma_K(\mathcal{P})}{\alpha n} + \frac{K\log (2K)}{2\beta T} + \frac{1}{4\beta T} \sum_{k=1}^K \frac{\Vert\tau_k\Vert^2}{\Bar{\Bar{\xi}}_k^2} \\
    + \frac{d}{4\beta T} \sum_{k=1}^K \log \left(1 + \frac{4\Bar{\Bar{\xi}}_k^2 \beta T^2}{\alpha n}\right) + \frac{1}{\beta T} \sum_{k=1}^K \left(\log \frac{T}{\Bar{\Bar{b}}_k} + \frac{\Bar{\Bar{b}}_k - T}{T}\right)\Bigg\}.
\end{multline*}

\noindent We can easily bound the second term in the bound using the inequality $\log (1+x) \leq x$, which becomes
\begin{equation*}
    \frac{d}{2\alpha n} \sum_{k=1}^K \log \left(\frac{4\alpha L n}{T} + 1\right) \leq \frac{2LdK}{T},
\end{equation*}
and replacing $\Sigma_K(\mathcal{P})$ by its upper bound $d\epsilon$ gives
\begin{multline*}
    \E_{P_1, \dots, P_T}\E_{S_1, \dots, S_T}\E_{\pi\sim \hat{\Pi}}[\mathcal{E}(\pi)] - \mathcal{E}^* \leq \frac{4\log (2K)}{\alpha n} + \frac{8LdK}{T} + \frac{4TK\epsilon}{\alpha n} \\
    + \frac{2K\log (2K)}{\beta T} + \frac{1}{\beta T} \sum_{k=1}^K \frac{\Vert\tau_k\Vert^2}{\Bar{\Bar{\xi}}_k^2} + \frac{d}{\beta T} \sum_{k=1}^K \log \left(1 + \frac{4 \Bar{\Bar{\xi}}_k^2 \beta T^2}{\alpha n}\right) + \frac{4}{\beta T} \sum_{k=1}^K \left(\log \frac{T}{\Bar{\Bar{b}}_k} + \frac{\Bar{\Bar{b}}_k - T}{T}\right).
\end{multline*}

\noindent From here, the choice of the rate $\epsilon = \frac{n}{T^2}$ yields the final bound:
\begin{multline*}
    \E_{P_1, \dots, P_T}\E_{S_1, \dots, S_T}\E_{\pi\sim \hat{\Pi}}[\mathcal{E}(\pi)] - \mathcal{E}^* \leq \\
    \text{CV}_{\text{finite}}(K, n) + \text{CV}_{\text{Gaussian}}(K, d, T) + \text{CV}_{\text{meta}}(T, n, d, K, \Bar{\Bar{b}}, \Bar{\Bar{\xi}}^2, \tau),
\end{multline*}
where we denoted
\begin{equation*}
    \text{CV}_{\text{finite}}(K, n) = \frac{4\log (2K)}{\alpha n}, \quad \text{CV}_{\text{Gaussian}}(K, d, T) = \frac{8LdK}{T} + \frac{4K}{\alpha T}
\end{equation*}
and
\begin{multline*}
    \text{CV}_{\text{meta}}(T, n, d, K, \Bar{\Bar{b}}, \Bar{\Bar{\xi}}^2, \tau) = \frac{2K\log (2K)}{\beta T} + \frac{1}{\beta T} \sum_{k=1}^K \frac{\Vert\tau_k\Vert^2}{\Bar{\Bar{\xi}}_k^2} + \frac{d}{\beta T} \sum_{k=1}^K \log \left(1 + \frac{4 \Bar{\Bar{\xi}}_k^2 \beta T^2}{\alpha n}\right) \\
    + \frac{4}{\beta T} \sum_{k=1}^K \left(\log \frac{T}{\Bar{\Bar{b}}_k} + \frac{\Bar{\Bar{b}}_k - T}{T}\right).
\end{multline*}

\begin{remark}
Please note that 
\begin{equation*}
    \text{CV}_{\text{finite}}(K, n) = O\left(\frac{\log K}{n}\right)
\end{equation*}
and
\begin{equation*}
    \text{CV}_{\text{Gaussian}}(K, d, T) + \text{CV}_{\text{meta}}(T, n, d, K, \Bar{\Bar{b}}, \Bar{\Bar{\xi}}^2, \tau) = O\left(\frac{dK\log T}{T}\right).
\end{equation*}

\noindent This bound comes as no surprise, because the process of learning the parameter $\theta$ in the mixture of Gaussians framework consists of two different steps:
\begin{itemize}
    \item[-] first identifying the $K$ centers of the mixtures which, similarly to the finite case, is captured in the $\frac{\log K}{n}$ term;
    \item[-] then, identifying the right parameters of the Gaussian components centered on the points identified in the previous step. Similarly to the Gaussian case, this is captured in $O\left(\frac{\log T}{T}\right)$, in the most favorable case.
\end{itemize}
\end{remark}

\begin{remark}
It may seem paradoxical that the model of mixtures of Gaussians achieves, in the best possible case, a rate of convergence $O\left(\frac{\log K}{n}\right)$ slower than the $O\left(\frac{\log T}{T}\right)$ rate achieved by a single Gaussian in the regime $n<<T$ under optimal conditions. In reality, the latter rate of convergence is also achievable in the model of mixtures of Gaussians, but similarly as in the case of a single Gaussian component, it requires the strong assumption that
\begin{equation*}
    \Sigma_1(\mathcal{P}) \leq d\epsilon,
\end{equation*}
which is much more restrictive. On the other hand, many distributions only satisfy 
\begin{equation*}
    \Sigma_K(\mathcal{P}) \leq d\epsilon
\end{equation*}
for some $K\geq 2$, in which case the rate of convergence $O\left(\frac{\log K}{n}\right)$ achieved here is much faster than $O\left(\frac{d\log n}{n}\right)$, which is the best possible rate achieved in the single Gaussian model in general.
\end{remark}

\subsection{Second Case: $\Sigma_K(\mathcal{P})>d\epsilon$}

In this case, $\Sigma_K(\mathcal{P})$ is not smaller than $d\epsilon$ and therefore, the choice of a large parameter $b_k$ would make the term $\frac{\Sigma_K(\mathcal{P})}{\alpha n} \sum_{k=1}^K b_k$ too large to achieve the desired rate of convergence. As a result, we set
\begin{equation*}
    \forall k\in [K], \ b_k = 1.
\end{equation*}

\noindent Replacing in the bound \eqref{bound_case_split} gives 
\begin{multline*}
    \E_{P_1, \dots, P_T}\E_{S_1, \dots, S_T}\E_{\pi\sim \hat{\Pi}}[\mathcal{E}(\pi)] - \mathcal{E}^* \leq \frac{4\log (2K)}{\alpha n} + \frac{2dK}{\alpha n} \log \left(1 + 4 \alpha L n\right) + \frac{4K\Sigma_K(\mathcal{P})}{\alpha n} \\
    + \frac{2K\log (2K)}{\beta T} + \frac{1}{\beta T} \sum_{k=1}^K \frac{\Vert\tau_k\Vert^2}{\Bar{\Bar{\xi}}_k^2} + \frac{d}{\beta T} \sum_{k=1}^K \log \left(1 + \frac{4 \Bar{\Bar{\xi}}_k^2 \beta T^2}{\alpha n}\right) + \frac{4}{\beta T}\sum_{k=1}^K \left(\log \frac{1}{\Bar{\Bar{b}}_k} + \Bar{\Bar{b}}_k - 1\right). 
\end{multline*}

\subsection{In Summary}

In any case, the bound can be written as
\begin{multline*}
    \E_{P_1, \dots, P_T}\E_{S_1, \dots, S_T}\E_{\pi\sim \hat{\Pi}}[\mathcal{E}(\pi)] - \mathcal{E}^* \leq \text{CV}_{\text{finite}}(K, n) + K\times \text{CV}_{\text{Gaussian}}\left(d, \Sigma_K(\mathcal{P}), n, T\right) \\
    + \text{CV}_{\text{meta}}(T, n, d, K, \Bar{\Bar{b}}, \Bar{\Bar{\xi}}^2, \tau),
\end{multline*}
where
\begin{equation*}
    \text{CV}_{\text{finite}}(K, n) = \frac{4\log (2K)}{\alpha n}
\end{equation*}
is the convergence term in the finite case (to find the centers of the mixtures), and
\begin{equation*}
    \text{CV}_{\text{Gaussian}}\left(d, K, \Sigma_K(\mathcal{P}), n, T\right) = \left\{
    \begin{array}{ll}
        \frac{8Ld}{T} + \frac{4}{\alpha T} \quad &\mbox{if } \ \Sigma_K(\mathcal{P}) \leq \frac{n}{T^2}; \\
        \frac{2d}{\alpha n} \log \left(1 + 4\alpha L n\right) + \frac{4\Sigma_K(\mathcal{P})}{\alpha n} \quad &\mbox{otherwise;}
    \end{array}
    \right.
\end{equation*}
is equal to the convergence term in the Gaussian case (with one component) and is a $O\left(\frac{1}{T}\right)$ if there exist $K$ points such that distribution $\mathcal{P}$ is concentrated at a rate $\epsilon = \frac{n}{T^2}$ around those $K$ points, and $O\left(\frac{d\log n}{n}\right)$ otherwise. The remaining term is
\begin{multline*}
    \text{CV}_{\text{meta}}(T, n, d, K, \Bar{\Bar{b}}, \Bar{\Bar{\xi}}^2, \tau) = \frac{2K\log (2K)}{\beta T} + \frac{1}{\beta T} \sum_{k=1}^K \frac{\Vert\tau_k\Vert^2}{\Bar{\Bar{\xi}}_k^2} + \frac{d}{\beta T} \sum_{k=1}^K \log \left(1 + \frac{4 \Bar{\Bar{\xi}}_k^2 \beta T^2}{\alpha n}\right) \\
    + \frac{4}{\beta T}\sum_{k=1}^K \left(\log \frac{T}{\Bar{\Bar{b}}_k} + \Bar{\Bar{b}}_k - 1\right)
\end{multline*}
and it is the convergence term term at the meta level. Please note that it is a $O\left(\frac{dK\log T}{T}\right)$.

\subsection{What if the number of components in the mixture is unknown?}

In practice, we do not know in advance how to chose the number of components $K$ in the prior. In this case, we are going to include inside $\mathcal{M}$ all the mixtures of Gaussians, i.e.,
\begin{equation*}
    \mathcal{M} = \left\{p_{w, \mu, \sigma^2} = \sum_{k=1}^{+\infty} w_k \bigotimes_{i=1}^d \N(\mu_{k, i}, \sigma^2_{k, i}): \exists K\geq 1: \forall k\geq K+1, w_k = 0\right\}.
\end{equation*}

\noindent Note that the sum inside the definition of $\F$ is finite, since $w_k = 0$ for any $k$ beyond a certain rank $K$. We still denote by $\pi = \sum_{k=1}^{\Bar{K}} \Bar{w}_k\N(\Bar{\mu}_k, \Bar{\sigma}_k^2 I_d)$ the prior in each task. By definition, for any $k\geq \Bar{K}+1, \Bar{w}_k = 0$. It still holds that, for any $w, \mu, \sigma^2$,
\begin{equation*}
    \KL(p_{w, \mu, \sigma^2} \Vert \pi) \leq \KL(w\Vert \Bar{w}) + \frac{1}{2} \sum_{k=1}^{\infty} w_k \sum_{i=1}^d \left(\frac{(\mu_{k, i} - \Bar{\mu}_{k, i})^2}{\Bar{\sigma}_k^2} + \frac{\sigma_{k, i}^2}{\Bar{\sigma}_k^2} - 1 + \log \frac{\Bar{\sigma}_k^2}{\sigma_{k, i}^2}\right),
\end{equation*}
where we denoted
\begin{equation*}
    \KL(w\Vert \Bar{w}) = \sum_{k=1}^{\infty} w_k \log \frac{w_k}{\Bar{w}_k}.
\end{equation*}

\noindent To put things clearly, the KL remains identical to the case where $K$ is known except for the fact that the sums on $k$ are no longer stopping at a pre-determined $K$. This difference aside, the bound remains identical to the one in the case where $K$ is known, and the bound from Theorem~\ref{theorem_bound_isolation} becomes, at $t = T+1$,
\begin{multline*}
    \E_{S_{T+1}}\E_{\theta\sim \rho_{T+1}(\pi, \alpha)}[R_{P_{T+1}}(\theta)] - R^*_{P_{T+1}} \leq 2\inf_{w, \mu, \sigma^2} \Bigg\{\E_{\theta\sim p_{w, \mu, \sigma^2}}[R_{P_{T+1}}(\theta)] - R_{P_{T+1}}^* + \frac{\KL(w\Vert \Bar{w})}{\alpha n} \\
    + \frac{1}{2\alpha n} \sum_{k=1}^{\infty} w_k \sum_{i=1}^d \left(\frac{(\mu_{k, i} - \Bar{\mu}_{k, i})^2}{\Bar{\sigma}_k^2} + \frac{\sigma_{k, i}^2}{\Bar{\sigma}_k^2} - 1 + \log \frac{\Bar{\sigma}_k^2}{\sigma_{k, i}^2}\right)\Bigg\}.
\end{multline*}

\noindent Next, we are going to define a prior on $K$ within the prior of priors as follows. We assume that the number of mixtures $K$ is smaller than $T$, because even if it were not, it would be impossible to identify them with enough confidence. We define the set of priors on priors
\begin{equation*}
    \G = \Bigg\{q_{x, \delta, \tau, \xi^2, b} = q_{x} \times q_{\delta, \tau, \xi^2, b | K} \Bigg\},
\end{equation*}
where $q_{x} = \Mult(x_1, \dots, x_T)$ is the prior distribution on $K$ and
\begin{equation*}
    q_{\delta, \tau, \xi^2, b | K} = \Dir(\delta_1, \dots, \delta_K)\otimes \bigotimes_{\substack{k\in [K] \\ i\in [d]}} \N(\tau_{k, i}, \xi_k^2) \otimes \bigotimes_{k=1}^K \Gamma(2, b_k).
\end{equation*}
and we set the prior of prior as $\Lambda = q_{\frac{1}{T}1_T, 1_K, 0, \Bar{\Bar{\xi}}^2, \Bar{\Bar{b}}}$. We also need to re-compute the KL divergence between the priors of priors, which becomes
\begin{multline*}
    \KL(q_{\lambda, \delta, \tau, \xi^2, b} \Vert \Lambda) = \log T - H(x) + \E_{K\sim \Mult(x)}\Bigg[\sum_{k=1}^K (\delta_k - 1) \left(\psi(\delta_k) - \psi(1^\top \delta)\right) \\
    + \log \frac{\Gamma(1^\top \delta)}{\Gamma(K) \times \prod_{k=1}^K \Gamma(\delta_k)} + \frac{1}{2} \sum_{k, i} \left(\frac{\tau_{k, i}^2}{\Bar{\Bar{\xi}}_k^2} + \frac{\xi_k^2}{\Bar{\Bar{\xi}}_k^2} - 1 + \log \frac{\Bar{\Bar{\xi}}_k^2}{\xi_k^2}\right) + 2\sum_{k=1}^K \left(\log \frac{b_k}{\Bar{\Bar{b}}_k} + \frac{\Bar{\Bar{b}}_k - b_k}{b_k}\right)\Bigg],
\end{multline*}
using \eqref{kl_multinomial}. Please note that in any optimization on $\G$, we optimize first in $(x_1, \dots, x_T)$ and then on $\delta, \tau, \xi^2, b$ conditionally on $K$. This means that the latter parameters are allowed to depend on $K$. While the infimum on $\G$ of any quantity should be written $\inf_{x} \inf_{\delta, \tau, \xi^2, b \in \sigma(K)}$, we will adopt the shortcut notation $\inf_{x, \delta, \tau, \xi^2, b}$. We can next use the bound from Theorem~\ref{theorem_meta_learning} and we have
\begin{multline*}
    \E_{P_1, \dots, P_T}\E_{S_1, \dots, S_T}\E_{\pi\sim \hat{\Pi}}[\mathcal{E}(\pi)] - \mathcal{E}^* \leq \\
    4 \inf_{x, \delta, \tau, \xi^2, b}\Bigg\{\E_{(\Bar{w}, \Bar{\mu}, \Bar{\sigma}^2)\sim q_{x, \delta, \tau, \xi^2, b}}\E_{P_{T+1}}\Bigg[ \inf_{w, \sigma^2}\Bigg\{L\sum_{k=1}^{\infty} w_k \Vert\sigma_k\Vert^2 + \frac{\KL(w\Vert \Bar{w})}{\alpha n} \\
    + \frac{1}{2\alpha n} \sum_{k=1}^{\infty} w_k \sum_{i=1}^d \left(\frac{(\mu_{k, i} - \Bar{\mu}_{k, i})^2}{\Bar{\sigma}_k^2} + \frac{\sigma_{k, i}^2}{\Bar{\sigma}_k^2} - 1 + \log \frac{\Bar{\sigma}_k^2}{\sigma_{k, i}^2}\right)\Bigg\}\Bigg] \\
    + \frac{1}{2\beta T} \left(\log T - H(x)\right) + \frac{1}{2\beta T}\E_{K\sim \Mult(x)}\Bigg[\sum_{k=1}^K (\delta_k - 1) \left(\psi(\delta_k) - \psi(1^\top \delta)\right) \\
    + \log \frac{\Gamma(1^\top \delta)}{\Gamma(K) \times \prod_{k=1}^K \Gamma(\delta_k)} + \frac{1}{2} \sum_{k, i} \left(\frac{\tau_{k, i}^2}{\Bar{\Bar{\xi}}_k^2} + \frac{\xi_k^2}{\Bar{\Bar{\xi}}_k^2} - 1 + \log \frac{\Bar{\Bar{\xi}}_k^2}{\xi_k^2}\right) + 2\sum_{k=1}^K \left(\log \frac{b_k}{\Bar{\Bar{b}}_k} + \frac{\Bar{\Bar{b}}_k - b_k}{b_k}\right)\Bigg]\Bigg\}.
\end{multline*}

\noindent The optimization on $\sigma_{k, i}^2$ may be performed exactly by setting $\sigma_{k, i}^2 = \frac{\Bar{\sigma}_k^2}{2\alpha L n \Bar{\sigma}_k^2 + 1}$, and the bound becomes
\begin{multline*}
    \E_{P_1, \dots, P_T}\E_{S_1, \dots, S_T}\E_{\pi\sim \hat{\Pi}}[\mathcal{E}(\pi)] - \mathcal{E}^* \leq 4 \inf_{x, \delta, \tau, \xi^2, b}\Bigg\{\E_{(\Bar{w}, \Bar{\mu}, \Bar{\sigma}^2)\sim q_{x, \delta, \tau, \xi^2, b}}\E_{P_{T+1}}\Bigg[ \\
    \inf_{w}\left\{ \frac{\KL(w\Vert \Bar{w})}{\alpha n} + \frac{d}{2\alpha n}\sum_{k=1}^{\infty} w_k \log \left(2\alpha L n \Bar{\sigma}_k^2 + 1\right) +  \frac{1}{2\alpha n} \sum_{k=1}^{\infty} w_k\frac{\Vert\mu_{P_{T+1}} - \Bar{\mu}_{k}\Vert^2}{\Bar{\sigma}_k^2} \right\}\Bigg] \\
    + \frac{1}{2\beta T} \left(\log T - H(x)\right) + \frac{1}{2\beta T}\E_{K\sim \Mult(x)}\Bigg[\sum_{k=1}^K (\delta_k - 1) \left(\psi(\delta_k) - \psi(1^\top \delta)\right) \\
    + \log \frac{\Gamma(1^\top \delta)}{\Gamma(K) \times \prod_{k=1}^K \Gamma(\delta_k)} + \frac{1}{2} \sum_{k, i} \left(\frac{\tau_{k, i}^2}{\Bar{\Bar{\xi}}_k^2} + \frac{\xi_k^2}{\Bar{\Bar{\xi}}_k^2} - 1 + \log \frac{\Bar{\Bar{\xi}}_k^2}{\xi_k^2}\right) + 2\sum_{k=1}^K \left(\log \frac{b_k}{\Bar{\Bar{b}}_k} + \frac{\Bar{\Bar{b}}_k - b_k}{b_k}\right)\Bigg]\Bigg\}.
\end{multline*}

\noindent We restrict the optimization in $w$ to the set $\{(w_k)_{k\geq 1}: \exists k_0\leq \Bar{K}: w_{k_0} = 1, \forall k\ne k_0, w_k = 0\}$, and the bound becomes
\begin{multline*}
    \E_{P_1, \dots, P_T}\E_{S_1, \dots, S_T}\E_{\pi\sim \hat{\Pi}}[\mathcal{E}(\pi)] - \mathcal{E}^* \leq 4 \inf_{x, \delta, \tau, \xi^2, b}\Bigg\{\E_{(\Bar{w}, \Bar{\mu}, \Bar{\sigma}^2)\sim q_{x, \delta, \tau, \xi^2, b}}\E_{P_{T+1}}\Bigg[ \\
    \min_{k\in [K]}\left\{ \frac{1}{\alpha n}\log \frac{1}{\Bar{w}_k} + \frac{d}{2\alpha n}\log \left(2\alpha L n \Bar{\sigma}_k^2 + 1\right) +  \frac{1}{2\alpha n} \frac{\Vert\mu_{P_{T+1}} - \Bar{\mu}_{k}\Vert^2}{\Bar{\sigma}_k^2} \right\}\Bigg] \\
    + \frac{1}{2\beta T} \left(\log T - H(x)\right) + \frac{1}{2\beta T}\E_{K\sim \Mult(x)}\Bigg[\sum_{k=1}^K (\delta_k - 1) \left(\psi(\delta_k) - \psi(1^\top \delta)\right) \\
    + \log \frac{\Gamma(1^\top \delta)}{\Gamma(K) \times \prod_{k=1}^K \Gamma(\delta_k)} + \frac{1}{2} \sum_{k, i} \left(\frac{\tau_{k, i}^2}{\Bar{\Bar{\xi}}_k^2} + \frac{\xi_k^2}{\Bar{\Bar{\xi}}_k^2} - 1 + \log \frac{\Bar{\Bar{\xi}}_k^2}{\xi_k^2}\right) + 2\sum_{k=1}^K \left(\log \frac{b_k}{\Bar{\Bar{b}}_k} + \frac{\Bar{\Bar{b}}_k - b_k}{b_k}\right)\Bigg]\Bigg\}.
\end{multline*}

\noindent Next, we classically decompose the expectation $\E_{(\Bar{w}, \Bar{\mu}, \Bar{\sigma}^2)\sim q_{x, \delta, \tau, \xi^2, b}}[X]$ as
\begin{equation*}
    \E_{(\Bar{w}, \Bar{\mu}, \Bar{\sigma}^2)\sim q_{x, \delta, \tau, \xi^2, b}}[X] = \E_{K\sim \Mult(x)}\left[\E_{(\Bar{w}, \Bar{\mu}, \Bar{\sigma}^2)\sim q_{\delta, \tau, \xi^2, b | K}}[X]\right].
\end{equation*}

\noindent Applying Fubini's theorem and inverting the infimum and the expectation yields the bound
\begin{multline*}
    \E_{P_1, \dots, P_T}\E_{S_1, \dots, S_T}\E_{\pi\sim \hat{\Pi}}[\mathcal{E}(\pi)] - \mathcal{E}^* \leq 4 \inf_{x, \delta, \tau, \xi^2, b}\Bigg\{\E_{K\sim \Mult(x)}\E_{P_{T+1}}\Bigg[ \min_{k\in [K]}\Bigg\{ \\
    \E_{(\Bar{w}, \Bar{\mu}, \Bar{\sigma}^2)\sim q_{\delta, \tau, \xi^2, b | K}}\left[\frac{1}{\alpha n}\log \frac{1}{\Bar{w}_k} + \frac{d}{2\alpha n}\log \left(2\alpha L n \Bar{\sigma}_k^2 + 1\right) +  \frac{1}{2\alpha n} \frac{\Vert\mu_{P_{T+1}} - \Bar{\mu}_{k}\Vert^2}{\Bar{\sigma}_k^2} \right] \Bigg\}\Bigg] \\
    + \frac{1}{2\beta T} \left(\log T - H(x)\right) + \frac{1}{2\beta T}\E_{K\sim \Mult(x)}\Bigg[\sum_{k=1}^K (\delta_k - 1) \left(\psi(\delta_k) - \psi(1^\top \delta)\right) \\
    + \log \frac{\Gamma(1^\top \delta)}{\Gamma(K) \times \prod_{k=1}^K \Gamma(\delta_k)} + \frac{1}{2} \sum_{k, i} \left(\frac{\tau_{k, i}^2}{\Bar{\Bar{\xi}}_k^2} + \frac{\xi_k^2}{\Bar{\Bar{\xi}}_k^2} - 1 + \log \frac{\Bar{\Bar{\xi}}_k^2}{\xi_k^2}\right) + 2\sum_{k=1}^K \left(\log \frac{b_k}{\Bar{\Bar{b}}_k} + \frac{\Bar{\Bar{b}}_k - b_k}{b_k}\right)\Bigg]\Bigg\}.
\end{multline*}

\noindent The bound \eqref{minimum_first_bound} from the previous part still holds:
\begin{multline*}
    \min_{k\in [K]}\left\{\E_{(\Bar{w}, \Bar{\mu}, \Bar{\sigma}^2)\sim q_{\delta, \tau, \xi^2, b | K}}\left[\frac{1}{\alpha n}\log \frac{1}{\Bar{w}_k} + \frac{d}{2\alpha n}\log \left(2\alpha L n \Bar{\sigma}_k^2 + 1\right) +  \frac{1}{2\alpha n} \frac{\Vert\mu_{P_{T+1}} - \Bar{\mu}_{k}\Vert^2}{\Bar{\sigma}_k^2} \right]\right\} \leq \\
    \frac{d}{2\alpha n} \sum_{k=1}^K \log \left(\frac{4 \alpha L n}{b_k} + 1\right) + \sum_{k=1}^K \frac{b_k d\xi_k^2}{\alpha n} + \frac{1}{\alpha n}\min_{k\in [K]} \left\{b_k \Vert\mu_{P_{T+1}} - \tau_{k}\Vert^2 + \log \frac{1^\top \delta -1}{\delta_k-1} \right\},
\end{multline*}
and we can inject it in the computation so that the bound becomes
\begin{multline*}
    \E_{P_1, \dots, P_T}\E_{S_1, \dots, S_T}\E_{\pi\sim \hat{\Pi}}[\mathcal{E}(\pi)] - \mathcal{E}^* \leq 4 \inf_{x, \delta, \tau, \xi^2, b}\Bigg\{ \frac{d}{2\alpha n} \E_{K\sim \Mult(x)}\left[ \sum_{k=1}^K \log \left(\frac{4 \alpha L n}{b_k} + 1\right)\right] \\
    + \frac{d}{\alpha n} \E_{K\sim \Mult(x)}\left[ \sum_{k=1}^K b_k \xi_k^2\right] + \frac{1}{\alpha n}\E_{K\sim \Mult(x)}\E_{P_{T+1}}\left[\min_{k\in [K]} \left\{b_k \Vert\mu_{P_{T+1}} - \tau_{k}\Vert^2 + \log \frac{1^\top \delta -1}{\delta_k-1} \right\}\right] \\
    + \frac{1}{2\beta T} \left(\log T - H(x)\right) + \frac{1}{2\beta T}\E_{K\sim \Mult(x)}\Bigg[\sum_{k=1}^K (\delta_k - 1) \left(\psi(\delta_k) - \psi(1^\top \delta)\right) \\
    + \log \frac{\Gamma(1^\top \delta)}{\Gamma(K) \times \prod_{k=1}^K \Gamma(\delta_k)} + \frac{1}{2} \sum_{k, i} \left(\frac{\tau_{k, i}^2}{\Bar{\Bar{\xi}}_k^2} + \frac{\xi_k^2}{\Bar{\Bar{\xi}}_k^2} - 1 + \log \frac{\Bar{\Bar{\xi}}_k^2}{\xi_k^2}\right) + 2\sum_{k=1}^K \left(\log \frac{b_k}{\Bar{\Bar{b}}_k} + \frac{\Bar{\Bar{b}}_k - b_k}{b_k}\right)\Bigg]\Bigg\}.
\end{multline*}

\noindent An exact optimization in $\xi_k^2$ yields $\xi_k^2 = \frac{\Bar{\Bar{\xi}}_k^2}{1 + \frac{4b_k \Bar{\Bar{\xi}}_k^2 \beta T}{\alpha n}}$ and we can replace in the bound
\begin{multline*}
    \E_{P_1, \dots, P_T}\E_{S_1, \dots, S_T}\E_{\pi\sim \hat{\Pi}}[\mathcal{E}(\pi)] - \mathcal{E}^* \leq 4 \inf_{x, \delta, \tau, b}\Bigg\{ \frac{d}{2\alpha n} \E_{K\sim \Mult(x)}\left[ \sum_{k=1}^K \log \left(\frac{4 \alpha L n}{b_k} + 1\right)\right] \\
    + \frac{1}{\alpha n}\E_{K\sim \Mult(x)}\E_{P_{T+1}}\left[\min_{k\in [K]} \left\{b_k \Vert\mu_{P_{T+1}} - \tau_{k}\Vert^2 + \log \frac{1^\top \delta -1}{\delta_k-1} \right\}\right] + \frac{1}{2\beta T} \left(\log T - H(x)\right) \\
    + \frac{1}{2\beta T}\E_{K\sim \Mult(x)}\Bigg[\sum_{k=1}^K (\delta_k - 1) \left(\psi(\delta_k) - \psi(1^\top \delta)\right) + \log \frac{\Gamma(1^\top \delta)}{\Gamma(K) \times \prod_{k=1}^K \Gamma(\delta_k)} \\
    + \frac{1}{2} \sum_{k=1}^K \frac{\Vert\tau_k\Vert^2}{\Bar{\Bar{\xi}}_k^2} + \frac{d}{2} \sum_{k=1}^K \log \left(1 + \frac{4b_k \Bar{\Bar{\xi}}_k^2 \beta T}{\alpha n}\right) + 2\sum_{k=1}^K \left(\log \frac{b_k}{\Bar{\Bar{b}}_k} + \frac{\Bar{\Bar{b}}_k - b_k}{b_k}\right)\Bigg] \Bigg\}.
\end{multline*}

\noindent We choose $\delta_k = 2$ for any $k\geq 1$ and noting that both
\begin{equation*}
    \sum_{k=1}^K (\delta_k - 1) \left(\psi(\delta_k) - \psi(1^\top \delta)\right) \leq 0
\end{equation*}
and
\begin{equation*}
    \log \frac{\Gamma(1^\top \delta)}{\Gamma(K) \times \prod_{k=1}^K \Gamma(\delta_k)} = \log \frac{\Gamma(2K)}{\Gamma(K)} \leq K \log (2K),
\end{equation*}
 we deduce the following bound:
\begin{multline*}
    \E_{P_1, \dots, P_T}\E_{S_1, \dots, S_T}\E_{\pi\sim \hat{\Pi}}[\mathcal{E}(\pi)] - \mathcal{E}^* \leq 4 \inf_{x, \tau, b}\Bigg\{ \frac{d}{2\alpha n} \E_{K\sim \Mult(x)}\left[ \sum_{k=1}^K \log \left(\frac{4 \alpha L n}{b_k} + 1\right)\right] \\
    + \frac{1}{\alpha n}\E_{K\sim \Mult(x)}\E_{P_{T+1}}\left[\min_{k\in [K]} \left\{b_k \Vert\mu_{P_{T+1}} - \tau_{k}\Vert^2\right\}\right] \\
    + \frac{1}{\alpha n} \E_{K\sim \Mult(x)}\left[\log(2K)\right] + \frac{1}{2\beta T} \left(\log T - H(x)\right) + \frac{1}{2\beta T}\E_{K\sim \Mult(x)}\Bigg[ K\log (2K) \\
    + \frac{1}{2} \sum_{k=1}^K \frac{\Vert\tau_k\Vert^2}{\Bar{\Bar{\xi}}_k^2} + \frac{d}{2} \sum_{k=1}^K \log \left(1 + \frac{4b_k \Bar{\Bar{\xi}}_k^2 \beta T}{\alpha n}\right) + 2\sum_{k=1}^K \left(\log \frac{b_k}{\Bar{\Bar{b}}_k} + \frac{\Bar{\Bar{b}}_k - b_k}{b_k}\right)\Bigg] \Bigg\}.
\end{multline*}

\noindent Recall the (unchanged) definition of $\Sigma_K(\mathcal{P})$:
\begin{equation*}
    \Sigma_K (\mathcal{P}) = \inf_{\tau_1, \dots, \tau_K}\E_{P_{T+1}\sim \mathcal{P}}\left[\min_{k\in [K]} \Vert\mu_{P_{T+1}} - \tau_{k}\Vert^2 \right].
\end{equation*}

\noindent Recalling that $\tau$ (as well as $b$) is allowed to depend on $K$, we define $(\tau_1, \dots, \tau_K)$ as the argument (up to a permutation) of $\Sigma_K(\mathcal{P})$. It follows that the bound becomes
\begin{multline*}
    \E_{P_1, \dots, P_T}\E_{S_1, \dots, S_T}\E_{\pi\sim \hat{\Pi}}[\mathcal{E}(\pi)] - \mathcal{E}^* \leq 4 \inf_{x, b}\Bigg\{ \frac{d}{2\alpha n} \E_{K\sim \Mult(x)}\left[ \sum_{k=1}^K \log \left(\frac{4 \alpha L n}{b_k} + 1\right)\right] \\
    + \frac{1}{\alpha n} \E_{K\sim \Mult(x)}\left[\sum_{k=1}^K b_k \Sigma_K(\mathcal{P}) + \log(2K)\right] + \frac{1}{2\beta T} \left(\log T - H(x)\right) \\
    + \frac{1}{2\beta T}\E_{K\sim \Mult(x)}\Bigg[ K\log (2K) + \frac{1}{2} \sum_{k=1}^K \frac{\Vert\tau_k\Vert^2}{\Bar{\Bar{\xi}}_k^2} + \frac{d}{2} \sum_{k=1}^K \log \left(1 + \frac{4b_k \Bar{\Bar{\xi}}_k^2 \beta T}{\alpha n}\right) \\
    + 2\sum_{k=1}^K \left(\log \frac{b_k}{\Bar{\Bar{b}}_k} + \frac{\Bar{\Bar{b}}_k - b_k}{b_k}\right)\Bigg] \Bigg\}.
\end{multline*}

\noindent From here, we are going to distinguish two different cases, as we did before:
\begin{itemize}
    \item[-] first case: there exists $K$ reasonably small such that $\Sigma_{K}(\mathcal{P})\leq d\epsilon$;
    \item[-] second case: for any reasonable $K$, $\Sigma_{K}(\mathcal{P})>d\epsilon$.
\end{itemize}

\subsubsection{First Case}

Again, this is the case where we expect some improvement from the meta-learning. Setting
\begin{equation*}
    \forall k\in [K], b_k = T,
\end{equation*}
the bound becomes
\begin{multline*}
    \E_{P_1, \dots, P_T}\E_{S_1, \dots, S_T}\E_{\pi\sim \hat{\Pi}}[\mathcal{E}(\pi)] - \mathcal{E}^* \leq 4 \inf_{x}\Bigg\{ \frac{d}{2\alpha n} \E_{K\sim \Mult(x)}\left[ \sum_{k=1}^K \log \left(\frac{4 \alpha L n}{T} + 1\right)\right] \\
    + \frac{1}{\alpha n} \E_{K\sim \Mult(x)}\left[KT\Sigma_K(\mathcal{P}) + \log(2K)\right] + \frac{1}{2\beta T} \left(\log T - H(x)\right) + \frac{1}{2\beta T}\E_{K\sim \Mult(x)}\Bigg[ K\log (2K) \\
    + \frac{1}{2} \sum_{k=1}^K \frac{\Vert\tau_k\Vert^2}{\Bar{\Bar{\xi}}_k^2} + \frac{d}{2} \sum_{k=1}^K \log \left(1 + \frac{4\Bar{\Bar{\xi}}_k^2 \beta T^2}{\alpha n}\right) + 2\sum_{k=1}^K \left(\log \frac{T}{\Bar{\Bar{b}}_k} + \frac{\Bar{\Bar{b}}_k - T}{T}\right)\Bigg] \Bigg\},
\end{multline*}
and after the simplification
\begin{equation*}
    \frac{d}{2\alpha n} \sum_{k=1}^K \log \left(\frac{4 \alpha L n}{T} + 1\right) \leq \frac{2LdK}{T},
\end{equation*}
we deduce
\begin{multline*}
    \E_{P_1, \dots, P_T}\E_{S_1, \dots, S_T}\E_{\pi\sim \hat{\Pi}}[\mathcal{E}(\pi)] - \mathcal{E}^* \leq 4 \inf_{x}\Bigg\{ \frac{2Ld}{T} \E_{K\sim \Mult(x)}\left[ K\right] + \frac{1}{\alpha n} \E_{K\sim \Mult(x)}\left[\log(2K)\right] \\
    + \frac{T}{\alpha n} \E_{K\sim \Mult(x)}\left[K\Sigma_K(\mathcal{P})\right] + \frac{1}{2\beta T} \left(\log T - H(x)\right) + \frac{1}{2\beta T}\E_{K\sim \Mult(x)}\Bigg[ K\log (2K) \\
    + \frac{1}{2} \sum_{k=1}^K \frac{\Vert\tau_k\Vert^2}{\Bar{\Bar{\xi}}_k^2} + \frac{d}{2} \sum_{k=1}^K \log \left(1 + \frac{4\Bar{\Bar{\xi}}_k^2 \beta T^2}{\alpha n}\right) + 2\sum_{k=1}^K \left(\log \frac{T}{\Bar{\Bar{b}}_k} + \frac{\Bar{\Bar{b}}_k - T}{T}\right)\Bigg] \Bigg\},
\end{multline*}

\noindent Then, choosing to restrict the infimum on all the multinomial distributions $\Mult(x_1, \dots, x_T)$ to all the Dirac masses, i.e., all the $(x_1, \dots, x_T)$ such that there exists $K\in \{1, \dots, T\}$ such that $x_K = 1$. It follows that $H(x) = 0$ and we deduce that
\begin{multline*}
    \E_{P_1, \dots, P_T}\E_{S_1, \dots, S_T}\E_{\pi\sim \hat{\Pi}}[\mathcal{E}(\pi)] - \mathcal{E}^* \leq \inf_{K\in [T]}\Bigg\{ \frac{4\log (2K)}{\alpha n} + \frac{8LdK}{T} + \frac{4TK\Sigma_K(\mathcal{P})}{\alpha n} + \frac{2\log T}{\beta T} \\
    + \frac{2}{\beta T}\Bigg[ K\log (2K) + \frac{1}{2} \sum_{k=1}^K \frac{\Vert\tau_k\Vert^2}{\Bar{\Bar{\xi}}_k^2} + \frac{d}{2} \sum_{k=1}^K \log \left(1 + \frac{4\Bar{\Bar{\xi}}_k^2 \beta T^2}{\alpha n}\right) + 2\sum_{k=1}^K \left(\log \frac{T}{\Bar{\Bar{b}}_k} + \frac{\Bar{\Bar{b}}_k - T}{T}\right)\Bigg] \Bigg\},
\end{multline*}

\noindent In particular, if there exists a $K$ relatively small such that $\Sigma_K(\mathcal{P})\leq d\epsilon = \frac{dn}{T^2}$, then the bound becomes
\begin{multline*}
    \E_{P_1, \dots, P_T}\E_{S_1, \dots, S_T}\E_{\pi\sim \hat{\Pi}}[\mathcal{E}(\pi)] - \mathcal{E}^* \leq \inf_{K\in [T]}\Bigg\{ \frac{4\log (2K)}{\alpha n} + \frac{8LdK}{T} + \frac{4dK}{\alpha T} + \frac{2\log T}{\beta T} \\
    + \frac{2}{\beta T}\Bigg[ K\log (2K) + \frac{1}{2} \sum_{k=1}^K \frac{\Vert\tau_k\Vert^2}{\Bar{\Bar{\xi}}_k^2} + \frac{d}{2} \sum_{k=1}^K \log \left(1 + \frac{4\Bar{\Bar{\xi}}_k^2 \beta T^2}{\alpha n}\right) + 2\sum_{k=1}^K \left(\log \frac{T}{\Bar{\Bar{b}}_k} + \frac{\Bar{\Bar{b}}_k - T}{T}\right)\Bigg] \Bigg\}.
\end{multline*}

\subsubsection{Second Case}

Recall the bound:
\begin{multline*}
    \E_{P_1, \dots, P_T}\E_{S_1, \dots, S_T}\E_{\pi\sim \hat{\Pi}}[\mathcal{E}(\pi)] - \mathcal{E}^* \leq 4 \inf_{x, b}\Bigg\{ \frac{d}{2\alpha n} \E_{K\sim \Mult(x)}\left[ \sum_{k=1}^K \log \left(\frac{4\alpha L n}{b_k} + 1\right)\right] \\
    + \frac{1}{\alpha n} \E_{K\sim \Mult(x)}\left[\sum_{k=1}^K b_k \Sigma_K(\mathcal{P}) + \log(2K)\right] + \frac{1}{2\beta T} \left(\log T - H(x)\right) \\
    + \frac{1}{2\beta T}\E_{K\sim \Mult(x)}\Bigg[ K\log (2K) + \frac{1}{2} \sum_{k=1}^K \frac{\Vert\tau_k\Vert^2}{\Bar{\Bar{\xi}}_k^2} + \frac{d}{2} \sum_{k=1}^K \log \left(1 + \frac{4b_k \Bar{\Bar{\xi}}_k^2 \beta T}{\alpha n}\right) \\
    + 2\sum_{k=1}^K \left(\log \frac{b_k}{\Bar{\Bar{b}}_k} + \frac{\Bar{\Bar{b}}_k - b_k}{b_k}\right)\Bigg] \Bigg\}.
\end{multline*}

\noindent In this case, we do not expect much improvement from meta-learning, and we will simply choose
\begin{equation*}
    \forall k\in \{1, \dots, T\}, \ b_k = 1.
\end{equation*}

\noindent The bound then becomes
\begin{multline*}
    \E_{P_1, \dots, P_T}\E_{S_1, \dots, S_T}\E_{\pi\sim \hat{\Pi}}[\mathcal{E}(\pi)] - \mathcal{E}^* \leq 4 \inf_{x}\Bigg\{ \frac{d\log \left(1 + 4 \alpha L n\right)}{2\alpha n} \E_{K\sim \Mult(x)}\left[ K \right] \\
    + \frac{1}{\alpha n} \E_{K\sim \Mult(x)}\left[K\Sigma_K(\mathcal{P}) + \log(2K)\right] + \frac{1}{2\beta T} \left(\log T - H(x)\right) + \frac{1}{2\beta T}\E_{K\sim \Mult(x)}\Bigg[ K\log (2K) \\
    + \frac{1}{2} \sum_{k=1}^K \frac{\Vert\tau_k\Vert^2}{\Bar{\Bar{\xi}}_k^2} + \frac{d}{2} \sum_{k=1}^K \log \left(1 + \frac{4\Bar{\Bar{\xi}}_k^2 \beta T}{\alpha n}\right) + 2\sum_{k=1}^K \left(\log \frac{1}{\Bar{\Bar{b}}_k} + \Bar{\Bar{b}}_k - 1\right)\Bigg] \Bigg\}.
\end{multline*}

\noindent Similarly as before, we restrict the minimization to the set of Dirac distributions and we deduce
\begin{multline*}
    \E_{P_1, \dots, P_T}\E_{S_1, \dots, S_T}\E_{\pi\sim \hat{\Pi}}[\mathcal{E}(\pi)] - \mathcal{E}^* \\
    \leq \inf_{K\in [K]}\Bigg\{ \frac{4\log(2K)}{\alpha n} + \frac{2dK\log \left(1 + 4 \alpha L n\right)}{\alpha n} + \frac{4K\Sigma_K(\mathcal{P})}{\alpha n} + \frac{2\log T}{\beta T} \\
    + \frac{2}{\beta T}\left[ K\log (2K) + \frac{1}{2} \sum_{k=1}^K \frac{\Vert\tau_k\Vert^2}{\Bar{\Bar{\xi}}_k^2} + \frac{d}{2} \sum_{k=1}^K \log \left(1 + \frac{4\Bar{\Bar{\xi}}_k^2 \beta T}{\alpha n}\right) + 2\sum_{k=1}^K \left(\log \frac{1}{\Bar{\Bar{b}}_k} + \Bar{\Bar{b}}_k - 1\right)\right] \Bigg\}.
\end{multline*}

\subsubsection{Overall Bound}

In summary, the bound can be written as
\begin{multline*}
    \E_{P_1, \dots, P_T}\E_{S_1, \dots, S_T}\E_{\pi\sim \hat{\Pi}}[\mathcal{E}(\pi)] - \mathcal{E}^* \leq \\
    \inf_{K\in [T]} \Bigg\{\text{CV}_{\text{finite}}(K, n) + K\times \text{CV}_{\text{Gaussian}}\left(d, \Sigma_K(\mathcal{P}), n, T\right) + \text{CV}_{\text{meta}}^{\text{unknown}}(T, n, d, K, \Bar{\Bar{b}}, \Bar{\Bar{\xi}}^2, \tau)\Bigg\},
\end{multline*}
where $\text{CV}_{\text{finite}}(K, n)$ and $\text{CV}_{\text{Gaussian}}\left(d, K, \Sigma_K(\mathcal{P}), n, T\right)$ are exactly the same terms as in the case where the number of mixtures $K$ is known, and the convergence term at the meta level becomes
\begin{equation*}
    \text{CV}_{\text{meta}}^{\text{unknown}}(T, n, d, K, \Bar{\Bar{b}}, \Bar{\Bar{\xi}}^2, \tau) = \text{CV}_{\text{meta}}(T, n, d, K, \Bar{\Bar{b}}, \Bar{\Bar{\xi}}^2, \tau) + \frac{2\log T}{\beta T}.
\end{equation*}

\noindent Even when the number of mixtures $K$ is unknown, the same bound as in the case of $K$ known can be achieved up to a $\frac{2\log T}{\beta T}$ term, which is the order of the time required to find the optimal number of components in the mixture at the meta level.


\begin{thebibliography}{95}
\providecommand{\natexlab}[1]{#1}
\providecommand{\url}[1]{\texttt{#1}}
\expandafter\ifx\csname urlstyle\endcsname\relax
  \providecommand{\doi}[1]{doi: #1}\else
  \providecommand{\doi}{doi: \begingroup \urlstyle{rm}\Url}\fi

\bibitem[Achille et~al.(2019)Achille, Lam, Tewari, Ravichandran, Maji, Fowlkes,
  Soatto, and Perona]{achille2019task2vec}
Alessandro Achille, Michael Lam, Rahul Tewari, Avinash Ravichandran, Subhransu
  Maji, Charless~C Fowlkes, Stefano Soatto, and Pietro Perona.
\newblock Task2vec: Task embedding for meta-learning.
\newblock In \emph{Proceedings of the IEEE/CVF international conference on
  computer vision}, pages 6430--6439, 2019.

\bibitem[Al-Shedivat et~al.(2021)Al-Shedivat, Li, Xing, and
  Talwalkar]{al2021data}
Maruan Al-Shedivat, Liam Li, Eric Xing, and Ameet Talwalkar.
\newblock On data efficiency of meta-learning.
\newblock In \emph{International Conference on Artificial Intelligence and
  Statistics}, pages 1369--1377. PMLR, 2021.

\bibitem[Alquier et~al.(2016)Alquier, Ridgway, and Chopin]{alq2016}
P.~Alquier, J.~Ridgway, and N.~Chopin.
\newblock On the properties of variational approximations of {G}ibbs
  posteriors.
\newblock \emph{Journal of Machine Learning Research}, 17\penalty0
  (239):\penalty0 1--41, 2016.
\newblock URL \url{http://jmlr.org/papers/v17/15-290.html}.

\bibitem[Alquier(2021)]{alquier2021user}
Pierre Alquier.
\newblock User-friendly introduction to {PAC-B}ayes bounds.
\newblock \emph{arXiv preprint arXiv:2110.11216}, 2021.

\bibitem[Alquier et~al.(2017)Alquier, Mai, and Pontil]{pmlr-v54-alquier17a}
Pierre Alquier, The~Tien Mai, and Massimiliano Pontil.
\newblock {Regret Bounds for Lifelong Learning}.
\newblock In Aarti Singh and Jerry Zhu, editors, \emph{Proceedings of the 20th
  International Conference on Artificial Intelligence and Statistics},
  volume~54 of \emph{Proceedings of Machine Learning Research}, pages 261--269,
  Fort Lauderdale, FL, USA, 20--22 Apr 2017. PMLR.

\bibitem[Amit and Meir(2018)]{ami2018}
R.~Amit and R.~Meir.
\newblock Meta-learning by adjusting priors based on extended {PAC-Bayes}
  theory.
\newblock In \emph{International Conference on Machine Learning}, pages
  205--214. PMLR, 2018.

\bibitem[Bai et~al.(2021)Bai, Chen, Zhou, Zhao, Lee, Kakade, Wang, and
  Xiong]{bai2021important}
Yu~Bai, Minshuo Chen, Pan Zhou, Tuo Zhao, Jason Lee, Sham Kakade, Huan Wang,
  and Caiming Xiong.
\newblock How important is the train-validation split in meta-learning?
\newblock In \emph{International Conference on Machine Learning}, pages
  543--553. PMLR, 2021.

\bibitem[Balcan et~al.(2019)Balcan, Khodak, and Talwalkar]{balcan2019provable}
Maria-Florina Balcan, Mikhail Khodak, and Ameet Talwalkar.
\newblock Provable guarantees for gradient-based meta-learning.
\newblock In \emph{International Conference on Machine Learning}, pages
  424--433. PMLR, 2019.

\bibitem[Bartlett et~al.(2006)Bartlett, Jordan, and
  McAuliffe]{bartlett2003large}
P.~L. Bartlett, M.~I. Jordan, and J.~D. McAuliffe.
\newblock Convexity, classification, and risk bounds.
\newblock \emph{Journal of the American Statistical Association}, 101\penalty0
  (473):\penalty0 138--156, 2006.

\bibitem[Bartlett and Mendelson(2006)]{bartlett2006empirical}
Peter~L Bartlett and Shahar Mendelson.
\newblock Empirical minimization.
\newblock \emph{Probability theory and related fields}, 135\penalty0
  (3):\penalty0 311--334, 2006.

\bibitem[Baxter(2000)]{baxter2000model}
Jonathan Baxter.
\newblock A model of inductive bias learning.
\newblock \emph{Journal of artificial intelligence research}, 12:\penalty0
  149--198, 2000.

\bibitem[Ben-David and Schuller(2003)]{ben2003exploiting}
Shai Ben-David and Reba Schuller.
\newblock Exploiting task relatedness for multiple task learning.
\newblock In \emph{Learning Theory and Kernel Machines: 16th Annual Conference
  on Learning Theory and 7th Kernel Workshop, COLT/Kernel 2003, Washington, DC,
  USA, August 24-27, 2003. Proceedings}, pages 567--580. Springer, 2003.

\bibitem[Bissiri et~al.(2016)Bissiri, Holmes, and Walker]{bis2016}
P.~G. Bissiri, C.~C. Holmes, and S.~G. Walker.
\newblock A general framework for updating belief distributions.
\newblock \emph{Journal of the Royal Statistical Society: Series B (Statistical
  Methodology)}, 78\penalty0 (5):\penalty0 1103--1130, 2016.

\bibitem[Boucheron et~al.(2013)Boucheron, Lugosi, and Massart]{bou2013}
S.~Boucheron, G.~Lugosi, and P.~Massart.
\newblock \emph{Concentration Inequalities}.
\newblock Oxford University Press, 2013.

\bibitem[Caruana(1997)]{caruanamulti1997}
R.~Caruana.
\newblock Multitask learning.
\newblock \emph{Machine Learning}, 28\penalty0 (1):\penalty0 41--75, 1997.

\bibitem[Catoni(2004)]{cat2004}
O.~Catoni.
\newblock \emph{Statistical Learning Theory and Stochastic Optimization}.
\newblock Saint-Flour Summer School on Probability Theory 2001 (Jean Picard
  ed.), Lecture Notes in Mathematics. Springer, 2004.

\bibitem[Catoni(2007)]{cat2007}
O.~Catoni.
\newblock \emph{{{PAC}}-{B}ayesian supervised classification: the
  thermodynamics of statistical learning}.
\newblock Institute of Mathematical Statistics Lecture Notes -- Monograph
  Series, 56. Institute of Mathematical Statistics, Beachwood, OH, 2007.

\bibitem[Chen et~al.(2020)Chen, Wu, Li, Li, Zhan, and Chung]{chen2020closer}
Jiaxin Chen, Xiao-Ming Wu, Yanke Li, Qimai Li, Li-Ming Zhan, and Fu-lai Chung.
\newblock A closer look at the training strategy for modern meta-learning.
\newblock \emph{Advances in Neural Information Processing Systems},
  33:\penalty0 396--406, 2020.

\bibitem[Chen and Chen(2022)]{chen2022bayesian}
Lisha Chen and Tianyi Chen.
\newblock Is bayesian model-agnostic meta learning better than model-agnostic
  meta learning, provably?
\newblock In \emph{International Conference on Artificial Intelligence and
  Statistics}, pages 1733--1774. PMLR, 2022.

\bibitem[Chen et~al.(2023)Chen, Jose, Nikoloska, Park, Chen, and
  Simeone]{chen2023learning}
Lisha Chen, Sharu~Theresa Jose, Ivana Nikoloska, Sangwoo Park, Tianyi Chen, and
  Osvaldo Simeone.
\newblock Learning with limited samples: Meta-learning and applications to
  communication systems.
\newblock \emph{Foundations and Trends{\textregistered} in Signal Processing},
  17\penalty0 (2):\penalty0 79--208, 2023.

\bibitem[Chen et~al.(2021)Chen, Shui, and Marchand]{chen2021generalization}
Qi~Chen, Changjian Shui, and Mario Marchand.
\newblock Generalization bounds for meta-learning: An information-theoretic
  analysis.
\newblock \emph{Advances in Neural Information Processing Systems},
  34:\penalty0 25878--25890, 2021.

\bibitem[Cover and Thomas()]{cover}
T.~Cover and J.~Thomas.
\newblock \emph{Elements of Information Theory}.
\newblock Wileys Series in Communication.

\bibitem[Dalalyan and Tsybakov(2008)]{dal2008}
A.~Dalalyan and A.~B. Tsybakov.
\newblock Aggregation by exponential weighting, sharp {PAC-B}ayesian bounds and
  sparsity.
\newblock \emph{Machine Learning}, 72\penalty0 (1-2):\penalty0 39--61, 2008.

\bibitem[Denevi et~al.(2018{\natexlab{a}})Denevi, Ciliberto, Stamos, and
  Pontil]{Denevi2018}
G.~Denevi, C.~Ciliberto, D.~Stamos, and M.~Pontil.
\newblock Learning to learn around a common mean.
\newblock \emph{Advances in neural information processing systems}, 31,
  2018{\natexlab{a}}.

\bibitem[Denevi et~al.(2018{\natexlab{b}})Denevi, Ciliberto, Stamos, and
  Pontil]{denevi2018incremental}
G.~Denevi, C.~Ciliberto, D.~Stamos, and M.~Pontil.
\newblock Incremental learning-to-learn with statistical guarantees.
\newblock \emph{arXiv preprint arXiv:1803.08089}, 2018{\natexlab{b}}.

\bibitem[Denevi et~al.(2019{\natexlab{a}})Denevi, Ciliberto, Grazzi, and
  Pontil]{Denevi2019}
G.~Denevi, C.~Ciliberto, R.~Grazzi, and M.~Pontil.
\newblock Learning-to-learn stochastic gradient descent with biased
  regularization.
\newblock In \emph{International Conference on Machine Learning}, pages
  1566--1575. PMLR, 2019{\natexlab{a}}.

\bibitem[Denevi et~al.(2019{\natexlab{b}})Denevi, Stamos, Ciliberto, and
  Pontil]{denevi2019online}
Giulia Denevi, Dimitris Stamos, Carlo Ciliberto, and Massimiliano Pontil.
\newblock Online-within-online meta-learning.
\newblock \emph{Advances in Neural Information Processing Systems}, 32,
  2019{\natexlab{b}}.

\bibitem[Denevi et~al.(2020)Denevi, Pontil, and Ciliberto]{denevi2020advantage}
Giulia Denevi, Massimiliano Pontil, and Carlo Ciliberto.
\newblock The advantage of conditional meta-learning for biased regularization
  and fine tuning.
\newblock \emph{Advances in Neural Information Processing Systems},
  33:\penalty0 964--974, 2020.

\bibitem[Ding et~al.(2021)Ding, Chen, Levinboim, Goodman, and
  Soricut]{DingPAC2021}
N.~Ding, X.~Chen, T.~Levinboim, S.~Goodman, and R.~Soricut.
\newblock Bridging the gap between practice and pac-bayes theory in few-shot
  meta-learning.
\newblock \emph{Advances in neural information processing systems},
  34:\penalty0 29506--29516, 2021.

\bibitem[Donsker and Varadhan(1976)]{don1975}
M.~D. Donsker and S.~S. Varadhan.
\newblock Asymptotic evaluation of certain markov process expectations for
  large time. iii.
\newblock \emph{Communications on Pure and Applied Mathematics}, 28:\penalty0
  389--461, 1976.

\bibitem[Dou et~al.(2019)Dou, Yu, and Anastasopoulos]{dou2019investigating}
Zi-Yi Dou, Keyi Yu, and Antonios Anastasopoulos.
\newblock Investigating meta-learning algorithms for low-resource natural
  language understanding tasks.
\newblock \emph{arXiv preprint arXiv:1908.10423}, 2019.

\bibitem[Du et~al.(2020)Du, Hu, Kakade, Lee, and Lei]{du2020few}
Simon~S Du, Wei Hu, Sham~M Kakade, Jason~D Lee, and Qi~Lei.
\newblock Few-shot learning via learning the representation, provably.
\newblock \emph{arXiv preprint arXiv:2002.09434}, 2020.

\bibitem[Fallah et~al.(2021)Fallah, Mokhtari, and
  Ozdaglar]{fallah2021generalization}
Alireza Fallah, Aryan Mokhtari, and Asuman Ozdaglar.
\newblock Generalization of model-agnostic meta-learning algorithms: Recurring
  and unseen tasks.
\newblock \emph{Advances in Neural Information Processing Systems},
  34:\penalty0 5469--5480, 2021.

\bibitem[Farid and Majumdar(2021)]{FaridPAC2021}
A.~Farid and A.~Majumdar.
\newblock Generalization bounds for meta-learning via pac-bayes and uniform
  stability.
\newblock \emph{Advances in neural information processing systems},
  34:\penalty0 2173--2186, 2021.

\bibitem[Finn et~al.(2017)Finn, Abbeel, and Levine]{finn2017model}
Chelsea Finn, Pieter Abbeel, and Sergey Levine.
\newblock Model-agnostic meta-learning for fast adaptation of deep networks.
\newblock In \emph{International conference on machine learning}, pages
  1126--1135. PMLR, 2017.

\bibitem[Finn et~al.(2019)Finn, Rajeswaran, Kakade, and Levine]{finn2019online}
Chelsea Finn, Aravind Rajeswaran, Sham Kakade, and Sergey Levine.
\newblock Online meta-learning.
\newblock In \emph{International Conference on Machine Learning}, pages
  1920--1930. PMLR, 2019.

\bibitem[Ghosal and Van~der Vaart(2017)]{gho2017}
S.~Ghosal and A.~Van~der Vaart.
\newblock \emph{Fundamentals of nonparametric {B}ayesian inference}, volume~44.
\newblock Cambridge University Press, 2017.

\bibitem[Gidaris and Komodakis(2018)]{gidaris2018dynamic}
Spyros Gidaris and Nikos Komodakis.
\newblock Dynamic few-shot visual learning without forgetting.
\newblock In \emph{Proceedings of the IEEE conference on computer vision and
  pattern recognition}, pages 4367--4375, 2018.

\bibitem[Grant et~al.(2018)Grant, Finn, Levine, Darrell, and
  Griffiths]{grant2018recasting}
Erin Grant, Chelsea Finn, Sergey Levine, Trevor Darrell, and Thomas Griffiths.
\newblock Recasting gradient-based meta-learning as hierarchical bayes.
\newblock \emph{arXiv preprint arXiv:1801.08930}, 2018.

\bibitem[Gr{\"u}nwald and Mehta(2020)]{grunwald2020fast}
Peter~D Gr{\"u}nwald and Nishant~A Mehta.
\newblock Fast rates for general unbounded loss functions: from erm to
  generalized bayes.
\newblock \emph{The Journal of Machine Learning Research}, 21\penalty0
  (1):\penalty0 2040--2119, 2020.

\bibitem[Gu et~al.(2018)Gu, Wang, Chen, Cho, and Li]{gu2018meta}
Jiatao Gu, Yong Wang, Yun Chen, Kyunghyun Cho, and Victor~OK Li.
\newblock Meta-learning for low-resource neural machine translation.
\newblock \emph{arXiv preprint arXiv:1808.08437}, 2018.

\bibitem[Guan and Lu(2022{\natexlab{a}})]{GuanPAC2022}
J.~Guan and Z.~Lu.
\newblock Fast-rate pac-bayesian generalization bounds for meta-learning.
\newblock In \emph{International conference on machine learning}, pages
  7930--7948. PMLR, 2022{\natexlab{a}}.

\bibitem[Guan et~al.(2022)Guan, Liu, and Lu]{GuanBernstein2022}
J.~Guan, Y.~Liu, and Z.~Lu.
\newblock Fine-grained analysis of stability and generalization for modern meta
  learning algorithms.
\newblock \emph{Advances in neural information processing systems}, 2022.

\bibitem[Guan and Lu(2022{\natexlab{b}})]{GuanTask2022}
Jiechao Guan and Zhiwu Lu.
\newblock Task relatedness-based generalization bounds for meta learning.
\newblock In \emph{International Conference on Learning Representations},
  2022{\natexlab{b}}.

\bibitem[Guedj(2019)]{gue2019}
B.~Guedj.
\newblock A primer on {PAC-Bayesian} learning.
\newblock In \emph{Proceedings of the second congress of the French
  Mathematical Society}, 2019.

\bibitem[Haghifam et~al.(2021)Haghifam, Dziugaite, Moran, and
  Roy]{haghifam2021towards}
Mahdi Haghifam, Gintare~Karolina Dziugaite, Shay Moran, and Dan Roy.
\newblock Towards a unified information-theoretic framework for generalization.
\newblock \emph{Advances in Neural Information Processing Systems},
  34:\penalty0 26370--26381, 2021.

\bibitem[Hellstr{\"o}m and Durisi(2022)]{hellstrom2022evaluated}
Fredrik Hellstr{\"o}m and Giuseppe Durisi.
\newblock Evaluated cmi bounds for meta learning: Tightness and expressiveness.
\newblock \emph{arXiv preprint arXiv:2210.06511}, 2022.

\bibitem[Hochreiter et~al.(2001)Hochreiter, Younger, and
  Conwell]{hochreiter2001learning}
Sepp Hochreiter, A~Steven Younger, and Peter~R Conwell.
\newblock Learning to learn using gradient descent.
\newblock In \emph{Artificial Neural Networks—ICANN 2001: International
  Conference Vienna, Austria, August 21--25, 2001 Proceedings 11}, pages
  87--94. Springer, 2001.

\bibitem[Huang et~al.(2018)Huang, Wang, Singh, Yih, and He]{huang2018natural}
Po-Sen Huang, Chenglong Wang, Rishabh Singh, Wen-tau Yih, and Xiaodong He.
\newblock Natural language to structured query generation via meta-learning.
\newblock \emph{arXiv preprint arXiv:1803.02400}, 2018.

\bibitem[Jose and Simeone(2021)]{jose2021information}
Sharu~Theresa Jose and Osvaldo Simeone.
\newblock Information-theoretic generalization bounds for meta-learning and
  applications.
\newblock \emph{Entropy}, 23\penalty0 (1):\penalty0 126, 2021.

\bibitem[Jose et~al.(2021)Jose, Simeone, and Durisi]{jose2021transfer}
Sharu~Theresa Jose, Osvaldo Simeone, and Giuseppe Durisi.
\newblock Transfer meta-learning: Information-theoretic bounds and information
  meta-risk minimization.
\newblock \emph{IEEE Transactions on Information Theory}, 68\penalty0
  (1):\penalty0 474--501, 2021.

\bibitem[Khodak et~al.(2019)Khodak, Balcan, and Talwalkar]{khodak2019adaptive}
Mikhail Khodak, Maria-Florina~F Balcan, and Ameet~S Talwalkar.
\newblock Adaptive gradient-based meta-learning methods.
\newblock \emph{Advances in Neural Information Processing Systems}, 32, 2019.

\bibitem[Koch et~al.(2015)Koch, Zemel, Salakhutdinov, et~al.]{koch2015siamese}
Gregory Koch, Richard Zemel, Ruslan Salakhutdinov, et~al.
\newblock Siamese neural networks for one-shot image recognition.
\newblock In \emph{ICML deep learning workshop}, volume~2. Lille, 2015.

\bibitem[Kulis et~al.(2011)Kulis, Saenko, and Darrell]{kulis2011you}
Brian Kulis, Kate Saenko, and Trevor Darrell.
\newblock What you saw is not what you get: Domain adaptation using asymmetric
  kernel transforms.
\newblock In \emph{CVPR 2011}, pages 1785--1792. IEEE, 2011.

\bibitem[Li et~al.(2018)Li, Yang, Song, and Hospedales]{li2018learning}
Da~Li, Yongxin Yang, Yi-Zhe Song, and Timothy Hospedales.
\newblock Learning to generalize: Meta-learning for domain generalization.
\newblock In \emph{Proceedings of the AAAI conference on artificial
  intelligence}, volume~32, 2018.

\bibitem[Liu et~al.(2021)Liu, Lu, Yan, and Zhang]{liu2021pac}
Tianyu Liu, Jie Lu, Zheng Yan, and Guangquan Zhang.
\newblock Pac-bayes bounds for meta-learning with data-dependent prior.
\newblock \emph{arXiv preprint arXiv:2102.03748}, 2021.

\bibitem[Mammen and Tsybakov(1999)]{mammentsybakov1999}
E.~Mammen and A.~B. Tsybakov.
\newblock Smooth discrimination analysis.
\newblock \emph{the Annals of Statistics}, 27\penalty0 (6):\penalty0
  1808--1829, 1999.

\bibitem[Maurer(2009)]{Maurer2009}
A.~Maurer.
\newblock Transfer bounds for linear feature learning.
\newblock \emph{Machine Learning}, 75\penalty0 (3):\penalty0 327--350, 2009.

\bibitem[Maurer and Jaakkola(2005)]{Maurer2005}
A.~Maurer and T.~Jaakkola.
\newblock Algorithmic stability and meta-learning.
\newblock \emph{The Journal of Machine Learning Research}, 6\penalty0 (6),
  2005.

\bibitem[Maurer et~al.(2016)Maurer, Pontil, and Romera-Paredes]{Maurer2016}
A.~Maurer, M.~Pontil, and B.~Romera-Paredes.
\newblock The benefit of multitask representation learning.
\newblock \emph{The Journal of Machine Learning Research}, 17\penalty0
  (81):\penalty0 1--32, 2016.

\bibitem[McAllester(1998)]{mca1998}
D.~A. McAllester.
\newblock Some {{PAC}}-{B}ayesian theorems.
\newblock In \emph{Proceedings of the Eleventh Annual Conference on
  Computational Learning Theory}, pages 230--234, New York, 1998. ACM.

\bibitem[Meunier and Alquier(2021)]{dimitri}
D.~Meunier and P.~Alquier.
\newblock Meta-strategy for learning tuning parameters with guarantees.
\newblock \emph{Entropy}, 23\penalty0 (10), 2021.

\bibitem[Mishra et~al.(2018)Mishra, Rohaninejad, Chen, and
  Abbeel]{mishra2018simple}
Nikhil Mishra, Mostafa Rohaninejad, Xi~Chen, and Pieter Abbeel.
\newblock A simple neural attentive meta-learner.
\newblock \emph{ICLR}, 2018.

\bibitem[Munkhdalai and Yu(2017)]{munkhdalai2017meta}
Tsendsuren Munkhdalai and Hong Yu.
\newblock Meta networks.
\newblock In \emph{International conference on machine learning}, pages
  2554--2563. PMLR, 2017.

\bibitem[Nguyen et~al.(2020)Nguyen, Do, and Carneiro]{nguyen2020uncertainty}
Cuong Nguyen, Thanh-Toan Do, and Gustavo Carneiro.
\newblock Uncertainty in model-agnostic meta-learning using variational
  inference.
\newblock In \emph{Proceedings of the IEEE/CVF Winter Conference on
  Applications of Computer Vision}, pages 3090--3100, 2020.

\bibitem[Nichol et~al.(2018)Nichol, Achiam, and Schulman]{nichol2018first}
Alex Nichol, Joshua Achiam, and John Schulman.
\newblock On first-order meta-learning algorithms.
\newblock \emph{arXiv preprint arXiv:1803.02999}, 2018.

\bibitem[Pan and Yang(2010)]{pan2010survey}
Sinno~Jialin Pan and Qiang Yang.
\newblock A survey on transfer learning.
\newblock \emph{IEEE Transactions on knowledge and data engineering},
  22\penalty0 (10):\penalty0 1345--1359, 2010.

\bibitem[Pentina and Lampert(2014)]{pen2014}
A.~Pentina and C.~Lampert.
\newblock {A PAC-Bayesian bound for lifelong learning}.
\newblock In \emph{International Conference on Machine Learning}, pages
  991--999. PMLR, 2014.

\bibitem[Qian and Yu(2019)]{qian2019domain}
Kun Qian and Zhou Yu.
\newblock Domain adaptive dialog generation via meta learning.
\newblock \emph{arXiv preprint arXiv:1906.03520}, 2019.

\bibitem[Qiao et~al.(2018)Qiao, Liu, Shen, and Yuille]{qiao2018few}
Siyuan Qiao, Chenxi Liu, Wei Shen, and Alan~L Yuille.
\newblock Few-shot image recognition by predicting parameters from activations.
\newblock In \emph{Proceedings of the IEEE conference on computer vision and
  pattern recognition}, pages 7229--7238, 2018.

\bibitem[Quattoni et~al.(2008)Quattoni, Collins, and
  Darrell]{quattoni2008transfer}
Ariadna Quattoni, Michael Collins, and Trevor Darrell.
\newblock Transfer learning for image classification with sparse prototype
  representations.
\newblock In \emph{2008 IEEE Conference on Computer Vision and Pattern
  Recognition}, pages 1--8. IEEE, 2008.

\bibitem[Rajeswaran et~al.(2019)Rajeswaran, Finn, Kakade, and
  Levine]{rajeswaran2019meta}
Aravind Rajeswaran, Chelsea Finn, Sham~M Kakade, and Sergey Levine.
\newblock Meta-learning with implicit gradients.
\newblock \emph{Advances in neural information processing systems}, 32, 2019.

\bibitem[Ravi and Larochelle(2017)]{ravi2017optimization}
Sachin Ravi and Hugo Larochelle.
\newblock Optimization as a model for few-shot learning.
\newblock In \emph{International conference on learning representations}, 2017.

\bibitem[Rezazadeh(2022)]{Rezazadeh2022}
A.~Rezazadeh.
\newblock A unified view on pac-bayes bounds for meta-learning.
\newblock In \emph{International conference on machine learning}, pages
  18576--18595. PMLR, 2022.

\bibitem[Rezazadeh et~al.(2021)Rezazadeh, Jose, Durisi, and
  Simeone]{rezazadeh2021conditional}
Arezou Rezazadeh, Sharu~Theresa Jose, Giuseppe Durisi, and Osvaldo Simeone.
\newblock Conditional mutual information-based generalization bound for meta
  learning.
\newblock In \emph{2021 IEEE International Symposium on Information Theory
  (ISIT)}, pages 1176--1181. IEEE, 2021.

\bibitem[Rothfuss et~al.(2021)Rothfuss, Fortuin, Josifoski, and
  Krause]{rothfuss2021}
J.~Rothfuss, V.~Fortuin, M.~Josifoski, and A.~Krause.
\newblock Pacoh: Bayes-optimal meta-learning with pac-guarantees.
\newblock In \emph{International conference on machine learning}, pages
  9116--9126. PMLR, 2021.

\bibitem[Rothfuss et~al.(2022)Rothfuss, Josifoski, Fortuin, and
  Krause]{rothfuss2022}
J.~Rothfuss, M.~Josifoski, V.~Fortuin, and A.~Krause.
\newblock Pac-bayesian meta-learning: From theory to practice.
\newblock \emph{arXiv preprint arXiv:2211.07206}, 2022.

\bibitem[Russo and Zou(2019)]{russo2019much}
D.~Russo and J.~Zou.
\newblock How much does your data exploration overfit? controlling bias via
  information usage.
\newblock \emph{IEEE Transactions on Information Theory}, 66\penalty0
  (1):\penalty0 302--323, 2019.

\bibitem[Santoro et~al.(2016)Santoro, Bartunov, Botvinick, Wierstra, and
  Lillicrap]{santoro2016meta}
Adam Santoro, Sergey Bartunov, Matthew Botvinick, Daan Wierstra, and Timothy
  Lillicrap.
\newblock Meta-learning with memory-augmented neural networks.
\newblock In \emph{International conference on machine learning}, pages
  1842--1850. PMLR, 2016.

\bibitem[Shawe-Taylor and Williamson(1997)]{sha1997}
J.~Shawe-Taylor and R.~Williamson.
\newblock A {{PAC}} analysis of a {B}ayes estimator.
\newblock In \emph{Proceedings of the Tenth Annual Conference on Computational
  Learning Theory}, pages 2--9, New York, 1997. ACM.

\bibitem[Snell et~al.(2017)Snell, Swersky, and Zemel]{snell2017prototypical}
Jake Snell, Kevin Swersky, and Richard Zemel.
\newblock Prototypical networks for few-shot learning.
\newblock \emph{Advances in neural information processing systems}, 30, 2017.

\bibitem[Sung et~al.(2018)Sung, Yang, Zhang, Xiang, Torr, and
  Hospedales]{sung2018learning}
Flood Sung, Yongxin Yang, Li~Zhang, Tao Xiang, Philip~HS Torr, and Timothy~M
  Hospedales.
\newblock Learning to compare: Relation network for few-shot learning.
\newblock In \emph{Proceedings of the IEEE conference on computer vision and
  pattern recognition}, pages 1199--1208, 2018.

\bibitem[Thrun and Pratt(1998)]{thrun1998learning}
Sebastian Thrun and Lorien Pratt.
\newblock Learning to learn: Introduction and overview.
\newblock \emph{Kluwer Academic Publishers}, pages 3--17, 1998.

\bibitem[Tripuraneni et~al.(2021)Tripuraneni, Jin, and
  Jordan]{tripuraneni2021provable}
Nilesh Tripuraneni, Chi Jin, and Michael Jordan.
\newblock Provable meta-learning of linear representations.
\newblock In \emph{International Conference on Machine Learning}, pages
  10434--10443. PMLR, 2021.

\bibitem[Tsybakov(2004)]{tsybakov2004}
A.~B. Tsybakov.
\newblock Optimal aggregation of classifiers in statistical learning.
\newblock \emph{the Annals of Statistics}, 32\penalty0 (1):\penalty0 135--166,
  2004.

\bibitem[Vanschoren(2019)]{vanschoren2019meta}
Joaquin Vanschoren.
\newblock Meta-learning.
\newblock \emph{Automated machine learning: methods, systems, challenges},
  pages 35--61, 2019.

\bibitem[Vinyals et~al.(2016)Vinyals, Blundell, Lillicrap, Wierstra,
  et~al.]{vinyals2016matching}
Oriol Vinyals, Charles Blundell, Timothy Lillicrap, Daan Wierstra, et~al.
\newblock Matching networks for one shot learning.
\newblock \emph{Advances in neural information processing systems}, 29, 2016.

\bibitem[Wang et~al.(2016)Wang, Kurth-Nelson, Tirumala, Soyer, Leibo, Munos,
  Blundell, Kumaran, and Botvinick]{wang2016learning}
Jane~X Wang, Zeb Kurth-Nelson, Dhruva Tirumala, Hubert Soyer, Joel~Z Leibo,
  Remi Munos, Charles Blundell, Dharshan Kumaran, and Matt Botvinick.
\newblock Learning to reinforcement learn.
\newblock \emph{arXiv preprint arXiv:1611.05763}, 2016.

\bibitem[Xu and Raginsky(2017)]{xu2017MI}
A.~Xu and M.~Raginsky.
\newblock Information-theoretic analysis of generalization capability of
  learning algorithms.
\newblock In \emph{Proceedings of the 31st International Conference on Neural
  Information Processing Systems}, NIPS'17, page 2521–2530, Red Hook, NY,
  USA, 2017. Curran Associates Inc.
\newblock ISBN 9781510860964.

\bibitem[Yang et~al.(2019)Yang, Sun, and Roy]{yang2019fast}
Jun Yang, Shengyang Sun, and Daniel~M Roy.
\newblock Fast-rate pac-bayes generalization bounds via shifted rademacher
  processes.
\newblock \emph{Advances in Neural Information Processing Systems}, 32, 2019.

\bibitem[Yoon et~al.(2018)Yoon, Kim, Dia, Kim, Bengio, and
  Ahn]{yoon2018bayesian}
Jaesik Yoon, Taesup Kim, Ousmane Dia, Sungwoong Kim, Yoshua Bengio, and Sungjin
  Ahn.
\newblock Bayesian model-agnostic meta-learning.
\newblock \emph{Advances in neural information processing systems}, 31, 2018.

\bibitem[Yu et~al.(2020)Yu, Quillen, He, Julian, Hausman, Finn, and
  Levine]{yu2020meta}
Tianhe Yu, Deirdre Quillen, Zhanpeng He, Ryan Julian, Karol Hausman, Chelsea
  Finn, and Sergey Levine.
\newblock Meta-world: A benchmark and evaluation for multi-task and meta
  reinforcement learning.
\newblock In \emph{Conference on robot learning}, pages 1094--1100. PMLR, 2020.

\bibitem[Zhang(2006)]{zhang2006information}
Tong Zhang.
\newblock Information-theoretic upper and lower bounds for statistical
  estimation.
\newblock \emph{IEEE Transactions on Information Theory}, 52\penalty0
  (4):\penalty0 1307--1321, 2006.

\bibitem[Zhang and Yang(2021)]{zhang2021survey}
Yu~Zhang and Qiang Yang.
\newblock A survey on multi-task learning.
\newblock \emph{IEEE Transactions on Knowledge and Data Engineering},
  34\penalty0 (12):\penalty0 5586--5609, 2021.

\bibitem[Zhuang et~al.(2020)Zhuang, Qi, Duan, Xi, Zhu, Zhu, Xiong, and
  He]{zhuang2020comprehensive}
Fuzhen Zhuang, Zhiyuan Qi, Keyu Duan, Dongbo Xi, Yongchun Zhu, Hengshu Zhu, Hui
  Xiong, and Qing He.
\newblock A comprehensive survey on transfer learning.
\newblock \emph{Proceedings of the IEEE}, 109\penalty0 (1):\penalty0 43--76,
  2020.

\end{thebibliography}
\end{document}